\newcommand{\note}[1]{\marginpar{\tiny *note in TeX*}}
\newcommand{\ignore}[1]{}
\renewcommand{\phi}{\varphi}
\newcommand{\R}{\mathbb{R}}
\newcommand{\eqdef}{\stackrel{\textrm{def}}{=}}
\def\mP{\mathcal{P}}
\begin{document}

\title{Completing Any Low-rank Matrix, Provably}

\author{\name Yudong Chen \email yudong.chen@eecs.berkeley.edu\\
       \addr Department of Electrical Engineering and Computer Sciences\\
       University of California, Berkeley\\
       Berkeley, CA 94704, USA
       \AND
       \name Srinadh Bhojanapalli \email bsrinadh@utexas.edu\\
       \name Sujay Sanghavi \email sanghavi@mail.utexas.edu\\
       \addr Department of Electrical and Computer Engineering\\
       The University of Texas at Austin\\
       Austin, TX 78712, USA
       \AND
       \name Rachel Ward \email rward@math.utexas.edu\\
       \addr Department of Mathematics and ICES\\
       The University of Texas at Austin\\
       Austin, TX 78712, USA
}

\editor{}

\maketitle

\begin{abstract}%
Matrix completion, i.e., the exact and provable recovery of a low-rank matrix from a small subset of its elements, is currently only known to be possible if the matrix satisfies a restrictive structural constraint---known as {\em incoherence}---on its row and column spaces. In these cases, the subset of elements is sampled uniformly at random.

In this paper, we show that {\em any} rank-$ r $ $ n$-by-$ n $ matrix can be exactly recovered from as few as $O(nr \log^2 n)$ randomly chosen elements, provided this random choice is made according to a {\em specific biased distribution}: the probability of any element being sampled should be proportional to the sum of the leverage scores of the corresponding row, and column. Perhaps equally important, we show that this specific form of sampling is nearly necessary, in a natural precise sense; this implies that other perhaps more intuitive sampling schemes fail. 

We further establish three ways to use the above result for the setting when leverage scores are not known \textit{a priori}: (a) a sampling strategy for the case when only one of the row or column spaces are incoherent, (b) a two-phase sampling procedure for general matrices that first samples to estimate leverage scores followed by sampling for exact recovery, and (c) an analysis showing the advantages of weighted nuclear/trace-norm minimization over the vanilla un-weighted formulation for the case of non-uniform sampling.
\end{abstract}
\begin{keywords}
matrix completion, coherent, leverage score, nuclear norm, weighted nuclear norm
\end{keywords}

\newpage

\section{Introduction}

Low-rank matrix completion has been the subject of much recent study due to its application in myriad tasks:  collaborative filtering, dimensionality reduction, clustering, non negative matrix factorization and localization in sensor networks. Clearly, the problem is ill-posed in general; correspondingly, analytical work on the subject has focused on the joint development of algorithms, and sufficient conditions under which such algorithms are able to recover the matrix. 

While they differ in scaling/constant factors, all existing sufficient conditions~\citep{candes2009exact,candes2010power, recht2009simpler, keshavan2010matrix, gross2011recovering, jain2012low,negahban2012restricted}---with a couple of exceptions we describe in Section~\ref{sec:related}---require that  {\em (a)} the subset of observed elements should be uniformly randomly chosen, independent of the values of the matrix elements, and {\em (b)} the low-rank matrix  be ``incoherent" or ``not spiky''---i.e., its row and column spaces should be diffuse, having low inner products with the standard basis vectors.  Under these conditions, the matrix has been shown to be provably recoverable---via methods based on convex optimization~\citep{candes2009exact}, alternating minimization~\citep{jain2012low}, iterative thresholding~\citep{cai2010singular}, etc.---using as few as~$\Theta(nr\log n)$ observed elements for an $n\times n$ matrix of rank~$r$.

Actually, the incoherence assumption {\em is required because of the uniform sampling:} coherent matrices are those which have most of their mass in a relatively small number of elements.  By sampling entries uniformly and  independently at random, most of the mass of a coherent low-rank matrix will be missed; this could (and {\em does}) throw off all existing recovery methods.  One could imagine that if the sampling is adapted to the matrix, roughly in a way that ensures that elements with more mass are more likely to be observed, then it may be possible for {\em existing} methods to recover the full matrix.

{\bf In this paper,} we show that the incoherence requirement can be eliminated completely, provided the sampling distribution is dependent on the matrix to be recovered in the right way.  Specifically, we have the following results.

\begin{enumerate}
\item If the probability of an element being observed is proportional to the sum of the corresponding row and column leverage scores (which are local versions of the standard incoherence parameter) of the underlying matrix, then an \emph{arbitrary} rank-$r$ matrix can be exactly recovered from $\Theta(nr \log^2 n)$ observed elements with high probability, using nuclear norm minimization (Theorem~\ref{thm:random} and Corollary~\ref{cor:random}). In the case when all leverage scores are uniformly bounded from above, our results reduce to existing guarantees for incoherent matrices using uniform sampling.
Our sample complexity bound $\Theta(nr \log^2 n)$ is optimal up to a single factor of $\log^2 n$, since the degrees of freedom in an $n \times n$ matrix of rank $r$ is $2nr$. Moreover, we show that to complete a coherent matrix, it is \emph{necessary} (in certain precise sense) to sample according to the leverage scores as above (Theorem~\ref{thm:optimal}).

\item  For a matrix whose column space is incoherent and row space is arbitrarily coherent, our results immediately lead to a provably correct sampling scheme which \emph{requires no prior knowledge of the leverage scores of the underlying matrix} and has near optimal sampling complexity (Corollary~\ref{cor:row_coherent}).

\item  We provide numerical evidence that a two-phase adaptive sampling strategy, which assumes no prior knowledge about the leverage scores of the underlying matrix, can perform on par with the optimal sampling strategy in completing coherent matrices, and significantly outperforms uniform sampling (Section~\ref{sec:algo}).  Specifically, we consider a two-phase sampling strategy whereby given a fixed budget of $m$ samples, we first draw a fixed proportion of samples uniformly at random, and then draw the remaining samples according to the leverage scores of the resulting sampled matrix. 

\item Using our theoretical results, we are able to quantify the benefit of \emph{weighted} nuclear norm minimization 
over standard (unweighted) nuclear norm minimization, and provide a strategy for choosing the weights in such problems given non-uniformly distributed samples so as to reduce the sampling complexity of weighted nuclear norm
minimization to that of the unweighted formulation (Theorem~\ref{cor:general_weighted}).  Our results give the first exact recovery guarantee for weighted nuclear norm minimization, thus providing theoretical justification for its good empirical performance observed in~\cite{salakhutdinov2010collaborative,foygel2011learning,negahban2012restricted}. 

\end{enumerate}

Our theoretical results are achieved by a new analysis based on concentration bounds involving
the \emph{weighted} $\ell_{\infty,2}$ matrix norm, defined as the maximum
of the appropriately weighted row and column norms of the matrix. This differs from previous
approaches that use $\ell_{\infty}$ or unweighted $ \ell_{\infty,2} $ norm bounds~\citep{gross2011recovering,recht2009simpler,chen2013incoherence}. In some sense, using the weighted $\ell_{\infty,2}$-type bounds is natural for the analysis of low-rank matrices, because the rank is a property of the rows and columns of the matrix rather than its individual elements, and the weighted norm captures the relative importance of the rows/columns. Therefore, our bounds on the $ \ell_{\infty,2} $ norm might be of independent interest, and we expect the techniques to be relevant more generally, beyond the specific settings and algorithms considered here.

\section{Related Work}\label{sec:related}

There is now a vast body of literature on matrix completion, and an even bigger body of literature on matrix approximations; we restrict our literature review here to papers that are most directly related.

\textit{Exact Completion, Incoherent Matrices, Random Samples:} The first algorithm and theoretical guarantees for exact  low-rank matrix completion appeared in~\cite{candes2009exact}; there it was shown that nuclear norm minimization works when the low-rank matrix is incoherent, and the sampling is uniform random and independent of the matrix. Subsequent works have refined provable completion results for incoherent matrices under the  uniform random sampling model, both via nuclear norm minimization~\citep{candes2010power,recht2009simpler, gross2011recovering,chen2013incoherence}, and other methods like SVD followed by local descent~\citep{keshavan2010matrix} and alternating minimization~\citep{jain2012low}, etc. The setting with sparse errors and additive noise is also considered~\citep{candes2010matrix, chandrasekaran2011rank,chen2011low,candes2009robustPCA,negahban2012restricted}.

\textit{Matrix approximations via sub-sampling:}  Weighted sampling methods have been widely considered in the related context of matrix \emph{sparsification}, where one aims to approximate a given large dense matrix with a sparse matrix.  The strategy of element-wise matrix sparsification was introduced in~\cite{achlioptas2007fast}. 
 They propose and provide bounds for the \emph{$\ell_2$ element-wise sampling} model, where elements of the matrix are sampled with probability proportional to their squared magnitude.  These bounds were later refined in~\cite{drineas2011note}.  Alternatively,~\cite{arora2006fast} propose the \emph{$\ell_1$ element-wise sampling} model, where elements are sampled with probabilities proportional to their magnitude.  This model was further investigated in~\cite{achlioptas2013matrix} and argued to be almost always preferable to $\ell_2$ sampling.  
 
Closely related to the matrix sparsification problem is the matrix column selection problem, where one aims to find the ``best" $k$ column subset of a matrix to use as an approximation.  State-of-the-art algorithms for column subset selection~\citep{boutsidis2009columnselection, mahoney2011} involve randomized sampling strategies whereby columns are selected proportionally to their \emph{statistical leverage scores}---the squared Euclidean norms of projections of the canonical unit vectors on the column subspaces.  The statistical leverage scores of a matrix can be approximated efficiently, faster than the time needed to compute an SVD~\citep{drineas2012approxLeverage}.   Statistical leverage scores are also used extensively in statistical regression analysis for outlier detection~\citep{chatterjee1986influential}.  More recently, statistical leverage scores were used in the context of graph sparsification under the name of graph resistance~\citep{spielman2011graphsparsify}.  The sampling distribution we use for the matrix completion guarantees of this paper is based on statistical leverage scores.  As shown both theoretically (Theorem~\ref{thm:optimal}) and empirically (Section~\ref{sec:sims}), sampling as such outperforms both $\ell_1$ and $\ell_2$ element-wise sampling, at least in the context of matrix completion.

\textit{Weighted sampling in compressed sensing:} This paper is similar in spirit to recent work in compressed sensing which shows that sparse recovery guarantees traditionally requiring mutual incoherence can be extended to systems which are only \emph{weakly} incoherent, without any loss of approximation power, provided measurements from the sensing basis are subsampled according to their coherence with the sparsity basis. This notion of \emph{local coherence sampling} seems to have originated in~\cite{rauhut2012sparse} in the context of sparse orthogonal polynomial expansions, and has found applications in uncertainty quantification~\citep{yang2013reweighted}, interpolation with spherical harmonics~\citep{burq2012weighted}, and MRI compressive imaging~\citep{krahmer2012beyond}.

Finally, closely related to our paper is the recent work in~\cite{ks13}, which considers matrix completion where only the row space is allowed to be coherent. The proposed adaptive sampling algorithm selects columns to observe in their entirety and requires a total of $ O(r^2n\log r) $ observed elements, which is quadratic in $r$.

\subsection{Organization}
We present our main results for coherent matrix completion in Section~\ref{sec:main_coherent}. In Section~\ref{sec:algo} we propose a two-phase algorithm that requires no prior knowledge about the underlying matrix's leverage scores. In Section~\ref{sec:weight_trace} we provide guarantees for weighted nuclear norm minimization. We provide the proofs of the main theorems in the appendix.

\section{Main Results}\label{sec:main_coherent}

The results in this paper hold for what is arguably the most popular approach to matrix completion: nuclear norm minimization. If the true matrix is $M$ with its $ (i,j) $-th element denoted by $M_{ij}$, and the set of observed elements is $\Omega$, this method guesses as the completion the optimum of the convex program:
\begin{equation}\label{eq:method}
\begin{aligned}
\min_X & \quad \quad && \|X\|_* \\
& \quad\text{s.t.} && X_{ij} ~ = ~ M_{ij} ~\text{ for $(i,j)\in\Omega.$} 
\end{aligned}
\end{equation}
where the nuclear norm $\|\cdot\|_*$ of a matrix is the sum of its singular values.\footnote{This becomes the trace norm for positive-definite matrices. It is now well-recognized to be a convex surrogate for the rank function~\citep{fazel2002matrix}.}  Throughout, we use the standard notation $f(n) = \Theta(g(n))$ to mean that $c g(n) \leq f(n) \leq C g(n)$ for some positive universal constants $c, C$.

We focus on the setting where matrix elements are revealed according an underlying probability distribution.  To introduce the distribution of interest, we first need a definition.  

\begin{definition}[Leverage Scores]
For an $n_1\times n_2$ real-valued matrix $M$ of rank $r$ whose rank-$ r $ SVD is given by $U\Sigma V^{\top}$, its (normalized) {leverage scores}\footnote{In the matrix sparsification literature~\citep{drineas2012approxLeverage,boutsidis2009columnselection} and beyond, the leverage scores of $M$ often refer to the \emph{un-normalized} quantities $\left\Vert U^{\top}e_{i}\right\Vert^2$ and $\left\Vert V^{\top}e_{j}\right\Vert^2$.}---$\mu_i(M)$ for any row $i$, and $\nu_j(M)$ for any column $j$---are defined as
\begin{equation}
\begin{split}
\mu_{i}(M): & =\frac{n_1}{r}\left\Vert U^{\top}e_{i}\right\Vert _{2}^{2},\quad i=1,2,\ldots,n_1,\\
\nu_{j}(M): & =\frac{n_2}{r}\left\Vert V^{\top}e_{j}\right\Vert _{2}^{2},\quad j=1,2,\ldots,n_2,
\end{split}
\label{eq:local_incoherence}
\end{equation}
where $ e_i $ denotes the $ i $-th standard basis with appropriate dimension.
\end{definition}
Note that the leverage scores are non-negative, and are functions of the column and row spaces of the matrix $ M $. Since $U$ and $V$ have orthonormal columns, we always have relationship $\sum_{i}\mu_{i}(M)r/n_{1}=\sum_{j}\nu_{j}(M)r/n_{2}=r.$ The standard \emph{incoherence parameter} $ \mu_0 $ of $ M $ used in the previous literature corresponds to a global upper bound on the leverage scores: $$ \mu_0 \ge \max_{i,j} \{\mu_i(M), \nu_j(M)\}. $$ Therefore, the leverage scores can be considered as the localized versions of the standard incoherence parameter.

We are  ready to state our main result, the theorem below. 

\begin{theorem}
\label{thm:random} Let $M = (M_{ij})$ be an $n_1 \times n_2$ matrix of rank $ r $, and suppose that its elements $M_{ij}$ are observed only over a subset of elements $\Omega \subset [n_1] \times [n_2]$.  There is a universal constant $c_{0}>0$ such that, if each element $(i,j)$ is independently observed with probability~$p_{ij}$, and $p_{ij}$ satisfies
\begin{align}
p_{ij} ~ & \ge ~ \min \left \{ ~ c_{0}\frac{\left(\mu_{i}(M)+\nu_{j}(M)\right)r\log^{2}(n_{1}+n_{2})}{\min\{n_1,n_2\}}, ~~ 1 ~ \right \}, \label{eq:coherent_dist}\\
p_{ij} ~ & \ge ~\frac{1}{\min\{n_1,n_2\}^{10}},\nonumber 
\end{align}
then $M$ is the unique optimal solution to the nuclear norm minimization problem~\eqref{eq:method} with
probability at least $1-5(n_{1}+n_{2})^{-10}$.   
\end{theorem}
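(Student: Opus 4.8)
\medskip\noindent\textbf{Proof strategy.}~The plan is to run the dual-certificate (``golfing'') method of \citet{candes2009exact,gross2011recovering,recht2009simpler}, but carried out entirely in a \emph{leverage-weighted} $\ell_{\infty,2}$ norm rather than the $\ell_\infty$ norm. Write $\underline{n}:=\min\{n_1,n_2\}$, let $M=U\Sigma V^{\top}$ be the rank-$r$ SVD and $T=\{UX^{\top}+YV^{\top}\}$ the tangent space at $M$, with orthogonal projections $\mathcal{P}_T,\mathcal{P}_{T^{\perp}}$. By monotonicity of the feasible set of~\eqref{eq:method} in $\Omega$ it suffices to take $p_{ij}$ equal to the right-hand side of~\eqref{eq:coherent_dist}; the floor $p_{ij}\ge\underline{n}^{-10}$ only dispatches the degenerate rows or columns where $\mu_i(M)$ or $\nu_j(M)$ vanishes. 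I would introduce the unbiased rescaled sampling operator
\[
\mathcal{R}_{\omega,\Omega}(Z)\;:=\;\sum_{(i,j)\in\Omega}\frac{1}{p_{ij}}\,Z_{ij}\,e_ie_j^{\top},\qquad \E\,\mathcal{R}_{\omega,\Omega}=\mathcal{I},
\]
together with the weighted norm
\[
\|Z\|_{\mu,\nu}\;:=\;\max\!\Bigl\{\,\max_{i}\sqrt{\tfrac{n_1}{\mu_i(M)\,r}}\,\|e_i^{\top}Z\|_2,\ \ \max_{j}\sqrt{\tfrac{n_2}{\nu_j(M)\,r}}\,\|Ze_j\|_2\,\Bigr\},
\]
which satisfies $\|UV^{\top}\|_{\mu,\nu}\le 1$. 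The entire point of the biased distribution~\eqref{eq:coherent_dist} is the elementary cancellation $\sqrt{\mu_i(M)\nu_j(M)}\le\tfrac12(\mu_i(M)+\nu_j(M))$ together with $\underline{n}\le\sqrt{n_1n_2}$: the leverage weights appearing in $p_{ij}$ are exactly matched to the weights defining $\|\cdot\|_{\mu,\nu}$, so that $|Z_{ij}|/p_{ij}$ stays $O\bigl(\|Z\|_{\mu,\nu}/\log^{2}(n_1+n_2)\bigr)$ for $Z\in T$ (up to a spectral correction coming from the cross term $\mathcal{P}_U Z \mathcal{P}_V$, where $\mathcal{P}_U=UU^{\top}$).

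The technical core would be three high-probability estimates, each obtained from a Bernstein-type inequality in which the weights $p_{ij}\propto\mu_i(M)+\nu_j(M)$ control the spikiness of the summands and the variances are handled via $\sum_i\mu_i(M)r/n_1=\sum_j\nu_j(M)r/n_2=r$: \emph{(i)} a restricted-isometry bound on $T$, $\|\mathcal{P}_T\mathcal{R}_{\omega,\Omega}\mathcal{P}_T-\mathcal{P}_T\|_{\mathrm{op}}\le\tfrac12$, from matrix Bernstein; \emph{(ii)} for a \emph{fixed} $Z\in T$, a contraction of the weighted norm, $\|(\mathcal{P}_T\mathcal{R}_{\omega,\Omega}\mathcal{P}_T-\mathcal{P}_T)(Z)\|_{\mu,\nu}\le\tfrac12\|Z\|_{\mu,\nu}$, by applying a scalar/vector Bernstein inequality to each of the $n_1+n_2$ weighted row and column norms and union-bounding; and \emph{(iii)} for a \emph{fixed} $Z\in T$, the spectral bound $\|\mathcal{P}_{T^{\perp}}\mathcal{R}_{\omega,\Omega}(Z)\|_{\mathrm{op}}=\|\mathcal{P}_{T^{\perp}}(\mathcal{R}_{\omega,\Omega}-\mathcal{I})(Z)\|_{\mathrm{op}}\le\tfrac14\|Z\|_{\mu,\nu}$, whose entire significance is that the right-hand side involves \emph{only} the weighted $\ell_{\infty,2}$ norm of $Z$.

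With these in hand I would build the dual certificate by golfing: split $\Omega$ into $k=\Theta(\log(n_1+n_2))$ independent subsamples $\Omega_1,\dots,\Omega_k$, each entry of batch $\ell$ retained with probability $\asymp p_{ij}/k$ (enlarging $c_0$ so that every batch still obeys (i)--(iii)); set $W_0=UV^{\top}$ and, for $\ell\ge1$, $Y_\ell=Y_{\ell-1}+\mathcal{R}_{\omega,\Omega_\ell}(W_{\ell-1})$ and $W_\ell=UV^{\top}-\mathcal{P}_T(Y_\ell)$. Because $W_{\ell-1}\in T$ one has $W_\ell=(\mathcal{P}_T-\mathcal{P}_T\mathcal{R}_{\omega,\Omega_\ell}\mathcal{P}_T)(W_{\ell-1})$, and --- conditioning on $\Omega_1,\dots,\Omega_{\ell-1}$, which determine $W_{\ell-1}$ --- estimate (i) gives $\frob{W_\ell}\le 2^{-\ell}\sqrt r$ while (ii) gives $\|W_\ell\|_{\mu,\nu}\le 2^{-\ell}$. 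Then $Y:=Y_k$ is supported on $\Omega$ by construction, $\|\mathcal{P}_T(Y)-UV^{\top}\|_F=\frob{W_k}\le 2^{-k}\sqrt r$ is negligibly small, and, using $\mathcal{P}_{T^{\perp}}(W_{\ell-1})=0$ and (iii),
\[
\|\mathcal{P}_{T^{\perp}}(Y)\|_{\mathrm{op}}\ \le\ \sum_{\ell=1}^{k}\bigl\|\mathcal{P}_{T^{\perp}}(\mathcal{R}_{\omega,\Omega_\ell}-\mathcal{I})(W_{\ell-1})\bigr\|_{\mathrm{op}}\ \le\ \tfrac14\sum_{\ell=1}^{k}\|W_{\ell-1}\|_{\mu,\nu}\ \le\ \tfrac14\sum_{\ell\ge1}2^{-(\ell-1)}\ =\ \tfrac12 .
\]
The standard inexact-dual-certificate optimality lemma --- which uses (i) to bound the $T^{\perp}$-component of an arbitrary feasible perturbation of $M$ --- would then certify that $M$ is the unique optimum of~\eqref{eq:method}; a union bound over the $O(\log(n_1+n_2))$ batches and the $O(n_1+n_2)$ row/column events of (ii), with suitably chosen polynomial exponents, yields the stated probability $1-5(n_1+n_2)^{-10}$.

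The hard part will be proving (ii) and, above all, (iii) with the \emph{non-uniform} rescaling $1/p_{ij}$. In the classical incoherent/uniform-sampling analysis the summands of $\mathcal{R}_\Omega(Z)-Z$ admit a single global $\ell_\infty$ bound; once $M$ is coherent and $1/p_{ij}$ ranges over many orders of magnitude no such bound exists, and a crude spectral estimate (through $\frob{\cdot}$, or via $\ell_1$ and $\ell_\infty$ operator norms) leaks spurious factors of $\sqrt r$ or $\mu_0$ that would destroy the sample complexity. Making the spectral estimate (iii) depend on $Z\in T$ only through $\|Z\|_{\mu,\nu}$ --- a norm under which $UV^{\top}$ has $O(1)$ size no matter how coherent $M$ is --- is precisely what the new weighted $\ell_{\infty,2}$ concentration bounds must deliver; and keeping the logarithmic bookkeeping tight enough that $p_{ij}$ carries only $\log^{2}(n_1+n_2)$ rather than a higher power is the finicky part, requiring the sharpened Bernstein/moment arguments of \citet{chen2013incoherence} transported to the weighted norm.
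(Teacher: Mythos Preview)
Your overall architecture --- dual certificate via golfing, rescaled sampling operator $\mathcal{R}_{\omega,\Omega}$, and a leverage-weighted $\ell_{\infty,2}$ norm --- is exactly the route the paper takes. The gap is in your claims (ii) and (iii): you assert that both the contraction on $T$ and the spectral bound can be stated with \emph{only} $\|Z\|_{\mu,\nu}$ on the right-hand side. The paper does not prove this, and a direct attempt fails at the almost-sure bound in matrix Bernstein. Writing $(\mathcal{R}_\Omega-\mathcal{I})(Z)=\sum_{ij}(\delta_{ij}/p_{ij}-1)Z_{ij}\,e_ie_j^{\top}$, one needs $\max_{ij}|Z_{ij}|/p_{ij}$ small; but $\|Z\|_{\mu,\nu}$ controls $|Z_{ij}|$ only up to $\min\{\sqrt{\mu_i r/n},\sqrt{\nu_j r/n}\}$, which does not cancel against $p_{ij}\asymp(\mu_i+\nu_j)r/n$ --- you are left with a factor like $\sqrt{n/(r\max\{\mu_i,\nu_j\})}$ that can be as large as $\sqrt{n/r}$. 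Your parenthetical ``up to a spectral correction from $\mathcal{P}_U Z\mathcal{P}_V$'' names the symptom but does not supply the cure; for $Z\in T$ one can show $|Z_{ij}|\le\sqrt{\mu_ir/n}\,\|Ze_j\|_2+\sqrt{\nu_jr/n}\,\|e_i^{\top}Z\|_2$ only modulo an additive $\sqrt{\mu_ir/n}\sqrt{\nu_jr/n}\,\|Z\|_{\mathrm{op}}$, and $\|Z\|_{\mathrm{op}}\le\sqrt{r}\,\|Z\|_{\mu,\nu}$ costs a spurious $\sqrt{r}$.

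The paper's remedy is to introduce a \emph{second} weighted norm, the entrywise $\|Z\|_{\mu(\infty)}:=\max_{ij}|Z_{ij}|\sqrt{n/(\mu_ir)}\sqrt{n/(\nu_jr)}$, and to propagate it in tandem with $\|Z\|_{\mu,\nu}$ through the golfing iterates. A separate lemma gives the pure contraction $\|(\mathcal{P}_T\mathcal{R}_{\Omega_\ell}-\mathcal{P}_T)Z\|_{\mu(\infty)}\le\tfrac12\|Z\|_{\mu(\infty)}$; the $\ell_{\infty,2}$ contraction and the spectral estimate then read
\[
\|(\mathcal{P}_T\mathcal{R}_{\Omega_\ell}-\mathcal{P}_T)Z\|_{\mu,\nu}\le\tfrac12\bigl(\|Z\|_{\mu(\infty)}+\|Z\|_{\mu,\nu}\bigr),\qquad
\|(\mathcal{R}_{\Omega_\ell}-\mathcal{I})Z\|_{\mathrm{op}}\le\tfrac{c}{\sqrt{c_0}}\bigl(\|Z\|_{\mu(\infty)}+\|Z\|_{\mu,\nu}\bigr),
\]
with $\|Z\|_{\mu(\infty)}$ appearing precisely to absorb the Bernstein $B$-term. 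Running the two recursions in parallel --- the $\mu(\infty)$ norm decays geometrically on its own and feeds into the $\mu(\infty,2)$ recursion --- is what closes the argument at $\Theta(nr\log^2 n)$ samples. So your plan is right up to this coupled two-norm bookkeeping, which you should add; one norm is not enough.
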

We will refer to the sampling strategy \eqref{eq:coherent_dist} as \emph{leveraged sampling}.  
Note that the expected number of observed elements is $\sum_{i,j} p_{ij}$, and this satisfies
\begin{align*}
\sum_{i,j}p_{ij}
&\ge\max\left\{ c_{0}\frac{r\log^{2}(n_{1}+n_{2})}{\min\{n_1,n_2\}}\sum_{i,j}\left(\mu_{i}(M)+\nu_{j}(M)\right),\sum_{i,j}\frac{1}{\min\{n_1,n_2\}^{10}}\right\}\\
&=2c_{0}\max\left\{ n_{1},n_{2}\right\} r\log^{2}(n_{1}+n_{2}),
\end{align*}
which is independent of the leverage scores, or indeed any other property of the matrix.  Hoeffding's inequality implies that the actual
number of observed elements sharply concentrates around its expectation, leading to the following corollary: 
\begin{corollary}
\label{cor:random} Let $M = (M_{ij})$ be an $n_1 \times n_2$ matrix of rank $ r $.  Draw a subset $ \Omega $ of its elements by leveraged sampling according to the procedure described in Theorem~\ref{thm:random}.  
There is a universal constant $c_0>0$ such that the following holds with probability at least $1-10(n_{1}+n_{2})^{-10}$: 
the number $m$ of revealed elements is bounded by
$$
\vert \Omega \vert \leq 3c_{0}\max\left\{ n_{1},n_{2}\right\} r\log^{2}(n_{1}+n_{2})
$$
and $M$ is the unique optimal solution to the nuclear norm minimization program~\eqref{eq:method}. 
\end{corollary}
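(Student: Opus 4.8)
The plan is to derive Corollary~\ref{cor:random} from Theorem~\ref{thm:random} together with a routine concentration estimate on the cardinality of $\Omega$. First I would fix the sampling procedure concretely: draw each entry $(i,j)$ independently with probability $p_{ij}$ equal to---not merely bounded below by---the right-hand side of \eqref{eq:coherent_dist}, which is the smallest choice consistent with the hypotheses of Theorem~\ref{thm:random}. With this choice those hypotheses hold verbatim, so $M$ is the unique optimal solution of \eqref{eq:method} with probability at least $1-5(n_1+n_2)^{-10}$, and it only remains to bound $\vert\Omega\vert$.

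Second, I would compute the expected number of samples $\E\vert\Omega\vert=\sum_{i,j}p_{ij}$. Bounding the truncation $\min\{\cdot,1\}$ by $c_0(\mu_i(M)+\nu_j(M))r\log^2(n_1+n_2)/\min\{n_1,n_2\}$ and the polynomial floor by $\min\{n_1,n_2\}^{-10}$ (using $\max\{a,b\}\le a+b$), and invoking the identities $\sum_i\mu_i(M)=n_1$ and $\sum_j\nu_j(M)=n_2$ (which follow from the orthonormality of the columns of $U$ and $V$, as noted before the theorem) together with $n_1n_2/\min\{n_1,n_2\}=\max\{n_1,n_2\}$, I obtain
\[
2c_0\max\{n_1,n_2\}\,r\log^2(n_1+n_2)\;\le\;\E\vert\Omega\vert\;\le\;2c_0\max\{n_1,n_2\}\,r\log^2(n_1+n_2)+\frac{\max\{n_1,n_2\}}{\min\{n_1,n_2\}^{9}} .
\]
Thus $\E\vert\Omega\vert=\bigl(2c_0+o(1)\bigr)\max\{n_1,n_2\}\,r\log^2(n_1+n_2)$, the lower-order term being negligible (and in any case absorbable into the universal constant, the corollary being vacuous when $n_1+n_2$ is below an absolute constant).

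Third, I would apply Hoeffding's inequality to $\vert\Omega\vert=\sum_{i,j}\mathbf{1}[(i,j)\in\Omega]$, a sum of $n_1n_2$ independent Bernoulli variables: for every $t>0$,
\[
\prob{\vert\Omega\vert\ge\E\vert\Omega\vert+t}\;\le\;\exp\!\Bigl(-\frac{2t^2}{n_1n_2}\Bigr).
\]
Choosing $t$ so that $\E\vert\Omega\vert+t\le 3c_0\max\{n_1,n_2\}\,r\log^2(n_1+n_2)$---possible by the display above, with $t\ge\tfrac12 c_0\max\{n_1,n_2\}r\log^2(n_1+n_2)$---the exponent is at least $\tfrac12 c_0^2\,\frac{\max\{n_1,n_2\}}{\min\{n_1,n_2\}}\,r^2\log^4(n_1+n_2)\ge\tfrac12 c_0^2\,r^2\log^4(n_1+n_2)$, which exceeds $10\log(n_1+n_2)$ once $c_0$ is a sufficiently large universal constant (recall $r\ge1$). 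Hence $\vert\Omega\vert\le 3c_0\max\{n_1,n_2\}\,r\log^2(n_1+n_2)$ with probability at least $1-5(n_1+n_2)^{-10}$, and a union bound over this event and the recovery event of Theorem~\ref{thm:random} gives both conclusions simultaneously with probability at least $1-10(n_1+n_2)^{-10}$.

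There is no substantive obstacle; the argument is bookkeeping on top of Theorem~\ref{thm:random}. The only points that need a little care are (i) the upper bound on $\E\vert\Omega\vert$, where one must check that the truncation and the polynomial floor in \eqref{eq:coherent_dist} contribute only a lower-order term, and (ii) verifying that the Hoeffding tail beats $(n_1+n_2)^{-10}$ uniformly in $n_1,n_2,r$, which is exactly what may force a mild enlargement of the universal constant relative to the one in Theorem~\ref{thm:random}.
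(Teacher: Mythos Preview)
Your proposal is correct and follows essentially the same approach as the paper: the paper's proof is only the one-line remark preceding the corollary that ``Hoeffding's inequality implies that the actual number of observed elements sharply concentrates around its expectation,'' together with the displayed lower bound $\sum_{i,j}p_{ij}\ge 2c_0\max\{n_1,n_2\}r\log^2(n_1+n_2)$, and you have supplied exactly the missing details (the matching upper bound on $\E|\Omega|$, the Hoeffding tail estimate, and the union bound with Theorem~\ref{thm:random}).
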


We now provide comments and discussion.

{\bf (A)} Roughly speaking, the condition given in~(\ref{eq:coherent_dist})
ensures that elements in important rows/columns (indicated by large
leverage scores $\mu_{i}$ and $\nu_{j}$) of the matrix should be
observed more often. Note that Theorem~\ref{thm:random} only stipulates that
an \emph{inequality} relation hold between $p_{ij}$ and $\left\{ \mu_{i}(M),\nu_{j}(M)\right\} $. This allows for there to be some discrepancy between the sampling distribution and the leverage scores. It also has the natural interpretation that the more the sampling distribution
$\left\{ p_{ij}\right\} $ is ``aligned'' to the leverage score pattern
of the matrix, the fewer observations are needed.

 
{\bf (B)} Sampling based on leverage scores provides close to the optimal number of sampled elements required for exact recovery (when sampled with any distribution). In particular, recall that the number of degrees of freedom of an $n\times n$ matrix of rank $r$ is $2nr(1-r/2n)$, and knowing the leverage scores of the matrix reduces the degrees of freedom by at most $ 2n $. Hence, regardless how the elements
are sampled, a minimum of $\Theta(nr)$ elements is required to recover
the matrix. Theorem~\ref{thm:random}
matches this lower bound, with an additional $O(\log^{2}(n))$ factor.

{\bf (C)} Our work improves on existing results {\em even} in the case of uniform sampling and uniform incoherence. Recall that the original work of~\cite{candes2009exact}, and subsequent works~\citep{candes2010power, recht2009simpler, gross2011recovering} give recovery guarantees based on two parameters of the matrix $M\in \mathbb{R}^{n\times n}$ (assuming its SVD is $ U\Sigma V^\top $): (a) the (above-defined) {\em incoherence parameter} $\mu_0$, which is a uniform bound on the  leverage scores, 
and (b) a  {\em joint incoherence parameter} $\mu_{\text{str}}$ defined by $\|UV^\top\|_\infty = \sqrt{\frac{r\mu_{\text{str}}}{n^2}}$. With these definitions, the current state of the art states that if the  sampling probability is uniform and  satisfies 
\begin{align*}
p_{ij} \equiv p \geq c \frac{\max\{\mu_0,\mu_{\text{str}}\} r \log^2 n}{n},\quad \forall i,j,
\end{align*}
where $c$ is a constant, then $M$ will be the unique optimum of~(\ref{eq:method}) with high probability. A direct corollary of our work improves on this result, by removing the need for extra constraints on the joint incoherence; in particular, it is easy to see that our theorem implies that a uniform sampling probability of $p\geq 
c\frac{\mu_0 r \log^2 n}{n}$---that is, with no $\mu_{\text{str}}$---guarantees recovery of $M$ with high probability. Note that $\mu_{\text{str}}$ can be as high as $\mu_0 r$, for example, in the case when $ M $ is positive semi-definite; our corollary thus removes this sub-optimal dependence on the rank and on the  incoherence parameter. This improvement was recently observed in~\cite{chen2013incoherence}. 

\subsection{Knowledge-Free Completion for Row Coherent Matrices}
\label{sec:row_coherent}

Theorem~\ref{thm:random} immediately yields a useful result in scenarios where only the row space of a matrix is coherent and one has control over the sampling of the matrix. This setting is considered before by~\cite{ks13} and is of interest in applications like recommender systems, network tomography and gene expression analysis. 

Suppose the column space of $ M \in\R^{n\times n}$ is incoherent with $ \max_i \mu_i(M) \le \mu_0 $ and the row space is arbitrary (we consider square matrix for simplicity). We choose each row of $ M $ with probability $ c_0 \mu_0r\log n/n $ ($ c_0 $ is a constant), and observe all the elements of the chosen rows. We then compute the leverage scores $ \{\tilde{\nu}_j \}$ of the space spanned by these rows, and use them as estimates for $ \nu_j(M) $, the leverage scores of  $ M $. Based on these estimates, we can  perform leveraged sampling according to~\eqref{eq:coherent_dist} and then use nuclear norm minimization to recover $ M $. Note that this procedure does not require any prior knowledge about the leverage scores of $ M $. The following corollary shows that the procedure is \emph{provably correct} and exactly recovers $ M $ with high probability, using a near-optimal number of samples. 
\begin{corollary}\label{cor:row_coherent}
For some universal constants $ c_0,c_1 >0$ the following holds with probability at least $ 1-10n^{-10} $.  The above procedure computes the column leverage scores of $ M $ exactly, i.e., $ \tilde{\nu}_j = \nu_j(M),\forall j\in [n] $. If we further sample a set $ \Omega $ of elements  of $ M $ with probabilities
\[
p_{ij} = \min\left\{c_0 \frac{(\mu_0 + \tilde{\nu}_j)r\log^2n}{n}, 1\right\},\quad\forall i,j,
\]
then $ M $ is the unique optimal solution to the nuclear norm minimization program~\eqref{eq:method}. The total number of samples used by the above procedure is at most $ c_1 \mu_0 rn \log^2n $.
\end{corollary}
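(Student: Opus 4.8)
The plan is to break the argument into two parts: (i) showing that sampling whole rows of $M$ at rate $\Theta(\mu_0 r \log n / n)$ exactly recovers the column space of $M$ (and hence the leverage scores $\nu_j(M)$), and (ii) feeding these exact leverage scores into Theorem~\ref{thm:random} to obtain exact recovery via nuclear norm minimization, while tracking the total sample count. For part (i), let $S \subseteq [n]$ be the random set of observed rows, each included independently with probability $q := c_0 \mu_0 r \log n / n$. Write $M = U\Sigma V^\top$ with $U \in \mathbb{R}^{n\times r}$, $\Sigma \in \mathbb{R}^{r\times r}$, $V \in \mathbb{R}^{n \times r}$. The submatrix $M_S$ consisting of the rows in $S$ has the form $U_S \Sigma V^\top$, so its row space is contained in the row space of $M$; the two coincide exactly iff $U_S$ has rank $r$, i.e., iff the rows $\{U^\top e_i : i \in S\}$ span $\mathbb{R}^r$. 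I would establish this by a matrix Chernoff / Bernstein bound: since $\sum_i \frac{r}{n}\mu_i(M)\, (U^\top e_i)(U^\top e_i)^\top / \ldots$ — more precisely, $\sum_{i=1}^n (U^\top e_i)(U^\top e_i)^\top = I_r$ with each summand of operator norm $\|U^\top e_i\|_2^2 = \frac{r}{n}\mu_i(M) \le \frac{r \mu_0}{n}$, the matrix Chernoff inequality gives that $\sum_{i \in S} (U^\top e_i)(U^\top e_i)^\top \succeq \frac{1}{2} q \cdot \frac{n}{r\mu_0} \cdot I_r \cdot (\text{something}) \succ 0$ with probability $1 - r \exp(-\Omega(q n / (r\mu_0))) = 1 - r\exp(-\Omega(c_0 \log n)) \ge 1 - n^{-10}$ for $c_0$ large enough. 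Hence $U_S$ has full column rank $r$ and the row space of $M_S$ equals that of $M$, so the orthonormal basis $\tilde V$ computed from $M_S$ satisfies $\tilde V \tilde V^\top = V V^\top$, giving $\tilde\nu_j = \frac{n}{r}\|\tilde V^\top e_j\|_2^2 = \frac{n}{r}\|V^\top e_j\|_2^2 = \nu_j(M)$ for all $j$.

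For part (ii), with $\tilde\nu_j = \nu_j(M)$ in hand and $\mu_i(M) \le \mu_0$ for all $i$, the sampling probabilities $p_{ij} = \min\{c_0 (\mu_0 + \tilde\nu_j) r \log^2 n / n, 1\} \ge \min\{c_0(\mu_i(M) + \nu_j(M)) r \log^2 n / n, 1\}$ satisfy the hypothesis \eqref{eq:coherent_dist} of Theorem~\ref{thm:random} (the second, polynomially-small lower bound is trivially met since $p_{ij} \ge c_0 \mu_0 r \log^2 n / n \gg n^{-10}$ using $\mu_0 \ge 1$). Theorem~\ref{thm:random} then gives that $M$ is the unique optimum of \eqref{eq:method} with probability at least $1 - 5(2n)^{-10}$. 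One subtlety to handle carefully: the set $\Omega$ drawn in phase two is sampled conditionally on $\tilde\nu$, but on the event that phase one succeeded we have $\tilde\nu_j = \nu_j(M)$ deterministically, so the $p_{ij}$ are just fixed numbers and Theorem~\ref{thm:random} applies verbatim; a union bound over the two phases' failure events gives overall success probability $\ge 1 - 10 n^{-10}$.

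Finally, for the sample budget: phase one observes $|S| \cdot n$ elements, and $\mathbb{E}|S| = c_0 \mu_0 r \log n$, which concentrates (Chernoff/Hoeffding) to at most $2 c_0 \mu_0 r \log n$ rows with probability $\ge 1 - n^{-10}$, contributing $\le 2c_0 \mu_0 r n \log n$ elements. Phase two observes $|\Omega|$ elements with $\mathbb{E}|\Omega| = \sum_{ij} p_{ij} \le c_0 r \log^2 n \sum_{ij}(\mu_0 + \nu_j(M))/n = c_0 r \log^2 n\,(n\mu_0 + \sum_j \nu_j(M))= c_0 r\log^2 n\,(n\mu_0 + n) \le 2 c_0 \mu_0 r n \log^2 n$, using $\sum_j \nu_j(M) r/n = r$ hence $\sum_j \nu_j(M) = n$, and $\mu_0 \ge 1$; Hoeffding again gives concentration to $\le 3 c_0 \mu_0 r n \log^2 n$ with high probability. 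Summing the two phases and absorbing constants yields a total of at most $c_1 \mu_0 r n \log^2 n$ samples. The main obstacle is part (i) — one must argue that observing whole rows recovers the column \emph{space} exactly (not merely approximately), which is why the matrix Chernoff bound is applied to the discrete event "$U_S$ has rank $r$" rather than to a spectral-norm perturbation; everything else is bookkeeping plus an invocation of Theorem~\ref{thm:random}. A minor point to get right is that the leverage scores $\nu_j$ are a function of the row space only, so recovering the row space of $M$ exactly is genuinely sufficient, and the incoherence of the \emph{column} space is what makes the full-rank event likely.
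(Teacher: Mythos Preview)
Your proposal is correct and follows essentially the same route as the paper's proof: a matrix concentration inequality shows that the row-sampled submatrix $M_S = U_S \Sigma V^\top$ has $U_S$ of full column rank $r$ (the paper applies matrix Bernstein to the rescaled sum $\frac{1}{q}U_S^\top U_S - I_r$ rather than matrix Chernoff to $U_S^\top U_S$ directly, but this is a cosmetic difference), whence the row space of $M_S$ equals that of $M$ and $\tilde\nu_j = \nu_j(M)$; then Theorem~\ref{thm:random} is invoked verbatim, followed by a Hoeffding bound on the sample counts and a union bound. One minor slip: in your opening line you say ``recovers the column space of $M$,'' but as you correctly write later, it is the \emph{row} space that is recovered (the column leverage scores $\nu_j$ depend only on $V$, i.e., on the row space), and it is the incoherence of the column space (bounding $\|U^\top e_i\|_2^2$) that drives the concentration.
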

%
Our result improves on the $ \Theta(\mu_0^2r^2n\log r) $ sample complexity given in~\cite{ks13}, which is quadratic in $ \mu_0$ and $ r $. We also note that our sampling strategy is different from theirs: we sample entire rows of $ M $, whereas they sample entire columns.

\subsection{Necessity of Leveraged Sampling}
\label{sec:optimality}

In this subsection, we show that the leveraged sampling in~\eqref{eq:coherent_dist} is necessary for completing a coherent matrix in a certain precise sense. For simplicity, we restrict ourselves to square matrices in $ \mathbb{R}^{n\times n} $. Suppose each element $ (i,j) $ is observed independently with probability $p_{ij}$. We consider a family of sampling probabilities $\{p_{ij}\}$ with the following property.
\begin{definition}[Location Invariance]
$\left\{ p_{ij}\right\} $ is said to be location-invariant
with respect to the matrix $M$ if the following are satisfied: (1) For any two rows $i\neq i'$ that are identical, i.e., $M_{ij}=M_{i'j}$ for all $j$, we have $p_{ij}=p_{i'j}$
for all $j$; (2) For any two columns $j\neq j'$ that are identical, i.e.,  $M_{ij}=M_{ij'}$ for all $i$, we have $p_{ij}=p_{ij'}$ for all $i$. 
\end{definition}
%
In other words, $\left\{ p_{ij}\right\} $ is location-invariant with respect to
$M$ if identical rows (or columns) of $M$ have identical sampling
probabilities. We consider this assumption very mild, and it covers the leveraged sampling as well as 
many other typical sampling schemes, including:
\begin{itemize}
\item uniform sampling, where $p_{ij}\equiv p$,
\item element-wise magnitude sampling, where $p_{ij}\propto\left|M_{ij}\right|$ ($ \ell_1 $ sampling) or $p_{ij}\propto M_{ij}^2  $ ($ \ell_2 $ sampling), and
\item row/column-wise magnitude sampling, where $p_{ij}\propto f\left(\left\Vert M_{i\cdot}\right\Vert _{2},\left\Vert M_{\cdot j}\right\Vert _{2}\right)$
for some (usually coordinate-wise non-decreasing) function $f:\mathbb{R}_{+}^{2}\mapsto\left[0,1\right]$.
\end{itemize}
Given two $ n $-dimensional vectors $\vec{\mu}=\left(\mu_{1},\ldots,\mu_{n}\right)$
and $\vec{\nu}=\left(\nu_{1},\ldots,\nu_{n}\right)$, we use $\mathcal{M}_{r}\left(\vec{\mu},\vec{\nu}\right)$
to denote the set of rank-$r$ matrices whose leverage scores are
bounded by $\vec{\mu}$ and $\vec{\nu}$; that is,
\[
\mathcal{M}_{r}\left(\vec{\mu},\vec{\nu}\right):=\left\{ M\in\mathbb{R}^{n\times n}:\text{rank}(M)=r;\mu_{i}(M)\le\mu_{i},\nu_{j}(M)\le\nu_{j},\forall i,j\right\} .
\]

We have the following results.
\begin{theorem}
\label{thm:optimal}Suppose $n\ge r\ge2$. Given any $2r$ numbers
$a_{1},\ldots,a_{r}$ and $b_{1},\ldots,b_{r}$ with $\frac{r}{4}\le\sum_{k=1}^{r}\frac{1}{a_{k}},\sum_{k=1}^{r}\frac{1}{b_{k}}\le r$
and $\frac{2}{r}\le a_{k},b_{k}\le\frac{2n}{r},\forall k\in[r]$,
there exist two $n$-dimensional vectors $\vec{\mu}$ and $\vec{\nu}$ and the corresponding
set $\mathcal{M}_{r}\left(\vec{\mu},\vec{\nu}\right)$ with the following
properties:
\begin{enumerate}
\item For each $i,j\in[n]$, $\mu_{i}=a_{k}$ and $\nu_{j}=b_{k'}$ for
some $k,k'\in[r]$. That is, the values of the leverage scores
are given by $\left\{ a_{k}\right\} $ and $\left\{ b_{k\rq{}}\right\} $.
\item There exists a matrix $M^{(0)}\in\mathcal{M}_{r}\left(\vec{\mu},\vec{\nu}\right)$
for which the following holds. If $\{p_{ij}\}$ is location-invariant
w.r.t. $M^{(0)}$, and for some $(i_{0},j_{0})$,
\begin{equation}
p_{i_{0}j_{0}}\le \frac{\mu_{i_{0}}+\nu_{j_{0}}}{4n}\cdot r\log\left(\frac{2n}{(\mu_{i_{0}}\vee\nu_{j_{0}})r}\right),\text{\footnotemark}\label{eq:too_small}
\end{equation}
\footnotetext{We use the notation $a \vee b =\max\{a ,b\}.$}then with probability at least $\frac{1}{4}$, the following conclusion
holds: There are infinitely many matrices $M^{(1)}\neq M^{(0)}$ in $\mathcal{M}_{r}\left(\vec{\mu},\vec{\nu}\right)$
such that $\{p_{ij}\}$ is location-invariant w.r.t. $M^{(1)}$, and
\[
M_{ij}^{(0)}=M_{ij}^{(1)},\quad \forall(i,j)\in\Omega.
\]
\item If we replace the condition~(\ref{eq:too_small}) with 
\begin{equation}
p_{i_{0}j_{0}}\le\frac{\mu_{i_{0}}+\nu_{j_{0}}}{4n}\cdot r\log\left(\frac{n}{2}\right),\label{eq:too_small2}
\end{equation}
 then the conclusion above holds with probability at least $\frac{1}{n}$.
\end{enumerate}
\end{theorem}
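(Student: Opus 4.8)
The plan is to read Theorem~\ref{thm:optimal} as an information–theoretic impossibility result: I will exhibit one hard instance $M^{(0)}$ together with leverage–score vectors $\vec\mu,\vec\nu$, and show that once the sampling probability at some entry drops below the stated threshold, with the claimed probability the observed pattern $\Omega$ fails to determine the rank-$r$ matrix inside $\mathcal M_r(\vec\mu,\vec\nu)$, so there are infinitely many rank-$r$ completions obeying the same location-invariant $\{p_{ij}\}$.

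\textbf{Step 1 (the hard instance).} I build $M^{(0)}$ with a \emph{block-constant} structure. Partition $[n]$ into $r$ row-classes $R_1,\dots,R_r$ of sizes $|R_k|\approx n/(r\widetilde\mu_k)$, where each $\widetilde\mu_k\le a_k$ is chosen so that $\sum_k|R_k|=n$; this is feasible because $\sum_k 1/a_k\le r$ (so the natural sizes $n/(ra_k)$ undershoot $n$ and can be enlarged by shrinking $\widetilde\mu_k$), because $a_k\ge 2/r$ keeps every class nonempty, and because $\sum_k 1/a_k\ge r/4$ bounds how far below $a_k$ one must go. Do the same for columns with sizes governed by $\widetilde\nu_l\le b_l$, and set $M^{(0)}_{ij}=\phi_{k(i)}^{\top}D\,\psi_{l(j)}$ for a generic invertible $D\in\mathbb R^{r\times r}$ and generic $\{\phi_k\},\{\psi_l\}$ with $\|\phi_k\|^2=r\widetilde\mu_k/n$, $\|\psi_l\|^2=r\widetilde\nu_l/n$ and $\sum_k|R_k|\phi_k\phi_k^{\top}=\sum_l|C_l|\psi_l\psi_l^{\top}=I_r$. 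Then $M^{(0)}$ has rank $r$; rows inside a class are identical and (for generic $D$) distinct across classes, similarly columns; and $\mu_i(M^{(0)})=\widetilde\mu_{k(i)}\le a_{k(i)}$, $\nu_j(M^{(0)})=\widetilde\nu_{l(j)}\le b_{l(j)}$. Taking $\vec\mu,\vec\nu$ equal to the per-class bounds $a_k,b_l$ yields property~1 and $M^{(0)}\in\mathcal M_r(\vec\mu,\vec\nu)$.

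\textbf{Step 2 (the perturbation family).} Fix $(i_0,j_0)$ with $p_{i_0j_0}$ below \eqref{eq:too_small} (resp.\ \eqref{eq:too_small2}); let $R_{k_0}\ni i_0$, $C_{l_0}\ni j_0$, and after possibly swapping the roles of rows and columns assume $|C_{l_0}|\le|R_{k_0}|$, so $|C_{l_0}|\lesssim n/\big(r(\mu_{i_0}\vee\nu_{j_0})\big)$, with $|C_{l_0}|=1$ when the corresponding leverage score is the maximal value $2n/r$ (a singleton class). Define $M^{(1)}(t)$ by replacing the block-constant segment $\{i_0\}\times C_{l_0}$ of row $i_0$ with the constant $t$, leaving all other entries equal to those of $M^{(0)}$. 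Since the column classes form an orthonormal scaled frame, $\mathbf 1_{C_{l_0}}\in\operatorname{row}(M^{(0)})$, so the modified row still lies in $\operatorname{row}(M^{(0)})$ and $\operatorname{rank}(M^{(1)}(t))=r$ for all but finitely many $t$; for $t$ in a punctured interval around the original value the leverage scores move continuously and stay below $\vec\mu,\vec\nu$, so $M^{(1)}(t)\in\mathcal M_r(\vec\mu,\vec\nu)$; and the identity-class partition of $M^{(1)}(t)$ only \emph{refines} that of $M^{(0)}$ (at most $i_0$, and for generic $t$ also $j_0$, split off), so location invariance of $\{p_{ij}\}$ with respect to $M^{(0)}$ is inherited by every $M^{(1)}(t)$.

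\textbf{Step 3 (coupon collector).} Because the columns of $C_{l_0}$ are identical in $M^{(0)}$, location invariance forces $p_{i_0 j}=p_{i_0 j_0}$ for all $j\in C_{l_0}$, whence
\[
\prob{\Omega\cap(\{i_0\}\times C_{l_0})=\varnothing}=(1-p_{i_0 j_0})^{|C_{l_0}|}.
\]
On this event every $M^{(1)}(t)$ agrees with $M^{(0)}$ on $\Omega$, so the completion is non-unique and in fact there are infinitely many matrices in $\mathcal M_r(\vec\mu,\vec\nu)$ with this property — the conclusion of the theorem. It remains to lower-bound $(1-p_{i_0j_0})^{|C_{l_0}|}$: inserting $|C_{l_0}|\lesssim n/\big(r(\mu_{i_0}\vee\nu_{j_0})\big)$ and \eqref{eq:too_small} gives $|C_{l_0}|\,p_{i_0j_0}\lesssim\tfrac12\log\!\big(2n/((\mu_{i_0}\vee\nu_{j_0})r)\big)$, and since $e^{-\log(2n/((\mu_{i_0}\vee\nu_{j_0})r))}=(\mu_{i_0}\vee\nu_{j_0})r/(2n)$ — combined with the fact that a short (ultimately single-entry) segment is available exactly when that quantity is not small — one gets the constant bound $\ge\tfrac14$. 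Using instead the larger threshold \eqref{eq:too_small2} with $\log(n/2)$ loses this calibration, leaving $|C_{l_0}|\,p_{i_0j_0}\lesssim\tfrac12\log(n/2)$ and hence only $(1-p_{i_0j_0})^{|C_{l_0}|}\ge\tfrac1n$, which is part~3.

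\textbf{Main obstacle.} The real work sits in Step~1 and at the end of Step~3. In Step~1 one must make the class sizes integral, tile $[n]$ exactly, and realize the prescribed frame norms under the numerical constraints $\tfrac r4\le\sum1/a_k\le r$ and $\tfrac2r\le a_k\le\tfrac{2n}r$ — these hypotheses are precisely what makes the construction possible. At the end of Step~3, squeezing the coupon-collector estimate down to the constant $\tfrac14$ (rather than the easy $\tfrac1n$) is the delicate point: it requires placing the perturbed segment on whichever of row $i_0$ or column $j_0$ carries the larger leverage bound, and collapsing it to the single entry $(i_0,j_0)$ once that bound is near its maximum $2n/r$, so that $|C_{l_0}|\,p_{i_0j_0}=O(1)$ — which is matched exactly by the $\log\!\big(2n/((\mu_{i_0}\vee\nu_{j_0})r)\big)$ factor in \eqref{eq:too_small} vanishing as the bound approaches $2n/r$. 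One must also check carefully that the perturbation only refines, never coarsens, the identity-class structure, so that location invariance really does transfer to each $M^{(1)}(t)$, and that the generic choice of $D,\{\phi_k\},\{\psi_l\}$ indeed makes all rows/columns distinct across classes and keeps $\operatorname{rank}$ at $r$ under the one-parameter deformation.
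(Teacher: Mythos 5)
Your construction of $M^{(0)}$ (block/class structure with class sizes $\approx 2n/(a_k r)$, $2n/(b_l r)$, orthonormal factors, leverage scores realized at half their budgeted values) and your part-3 argument are essentially the paper's. But Step~3 has a genuine gap for part~2, the probability-$\tfrac14$ claim. You only perturb the \emph{fixed} row $i_0$, so the event you need is that the entire segment $\{i_0\}\times C_{l_0}$ is unobserved, which by your own computation has probability about
\[
(1-p_{i_0j_0})^{|C_{l_0}|}\;\ge\;\eta:=\frac{(\mu_{i_0}\vee\nu_{j_0})r}{2n}.
\]
This is calibrated to be $1/|C_{l_0}|$-ish, i.e.\ generically of order $r/n$ --- fine for part~3 but nowhere near a constant. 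Your attempted rescue (``collapse the segment to the single entry $(i_0,j_0)$ when the leverage bound is near its maximum $2n/r$'') only addresses the extreme coherent case; when $\mu_{i_0}\vee\nu_{j_0}=\Theta(1)$ the segment has length $\Theta(n/r)$, the single-row miss probability is $\Theta(r/n)$, and no choice of which axis to perturb along fixes this. The missing idea is \emph{amplification over the row class}: all $s_{k_1}\approx 2n/(\mu_{i_0}r)$ rows in $i_0$'s class are identical in $M^{(0)}$, so location invariance forces $p_{ij}=p_{i_0j}$ for every $i$ in that class, and the per-row events ``$\{i\}\times C_{l_0}$ entirely unobserved'' are independent, each with probability at least $\eta$. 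Hence with probability at least $1-(1-\eta)^{s_{k_1}}\ge \tfrac12\min\{1,\eta s_{k_1}\}$ there exists \emph{some} row $i^{*}$ in the class whose whole segment is missed, and the logarithm in \eqref{eq:too_small} is chosen exactly so that $\eta s_{k_1}\ge 1$ (after ordering the classes so that $s_{k_1}\ge t_{k_2}$), giving the constant. One then perturbs the (data-dependent) row $i^{*}$ rather than $i_0$; since all rows of the class are interchangeable, the perturbed matrix still lies in $\mathcal{M}_r(\vec\mu,\vec\nu)$ and inherits location invariance. Without this step your argument proves only the $1/n$ statement of part~3, not part~2.

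A secondary, fixable point: your continuity argument for keeping $M^{(1)}(t)\in\mathcal{M}_r(\vec\mu,\vec\nu)$ relies on the factor-of-two slack $\mu_i(M^{(0)})=\mu_i/2$, which is fine for producing infinitely many perturbations in a small interval, but you should state that slack explicitly (the paper instead verifies $\mu_i(M^{(1)})\le 2\mu_i(M^{(0)})$ for an explicit one-parameter family of sign-flip/rescale perturbations, which works for all parameter values rather than only perturbatively).
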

In other words, if~(\ref{eq:too_small}) holds, then with probability
at least $1/4$, no method can distinguish between $M^{(0)}$ and $M^{(1)}$;
similarly, if~(\ref{eq:too_small2}) holds, then with probability
at least $1/n$ no method succeeds.
We shall compare these results with Theorem~\ref{thm:random}, which guarantees that if we use leveraged sampling, 
\[
p_{ij}\ge c_0\frac{\mu_{i}+\nu_{j}}{n}\cdot r\log n,\quad \forall i,j
\]
for some universal constant $c_0$, then for
any matrix $M^{(0)}$ in $\mathcal{M}_{r}\left(\vec{\mu},\vec{\nu}\right)$,
the nuclear norm minimization approach~\eqref{eq:method} recovers $M^{(0)}$ from its
observed elements with failure probability no more than~$\frac{1}{n}$.
Therefore, under the setting of Theorem~\ref{thm:optimal}, leveraged sampling is {\bf sufficient and necessary} for matrix completion up to one logarithmic factor for  a target failure probability~$\frac{1}{n}$ (or up to two logarithmic factors for a target failure probability~$ \frac{1}{4} $).

Admittedly, the setting covered by Theorem~\ref{thm:optimal} has several  restrictions on the sampling distributions and the values of the leverage scores. Nevertheless, we believe this result captures some essential difficulties in recovering general coherent matrices, and highlights how the sampling probabilities should relate in a specific way with the leverage score structure of the underlying object.

\def\mT{\mathcal{T}}
\def\Po{\mP_{\Omega}}
\section{A Two-Phase Sampling Procedure}\label{sec:algo}

We have seen that one can exactly recover an arbitrary $n \times n$ rank-$r$ matrix using $\Theta(nr\log^2n)$ elements if sampled in accordance with the leverage scores. In practical applications of matrix completion, even when the user is free to choose how to sample the matrix elements, she may not be privy to the leverage scores $\{\mu_{i}(M),\nu_{j}(M)\}$.  In this section we propose a two-phase sampling procedure, described below and in Algorithm~\ref{alg:1}, which assumes no a priori knowledge about the matrix leverage scores, yet is observed to be competitive with the ``oracle" leveraged sampling distribution~\eqref{eq:coherent_dist}.  

\begin{algorithm}[b]
\caption{Two-phase sampling for coherent matrix completion}
\label{alg:1}
\begin{algorithmic}
\INPUT Rank parameter $r$, sample budget $m$, and parameter $\beta\in[0,1]$ 
\STATE \textbf{Step 1:} Obtain the initial set $ \Omega $ by sampling uniformly without replacement such that $| \Omega | = \beta m$. Compute best rank-$r$ approximation to $\Po(M)$,  $\tilde{U}\tilde{\Sigma}\tilde{V}^{\top}$, and its leverage scores $\{ \tilde{\mu}_i \}$ and $\{\tilde{\nu}_j\} $.
\STATE \textbf{Step 2:} Generate set of $(1-\beta)m$ new samples $\tilde{\Omega}$ by sampling without replacement with distribution~\eqref{eq:estimatedist}. Set
$$\hat{M}=\arg \min_{X} \| X \|_* \mbox{ s.t }\mP_{\Omega \cup \tilde{\Omega}}(X)=\mP_{\Omega \cup \tilde{\Omega}}(M).$$
\OUTPUT Completed matrix $\hat{M}$. 
\end{algorithmic}
\end{algorithm}
 
Suppose we are given a total budget of $m$ samples. The first step of the algorithm is to use the first $ \beta $ fraction of the budget to estimate the leverage scores of the underlying matrix, where $\beta \in [0, 1]$. Specifically, take a set of indices $\Omega$ sampled uniformly without replacement such that $|\Omega|=\beta m$, and let $\Po(\cdot)$ be the sampling operator which maps the matrix elements not in $\Omega$ to 0. Take the rank-$r$ SVD of $\Po(M)$,  $\tilde{U}\tilde{\Sigma}\tilde{V}^{\top}$, where $\tilde{U}, \tilde{V} \in \R^{n \times r}$ and $\tilde{\Sigma} \in \R^{r \times r}$, and then use the leverage scores $\tilde{\mu}_i:=\mu_i(\tilde{U}\tilde{\Sigma}\tilde{V}^\top)$ and $\tilde{\nu}_j:=\nu_j(\tilde{U}\tilde{\Sigma}\tilde{V}^\top)$ as estimates for the column and row leverage scores of $M$. 
Now as the second step, generate the remaining $(1-\beta)m$ samples of the matrix $M$ by sampling without replacement with distribution
\begin{equation} \label{eq:estimatedist}
\tilde{p}_{ij}\propto \frac{(\tilde{\mu}_i + \tilde{\nu}_j)r\log^{2}(2n)}{n}. 
\end{equation} 
Let $\tilde{\Omega}$ denote the new set of samples. Using the combined set of samples $\mP_{\Omega \cup \tilde{\Omega}}(M)$ as constraints, run the nuclear norm minimization program~\eqref{eq:method}. Let $\hat{M}$ be the optimum of this program.


To understand the performance of the two-phase algorithm, assume that the initial set of $m_1 = \beta m $ samples $\Po(M)$ are generated uniformly at random.  If the underlying matrix $M$ is incoherent, then already the algorithm will recover $M$ if $m_1 = \Theta(n r \log^2(2n))$.  On the other hand, if $M$ is \emph{highly} coherent, having almost all energy concentrated on just a few elements, then the estimated leverage scores~\eqref{eq:estimatedist} from uniform sampling in the first step will be poor and hence the recovery algorithm suffers. Between these two extremes, there is reason to believe that the two-phase sampling procedure will provide a better estimate to the underlying matrix than if all $m$ elements were sampled uniformly. Indeed, numerical experiments suggest that the two-phase procedure can indeed significantly outperform uniform sampling for completing coherent matrices.



\subsection{Numerical Experiments}\label{sec:sims}

\begin{figure*}[ht]
\vskip 0.2in
\begin{center}
\centerline{\includegraphics[width=110mm]{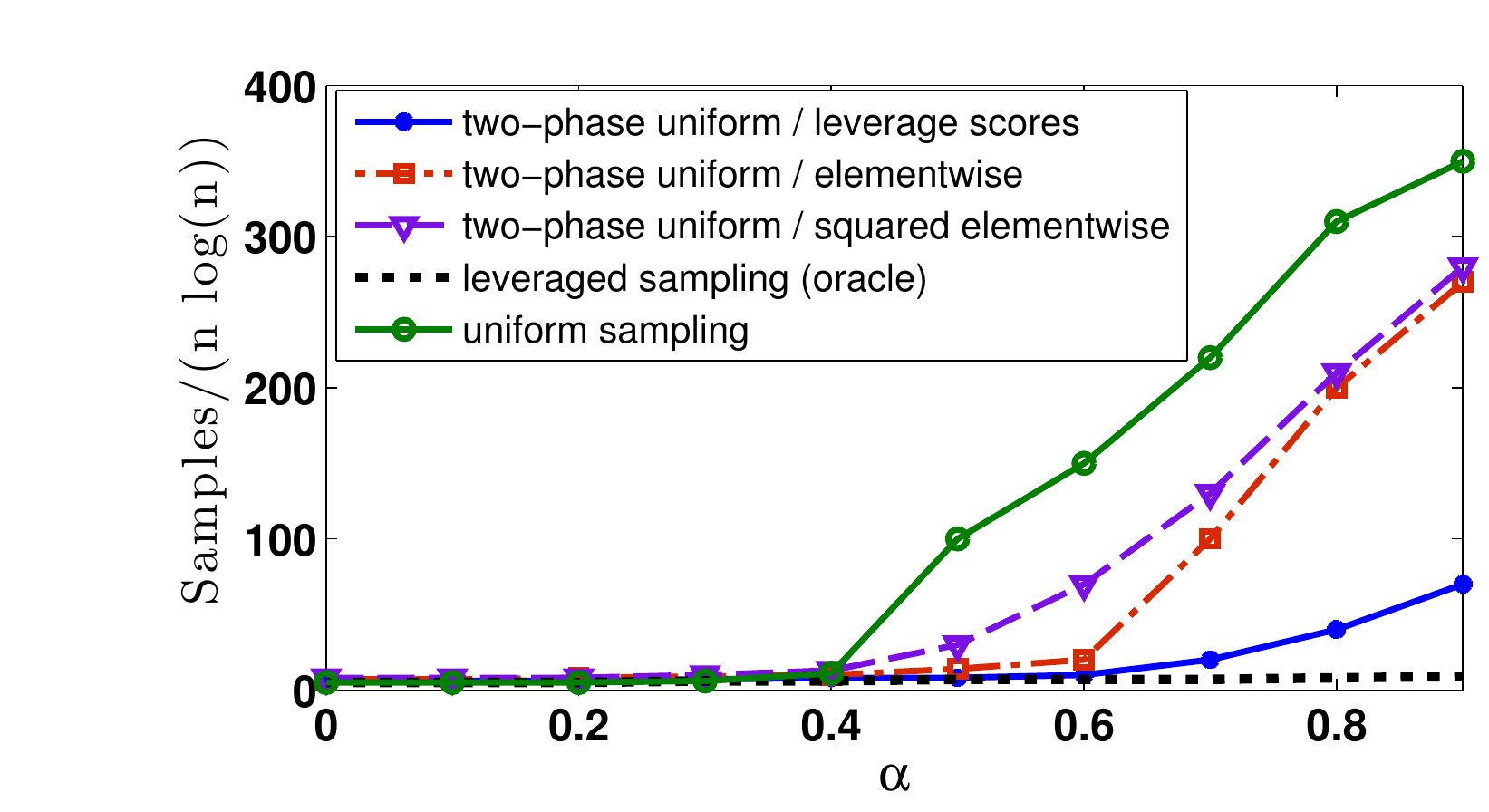}}
\caption{\small{\bf Performance of Algorithm~\ref{alg:1} for power-law matrices: } We consider rank-5 matrices of the form $M=DUV^\top D$, where elements of the matrices $U$ and $V$ are generated independently from a Gaussian distribution $\mathcal{N}(0,1)$ and $D$ is a diagonal matrix with $D_{ii}=\frac{1}{i^{\alpha}}.$  Higher values of $\alpha$ correspond to more non-uniform leverage scores and less incoherent matrices. The above simulations are run with two-phase parameter $\beta=2/3$. Leveraged sampling~\eqref{eq:coherent_dist} gives the best results of successful recovery using roughly $10n\log(n)$ samples for all values of $\alpha$ in accordance with Theorem~\ref{thm:random}. Surprisingly, sampling according to~\eqref{eq:estimatedist} with estimated leverage scores has almost the same sample complexity for $\alpha \leq 0.7$. Uniform sampling and sampling proportional to element and element squared perform well for low values of $\alpha$, but their performance degrades quickly for $\alpha >0.6$.}
\label{fig:alpha_vs_samples}
\end{center}
\vskip -0.2in
\end{figure*} 

\begin{figure*}
\vskip 0.2in
  \centering
  \begin{tabular}[ht]{ccc}
    \includegraphics[width=.45\textwidth]{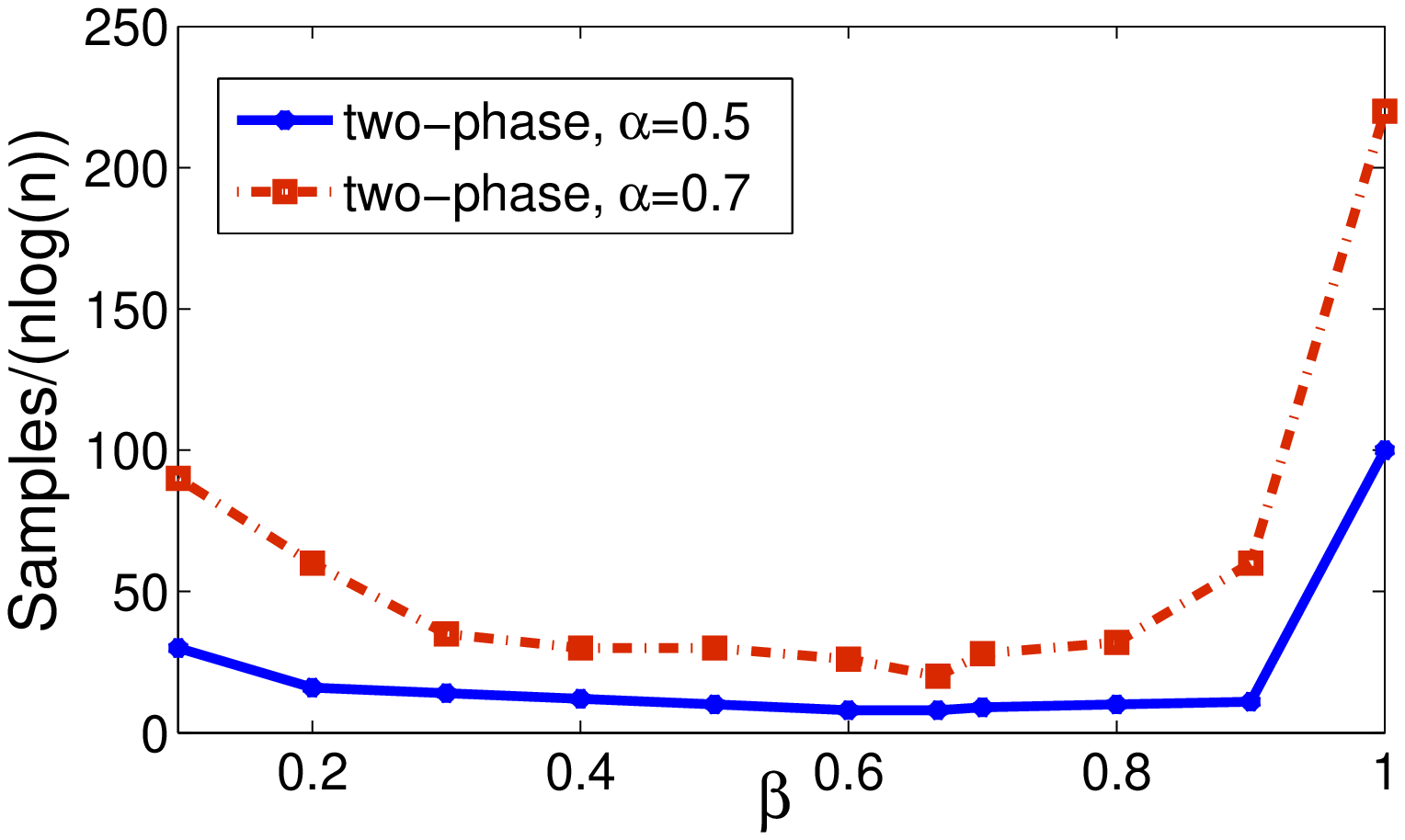}\hspace*{-0pt}&\includegraphics[width=.45\textwidth]{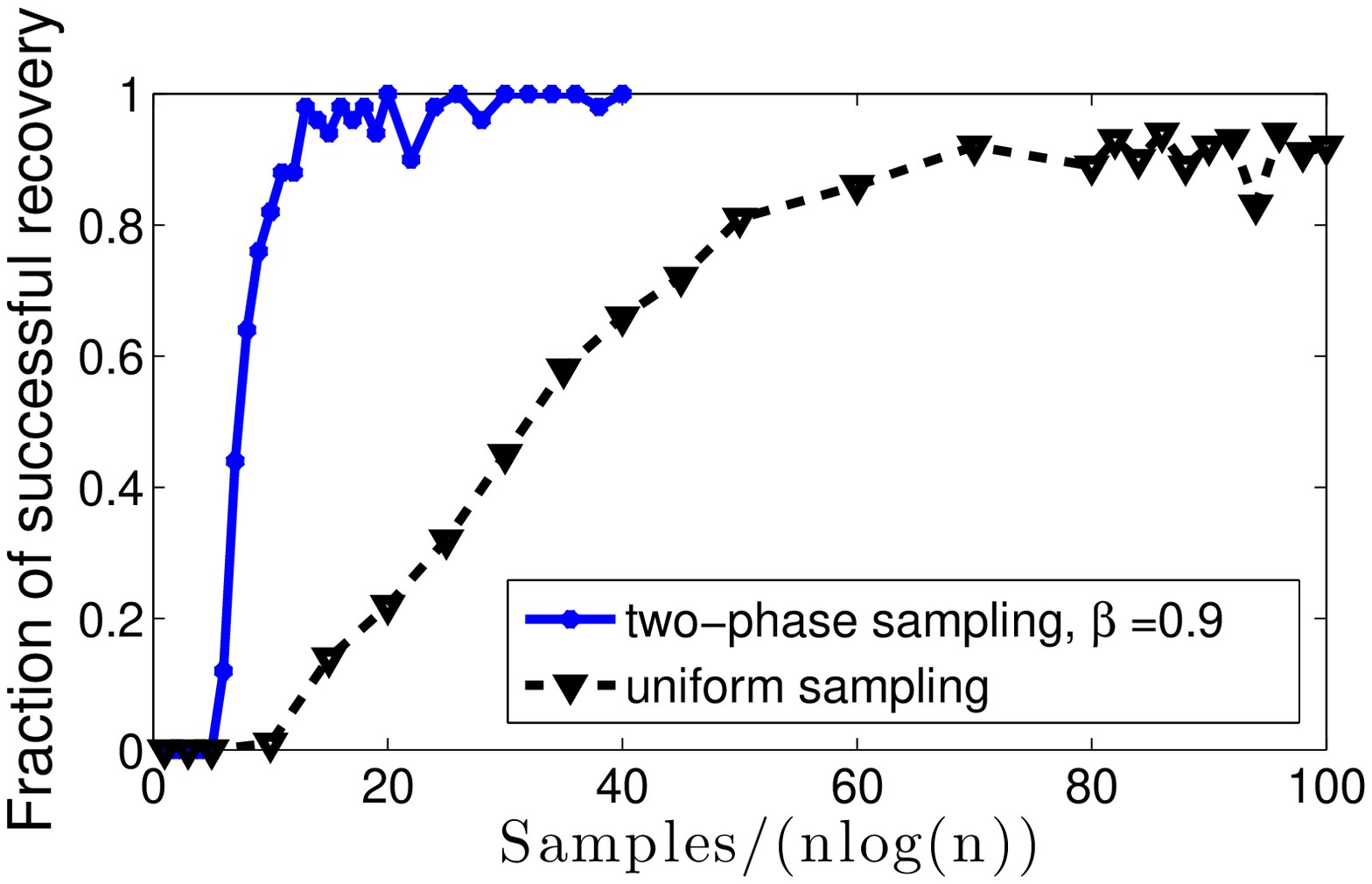}\hspace*{-15pt}\\
{\bf (a)}&{\bf (b)}
  \end{tabular}
  \caption{\small We consider power-law matrices with parameter $\alpha=0.5$ and $\alpha=0.7$.  {\bf(a):} This plot shows that Algorithm~\ref{alg:1} successfully recovers coherent low-rank matrices with fewest samples $(\approx 10n\log(n))$ when the proportion of initial samples drawn from the uniform distribution is in the range $\beta \in [0.5, 0.8].$  In particular, the sampling complexity is significantly lower than that for uniform sampling ($\beta = 1$). Note the x-axis starts at $ 0.1 $.  {\bf (b):} Even by drawing $90\%$ of the samples uniformly and using the estimated leverage scores to sample the remaining $10\%$ samples, one observes a marked improvement in the rate of recovery.}
  \label{fig:beta_vs_samples}
\vskip -0.2in
\end{figure*} 

\begin{figure*}
\vskip 0.2in
  \centering
  \begin{tabular}[ht]{ccc}
    \includegraphics[width=.45\textwidth]{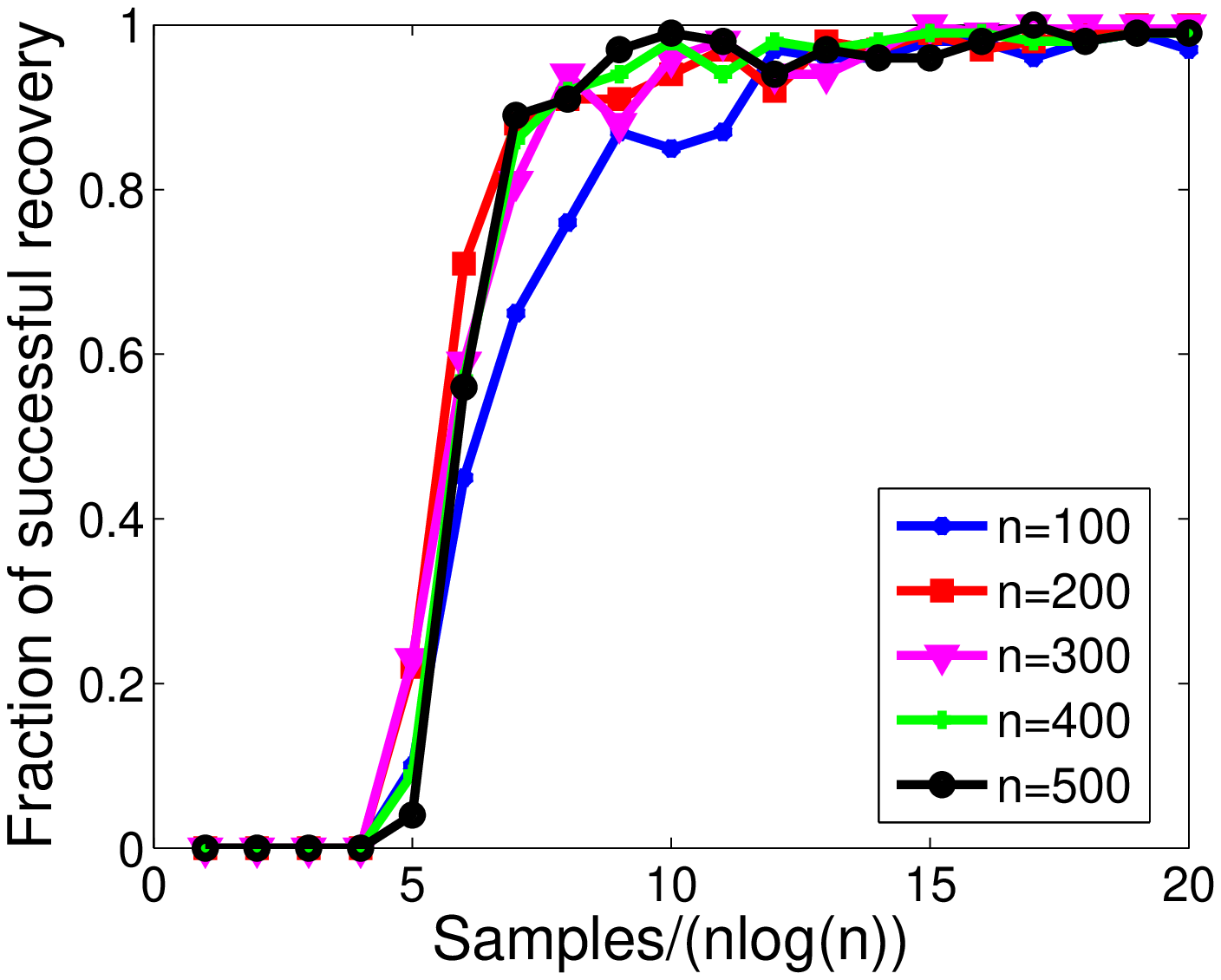}\hspace*{0pt}&\includegraphics[width=.46\textwidth]{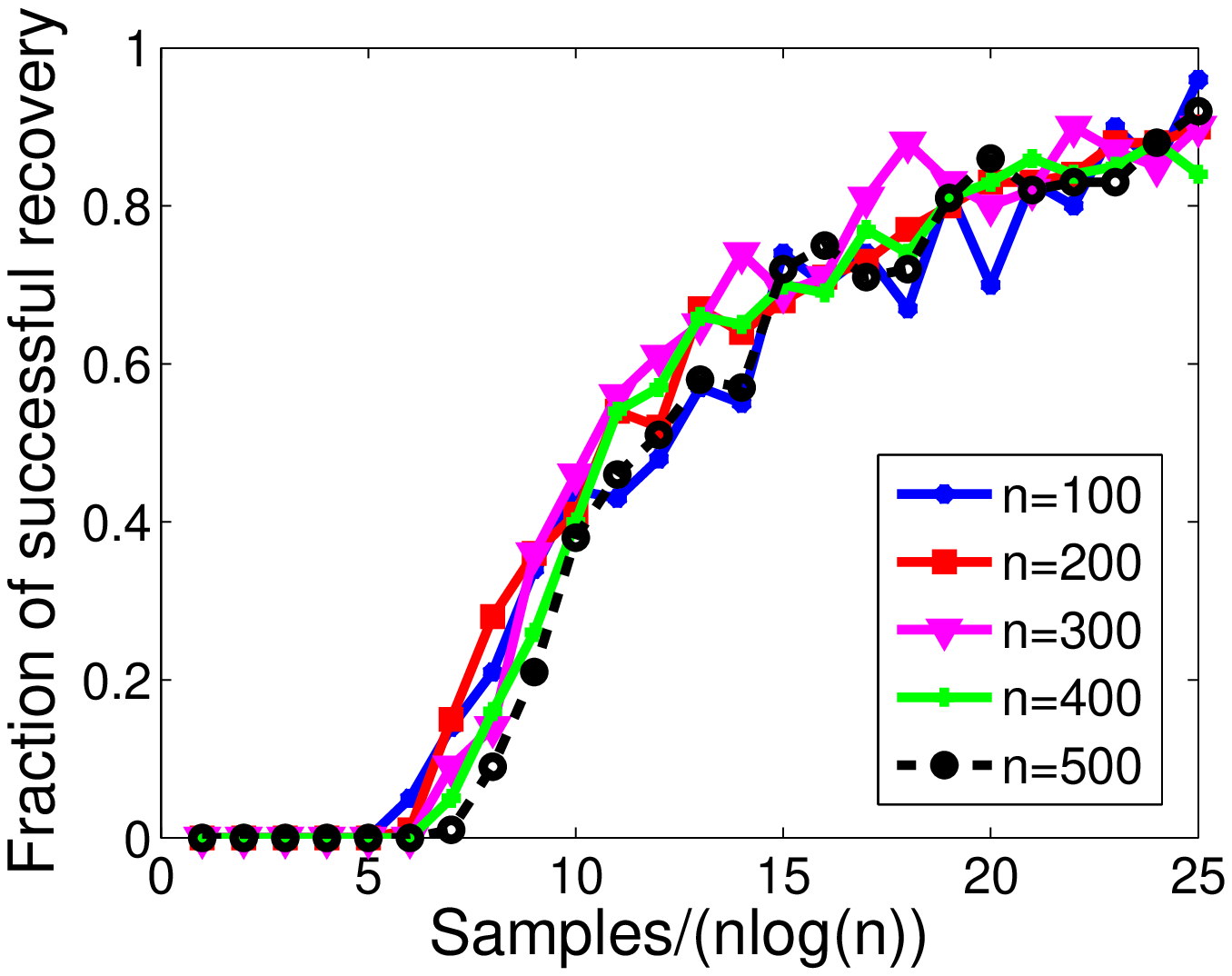}\hspace*{-5pt}\\
{\bf (a)}&{\bf (b)}
  \end{tabular}
  \caption{\small {\bf(a) \& (b)}:{\bf Scaling of sample complexity of Algorithm~\ref{alg:1} with $n$:}  We consider power-law matrices (with $\alpha=0.5$ in plot {\bf (a)} and 0.7 in plot {\bf (b)}). The plots suggest that the sample complexity of Algorithm~\ref{alg:1} scales roughly as $\Theta(n \log(n) )$.}
  \label{fig:n_vs_success}
\vskip -0.2in
\end{figure*} 

\begin{figure*}
\vskip 0.2in
  \centering
  \begin{tabular}[ht]{ccc}
    \includegraphics[width=.45\textwidth]{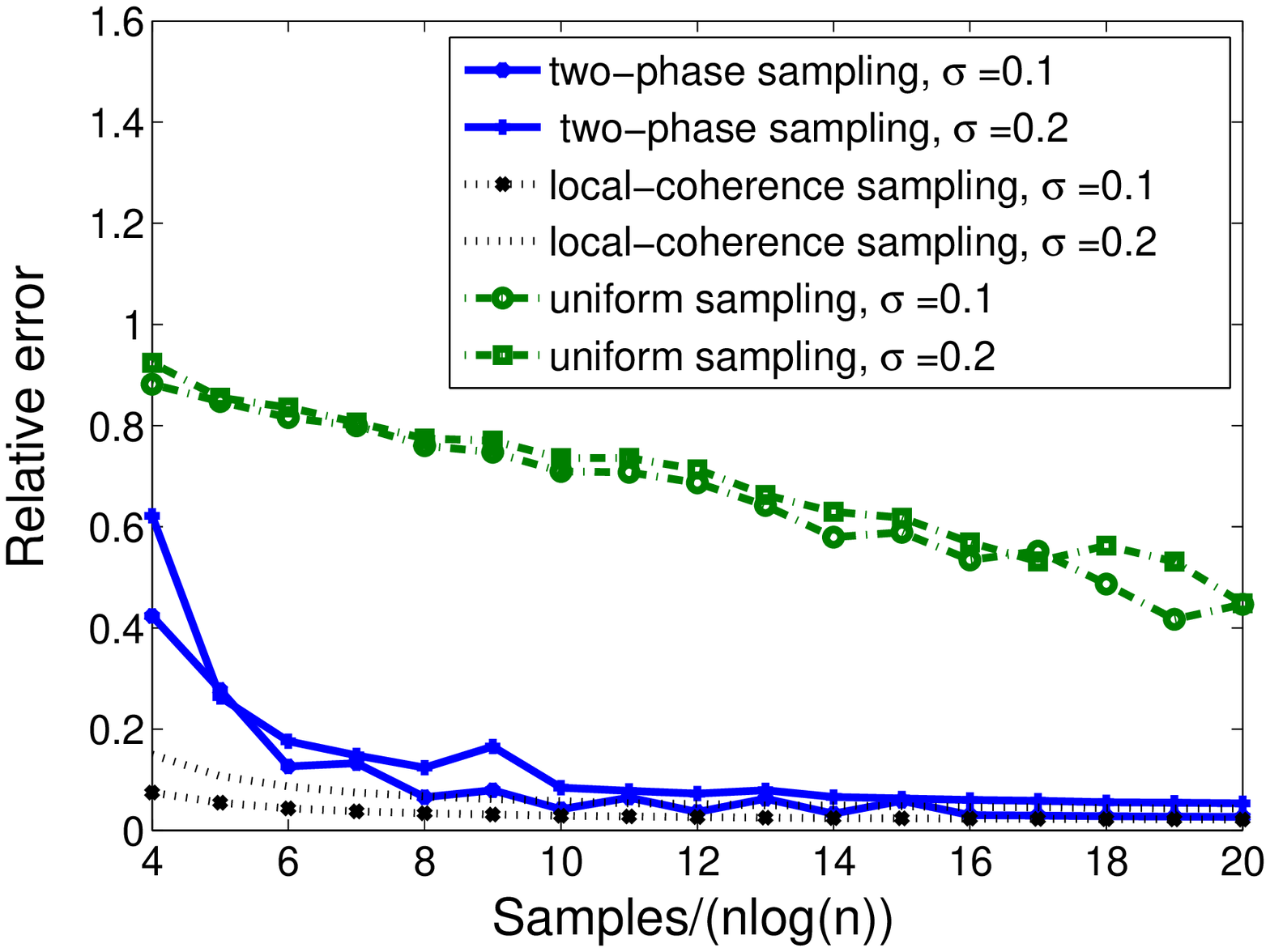}\hspace*{0pt}&\includegraphics[width=.46\textwidth]{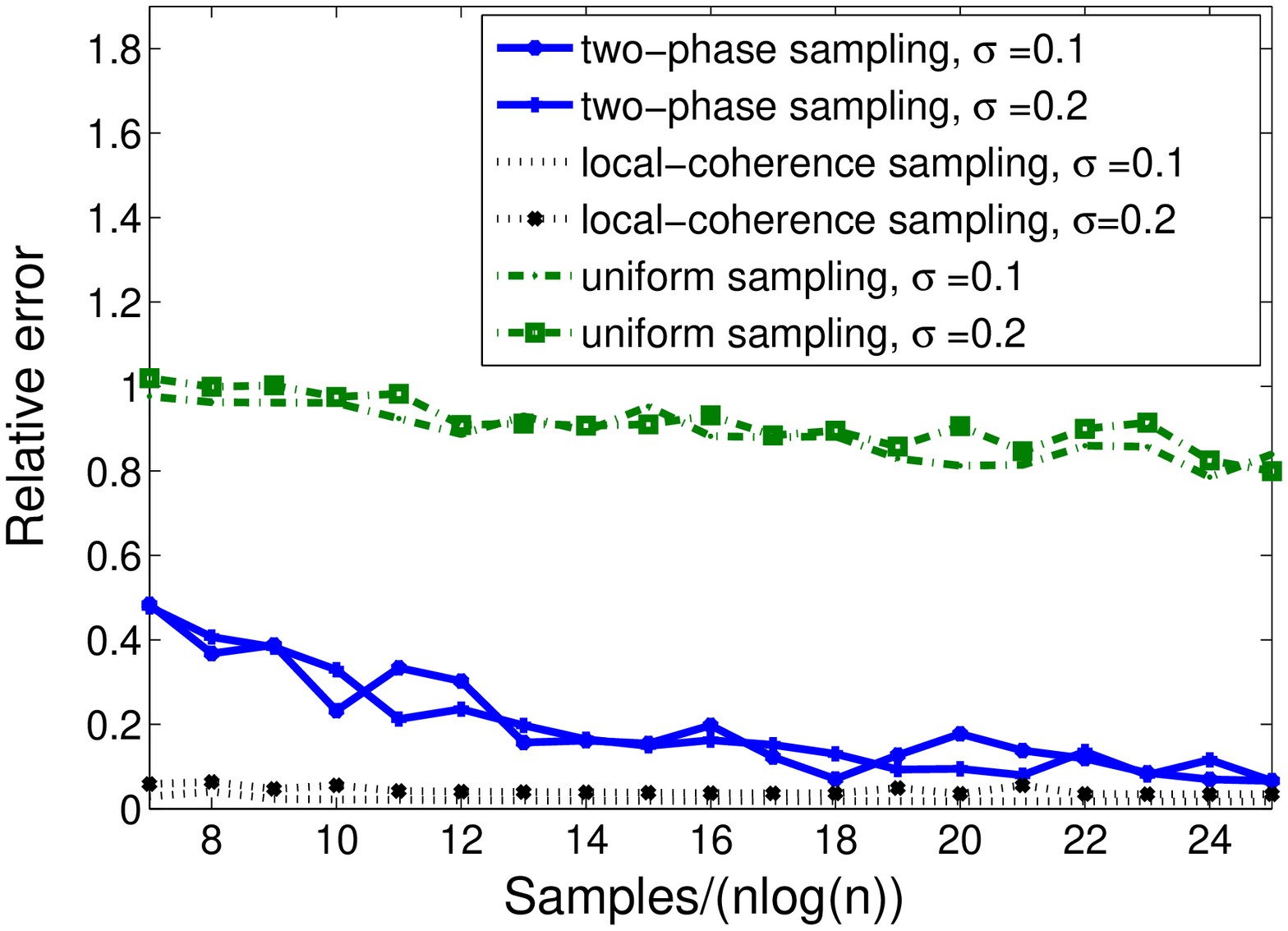}\hspace*{-15pt}\\
{\bf (a)}&{\bf (b)}
  \end{tabular}
  \caption{\small {\bf(a) \& (b)}:{\bf Performance of Algorithm~\ref{alg:1} with noisy samples:} We consider power-law matrices (with $\alpha=0.5$ in plot {\bf (a)} and $\alpha=0.7$ in plot {\bf (b)}), perturbed by a Gaussian noise matrix $Z$ with $\| Z \|_F/ \| M \|_F =\sigma$. The plots consider two different noise levels, $\sigma=0.1$ and $\sigma=0.2$. We compare two-phase sampling (Algorithm~\ref{alg:1}) with $\beta = 2/3$, sampling from the exact leverage scores, and uniform sampling. Algorithm~\ref{alg:1} has error almost as low as the leveraged sampling without requiring any a priori knowledge of the low-rank matrix, while uniform sampling suffers dramatically.}
  \label{fig:noisy_case}
\vskip -0.2in
\end{figure*}

We now study the performance of the two-phase sampling procedure outlined in Algorithm~\ref{alg:1} through numerical experiments. For this, we consider rank-$5$ matrices of size $500\times500$ of the form $M= DUV^\top D$, where the elements of the matrices $U$ and $V$ are i.i.d. Gaussian $\mathcal{N}(0,1)$ and $D$ is a diagonal matrix with power-law decay, $D_{ii}=i^{-\alpha}, 1 \leq i \leq 500.$ We refer to such constructions as \emph{power-law} matrices.  The parameter $\alpha$ adjusts the leverage scores (and hence the coherence level) of $ M $ with $\alpha=0$ being maximal incoherence $ \mu_0 = \Theta(1) $ and $\alpha=1$ corresponding to maximal coherence $\mu_0=\Theta(n)$. 

We normalize $M$ to make $\|M\|_F=1$. Figure~\ref{fig:alpha_vs_samples} plots the number of samples required for successful recovery (y-axis) for different values of $\alpha$ (x-axis) using Algorithm~\ref{alg:1} with the initial samples $\Omega$ taken  uniformly at random. Successful recovery is defined as when at least $95\%$ of trials have relative error in the Frobenius norm not exceeding 0.01. To put the results in perspective, we plot it in Figure \ref{fig:alpha_vs_samples} against the performance of pure uniform sampling, as well as other popular sampling distributions from the matrix sparsification literature~\citep{achlioptas2007fast, achlioptas2013matrix, arora2006fast, drineas2011note}, namely, in step 2 of the algorithm, sampling proportional to element ($\tilde{p}_{ij} \propto |\tilde{M}_{ij}|$) and sampling proportional to element squared ($\tilde{p}_{ij} \propto \tilde{M}_{ij}^2$), as opposed to sampling from the distribution~\eqref{eq:estimatedist}.  In all cases, the estimated matrix $\tilde{M}$ is constructed from the rank-$r$ SVD of $\Po(M)$,  $\tilde{M}=\tilde{U}\tilde{\Sigma}\tilde{V}^\top$.  Performance of nuclear norm minimization using samples generated according to the ``oracle" distribution~\eqref{eq:coherent_dist} serves as baseline for the best possible recovery, as theoretically justified by Theorem~\ref{thm:random}.  We use the Augmented Lagrangian Method (ALM) based solver in~\cite{inexactalm} to solve the convex optimization program~\eqref{eq:method}.

Figure ~\ref{fig:alpha_vs_samples} suggests that the two-phase algorithm performs comparably to the theoretically optimal leverage scores-based distribution~\eqref{eq:coherent_dist}, despite not having access to the underlying leverage scores, in the regime of mild to moderate coherence. While the element-wise sampling strategies perform comparably for low values of $\alpha$, the number of samples for successful recovery increases quickly for $\alpha >0.6$. Completion from purely uniformly sampled elements requires significantly more samples at higher values of $\alpha$.

\textit{Choosing $\beta$:} Recall that the parameter $\beta$ in Algorithm~\ref{alg:1} is the fraction uniform samples used to estimate the leverage scores. Figure~\ref{fig:beta_vs_samples}(a) plots the number of samples required for successful recovery (y-axis) as  $\beta$ (x-axis) varies from 0 to 1 for different values of $\alpha$.  $\beta=1$ reduces to purely uniform sampling, and for small values of $\beta$, the leverage scores estimated in \eqref{eq:estimatedist} will be far from the actual leverage scores.  Then, as  expected, the sample complexity goes up for $\beta$ near 0 and $\beta=1$.  We find the algorithm performs well for a wide range of $ \beta $, and setting $\beta \approx 2/3$ results in the lowest sample complexity. Surprisingly, even taking $\beta=0.9$ as opposed to pure uniform sampling $\beta = 1$ results in a significant decrease in the sample complexity; see Figure~\ref{fig:beta_vs_samples}(b) for more details.   That is, even budgeting just a small fraction of samples to be drawn from the estimated leverage scores can significantly improve the success rate in low-rank matrix recovery as long as the underlying matrix is not completely coherent. In applications like collaborative filtering, this would imply 
that incentivizing just a small fraction of users to rate a few selected movies according to the estimated leverage score distribution obtained by previous samples has the potential to greatly improve the quality of the recovered matrix of preferences.  

In Figure~\ref{fig:n_vs_success} we compare the performance of the two-phase algorithm for different values of the matrix dimension $n$, and notice for each $n$ a phase transition occurring at $\Theta(n \log(n))$ samples.  In Figure~\ref{fig:noisy_case} we consider the scenario where the samples are noisy and compare the performance of Algorithm~\ref{alg:1}  to uniform sampling and the theoretically-optimal leveraged sampling from Theorem \ref{thm:random}. Specifically we assume that the samples are generated from $M+Z$ where $Z$ is a Gaussian noise matrix. We consider two values for the noise $\sigma \eqdef \| Z \|_F/ \| M \|_F$: $\sigma=0.1$ and $\sigma=0.2$. The figures plot error in Frobenius norm $\| M-\hat{M} \|_F$ (y-axis), vs total number of samples $m$ (x-axis). These plots demonstrate the robustness of the algorithm to noise and once again show that sampling with estimated leverage scores can be as good as sampling with exact leverage scores for matrix recovery using nuclear norm minimization for $\alpha \leq 0.7$. 
    
\section{Weighted Nuclear Norm Minimization\label{sec:weight_trace}}

Theorem~\ref{thm:random} suggests that the more a set of observed elements is aligned with the leverage scores of a matrix, the better will be the performance of nuclear norm minimization.  Interestingly, Theorem~\ref{thm:random} can also  be used in a reverse way: \emph{one may adjust
the leverage scores to align with a given set of observations}. Here we demonstrate
an application of this idea in quantifying the benefit
of \emph{weighted} nuclear norm minimization for non-uniform sampling.

In many applications, the revealed elements are given to us, and distributed non-uniformly among the rows and columns.
As observed in~\cite{salakhutdinov2010collaborative}, standard unweighted nuclear norm minimization~(\ref{eq:method}) is inefficient in this setting. They propose to instead use weighted nuclear norm minimization for low-rank matrix completion:  
\begin{equation}\label{eq:weighted}
\begin{aligned}
\hat{X}=\arg\min_{X\in \mathbb{R}^{n_1\times n_2}} & \;\;  \left\Vert RXC\right\Vert _{*}\\
\textrm{s.t.} &  \;\; X_{ij}=M_{ij}, \text{ for }(i,j) \in\Omega,
\end{aligned}
\end{equation}
where $R=\textrm{diag}(R_{1},R_{2},\ldots,R_{n_{1}})\in\R^{n_1\times n_1}$ and $C=\textrm{diag}(C_{1},\ldots,C_{n_{2}})\in\R^{n_2\times n_2}$
are user-specified diagonal weight matrices with positive diagonal elements. 

We now provide a theoretical guarantee for this method, and quantify its advantage over unweighted nuclear norm minimization. Suppose $M\in\mathbb{R}^{n_1\times n_2}$ has rank $ r $ and  satisfies the {standard} incoherence condition $\max_{i,j}\left\{ \mu_{i}(M),\nu_{j}(M)\right\} \le\mu_{0}$. Let $\left\lfloor x\right\rfloor $ denote the largest integer not exceeding $x$. Under this setting, we can apply Theorem~\ref{thm:random} to establish the following:
\begin{theorem}
\label{cor:general_weighted}Without lost of generality, assume $R_{1}\le R_{2}\le\cdots\le R_{n_1}$
and $C_{1}\le C_{2}\le\cdots\le C_{n_2}$. 
There exists a universal constant $c_{0}$
such that $ M $ is the unique optimum to~(\ref{eq:weighted})
with probability at least $1-5(n_1+n_2)^{-10}$
provided that for all $ i,j $, $ p_{ij}\ge \frac{1}{\min\{n_1,n_2\}^{10}} $ and
\begin{equation}
p_{ij}\!\ge \!c_{0}\!\left(\frac{R_{i}^{2}}{\sum_{i'=1}^{\left\lfloor n_1/(\mu_{0}r)\right\rfloor }\!R_{i'}^{2}}\!+\!\frac{C_{j}^{2}}{\sum_{j'=1}^{\left\lfloor n_2/(\mu_{0}r)\right\rfloor }\!C_{j'}^{2}}\right)\!\log^{2}n.\label{eq:general_cond}
\end{equation}
\end{theorem}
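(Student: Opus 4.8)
The plan is to reduce weighted nuclear norm minimization to the \emph{un}weighted problem~\eqref{eq:method} on a rescaled matrix, and then invoke Theorem~\ref{thm:random} after bounding the leverage scores of that rescaled matrix in terms of $\mu_0$ and the weights.

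First I would exploit that, since $R,C$ are diagonal with strictly positive entries, the map $X\mapsto Y:=RXC$ is a linear bijection of $\mathbb{R}^{n_1\times n_2}$. Because $(RXC)_{ij}=R_iC_jX_{ij}$, it carries the feasible set of~\eqref{eq:weighted} exactly onto $\{Y:Y_{ij}=(RMC)_{ij},\ (i,j)\in\Omega\}$, and it transforms the objective as $\|RXC\|_*=\|Y\|_*$. Hence $M$ is the unique optimum of~\eqref{eq:weighted} if and only if $\widetilde M:=RMC$ is the unique optimum of the plain program~\eqref{eq:method} with observations $\{\widetilde M_{ij}:(i,j)\in\Omega\}$; and $\operatorname{rank}(\widetilde M)=r$ since $R,C$ are invertible. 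So it remains to check that the given $\{p_{ij}\}$ satisfy the hypothesis~\eqref{eq:coherent_dist} of Theorem~\ref{thm:random} \emph{for the matrix $\widetilde M$} — the side condition $p_{ij}\ge\min\{n_1,n_2\}^{-10}$ is assumed outright — and for this I need upper bounds on $\mu_i(\widetilde M)$ and $\nu_j(\widetilde M)$.

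Next I would compute those leverage scores from the rank-$r$ SVD $M=U\Sigma V^\top$. The column space of $\widetilde M=(RU)(\Sigma V^\top C)$ equals that of $RU$ (the right factor has full row rank $r$); letting $\widetilde U$ be an orthonormal basis of it, write $RU=\widetilde U B$ with $B\in\mathbb{R}^{r\times r}$ invertible and $B^\top B=(RU)^\top(RU)=U^\top R^2U$. Then $\widetilde U^\top e_i=R_i\,B^{-\top}(U^\top e_i)$, so
\begin{equation*}
\big\|\widetilde U^\top e_i\big\|_2^2=R_i^2\,(U^\top e_i)^\top\big(U^\top R^2U\big)^{-1}(U^\top e_i)\;\le\;\frac{R_i^2\,\|U^\top e_i\|_2^2}{\lambda_{\min}(U^\top R^2U)}\;\le\;\frac{R_i^2}{\lambda_{\min}(U^\top R^2U)}\cdot\frac{\mu_0 r}{n_1},
\end{equation*}
using $\|U^\top e_i\|_2^2=\tfrac{r}{n_1}\mu_i(M)\le\tfrac{r}{n_1}\mu_0$. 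The symmetric computation with $(C,V)$ in place of $(R,U)$ gives $\|\widetilde V^\top e_j\|_2^2\le\frac{C_j^2}{\lambda_{\min}(V^\top C^2V)}\cdot\frac{\mu_0 r}{n_2}$.

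The crux — and the step I expect to be the main obstacle — is a matching lower bound on $\lambda_{\min}(U^\top R^2U)$ (and $\lambda_{\min}(V^\top C^2V)$). Here I would use incoherence again: for any unit $w$ in the column space of $U$, $w_i^2=\langle U^\top e_i,z\rangle^2\le\|U^\top e_i\|_2^2\le\mu_0 r/n_1$, so
\begin{equation*}
\lambda_{\min}(U^\top R^2U)=\min_{\substack{w\in\operatorname{col}(U)\\\|w\|_2=1}}\sum_i R_i^2 w_i^2\;\ge\;\min_{\substack{\|w\|_2=1\\ \|w\|_\infty^2\le\mu_0 r/n_1}}\sum_i R_i^2 w_i^2 .
\end{equation*}
Since $R_1\le\cdots\le R_{n_1}$, the right-hand minimum is attained by a ``water-filling'' vector that saturates the allowed mass $\mu_0 r/n_1$ on the coordinates with the smallest $R_i^2$; the number of saturated coordinates is $\lfloor n_1/(\mu_0 r)\rfloor$, which yields $\lambda_{\min}(U^\top R^2U)\ge\frac{\mu_0 r}{n_1}\sum_{i'=1}^{\lfloor n_1/(\mu_0 r)\rfloor}R_{i'}^2$ (and likewise for $V,C$). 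Plugging this into the previous paragraph and using $\mu_i(\widetilde M)=\tfrac{n_1}{r}\|\widetilde U^\top e_i\|_2^2$, $\nu_j(\widetilde M)=\tfrac{n_2}{r}\|\widetilde V^\top e_j\|_2^2$ gives
\begin{equation*}
\frac{r\,\mu_i(\widetilde M)}{n_1}\le\frac{R_i^2}{\sum_{i'=1}^{\lfloor n_1/(\mu_0 r)\rfloor}R_{i'}^2},\qquad \frac{r\,\nu_j(\widetilde M)}{n_2}\le\frac{C_j^2}{\sum_{j'=1}^{\lfloor n_2/(\mu_0 r)\rfloor}C_{j'}^2}.
\end{equation*}
For square matrices ($n_1=n_2=n$), this shows that~\eqref{eq:general_cond} implies the hypothesis~\eqref{eq:coherent_dist} for $\widetilde M$ (absorbing $\log^2(2n)=O(\log^2 n)$ into $c_0$), so Theorem~\ref{thm:random} delivers the conclusion and the stated failure probability; the rectangular case is identical up to carrying the ratios $n_1/\min\{n_1,n_2\}$, $n_2/\min\{n_1,n_2\}$ (equivalently, using the row/column-separated form of Theorem~\ref{thm:random}). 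Besides the water-filling step, the only items needing care are the well-definedness of the partial sums ($\lfloor n_i/(\mu_0 r)\rfloor\ge1$ whenever~\eqref{eq:general_cond} is not vacuous, since then $\mu_0\le\min\{n_1,n_2\}/r$) and the transfer of \emph{uniqueness} across the change of variables, both of which are routine.
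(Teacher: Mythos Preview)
Your proposal is correct and follows essentially the same route as the paper's proof: reduce~\eqref{eq:weighted} to the unweighted program~\eqref{eq:method} on $\widetilde M=RMC$, bound $\|\widetilde U^\top e_i\|_2^2$ by $R_i^2\,\mu_0 r/(n_1\,\lambda_{\min}(U^\top R^2U))$, and lower-bound $\lambda_{\min}(U^\top R^2U)=\sigma_r^2(RU)$ by the constrained minimum of $\sum_i R_i^2 z_i$ over $z_i\ge0$, $\sum z_i=1$, $z_i\le\mu_0 r/n_1$, whose extremum saturates on the $\lfloor n_1/(\mu_0 r)\rfloor$ smallest weights. Your ``water-filling'' description is exactly the paper's LP extreme-point argument phrased differently, and the resulting leverage-score bounds and invocation of Theorem~\ref{thm:random} coincide.
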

We prove this theorem by drawing a connection between the weighted nuclear norm and the leverage scores~\eqref{eq:local_incoherence}. Define the scaled matrix $\bar{M}:=RMC$. Observe that the program~(\ref{eq:weighted}) is equivalent to first solving the following \emph{unweighted} problem with scaled observations 
\begin{equation}\label{eq:weighted2}
\begin{aligned}
\bar{X}=\arg\min_X & \;\; \left\Vert X\right\Vert _{*}\\
\textrm{s.t.} & \;\; X_{ij}=\bar{M}_{ij}, \text{ for }(i,j)\in\Omega,
\end{aligned}
\end{equation}
and then rescaling the solution $ \bar{X} $ to return $\hat{X}=R^{-1}\bar{X}C^{-1}$. In other words, through the weighted nuclear norm, we convert the problem of completing $ M $ to that of completing $ \bar{M} $. This leads to the following  observation, which underlines the proof of Theorem~\ref{cor:general_weighted}:
\begin{quote}
{\it If we can choose the weights $ R $ and $ C $ such that the leverage scores of $ \bar{M} $, denoted as $\bar{\mu}_i:=\mu_i(\bar{M}), \bar{\nu}_{j}:=\nu_i(\bar{M}),i,j\in[n] $, are aligned with the non-uniform observations in a way that roughly satisfies the relation~\eqref{eq:coherent_dist}, then we gain in sample complexity compared to the unweighted approach.}
\end{quote}   
We now quantify this more precisely for a particular class of matrix completion problems. 

\paragraph*{Comparison to unweighted nuclear norm.}

Suppose $ n_1=n_2=n $ and the sampling probabilities have a product form: $p_{ij}=p_{i}^{r}p_{j}^{c}$, with $p_{1}^{r}\le p_{2}^{r}\le\cdots\le p_{n}^{r}$ and $p_{1}^{c}\le p_{2}^{c}\le\cdots\le p_{n}^{c}$.
If we choose $R_{i}=\sqrt{\frac{1}{n}p_{i}^{r}\sum_{j'}p_{j'}^{c}}$ and $C_{j}=\sqrt{\frac{1}{n}p_{j}^{c}\sum_{i'}p_{i'}^{r}}$---which is suggested by  the condition~(\ref{eq:general_cond})---Theorem~\ref{cor:general_weighted} asserts that the following is sufficient for recovery of $ M $ with high probability:
\begin{equation}\label{eq:weighted_cond}
p_{j}^{c}\!\!\!\!\!\!\sum_{i=1}^{\left\lfloor n/(\mu_{0}r)\right\rfloor }\!\!\!\!\!\!p_{i}^{r}\gtrsim\log^{2}n,\forall j;\quad 
p_{i}^{r}\!\!\!\!\!\!\sum_{j=1}^{\left\lfloor n/(\mu_{0}r)\right\rfloor }\!\!\!\!\!\!p_{j}^{c}\gtrsim\log^{2}n,\forall i.
\end{equation}
We can compare this condition to that required by unweighted nuclear norm minimization: by Theorem~\ref{thm:random}, the latter requires 
\[
p_{i}^{r}p_{j}^{c}\gtrsim\frac{\mu_{0}r}{n}\log^{2}n, \quad \forall i,j.
\]
That is, the weighted nuclear norm approach succeeds under much less restrictive conditions. In particular, the unweighted approach imposes a condition on the \emph{least} sampled row and \emph{least} sampled column, whereas the condition~\eqref{eq:weighted_cond} shows that the weighted approach can use the heavily sampled rows/columns to assist the less sampled. This benefit is most significant precisely when the observations are very non-uniform. Indeed, the advantage of the weighted formulation is empirically observed in~\cite{salakhutdinov2010collaborative,foygel2011learning} with the weights $ R $ and $ C $ chosen as above using the empirical sampling distribution.

We remark that Theorem~\ref{cor:general_weighted} is the first exact recovery guarantee for weighted nuclear norm minimization. It provides a theoretical explanation, complementary to those in~\cite{salakhutdinov2010collaborative,foygel2011learning,negahban2012restricted}, for why the weighted approach is advantageous over the unweighted approach for non-uniform observations. It also serves as a testament to the power of Theorem~\ref{thm:random} as a general result on the relationship between sampling and the coherence/leverage score structure of a matrix.

\acks{We would like to thank Petros Drineas, Michael Mahoney and Aarti Singh for helpful discussions.  R. Ward was supported in part by an NSF CAREER award, AFOSR Young Investigator Program award, and ONR Grant N00014-12-1-0743. S. Sanghavi would like to acknowledge support from the NSF, ARO and DTRA.}

\clearpage

\appendix

\section{Proof of Theorem~\ref{thm:random}\label{sec:Proofs}}

We prove our main result Theorem~\ref{thm:random} in this
section. The overall outline of the proof is a standard convex duality argument. The main difference in establishing our results is that, while other proofs relied on bounding the $ \ell_\infty $ norm of certain random matrices, we instead bound the weighted $ \ell_{\infty,2} $, norm (to be defined).

The high level roadmap of the proof is a standard one: by
convex analysis, to show that $M$ is the unique optimal solution
to~(\ref{eq:method}), it suffices to construct a \emph{dual certificate
}$Y$ obeying certain sub-gradient optimality conditions. One of the conditions
requires the spectral norm $\left\Vert Y\right\Vert $ to be small.
Previous work bounds $\left\Vert Y\right\Vert $ by the the $\ell_{\infty}$
norm $\left\Vert Y'\right\Vert _{\infty}:=\sum_{i,j}\left|Y'_{ij}\right|$
of a certain matrix $Y'$, which gives rise to the standard and joint incoherence conditions involving uniform bounds by $ \mu_0 $ and $ \mu_{str} $. Here, we derive a new bound
using the weighted $\ell_{\infty,2}$ norm of $ Y' $, which is
the maximum of the weighted row and column norms of $Y'$. These bounds lead to a tighter bound of $\left\Vert Y\right\Vert $ and hence less restrictive conditions for matrix completion. 

We now turn to the details. To simplify the notion, we prove the results for square matrices ($n_{1}=n_{2}=n$). The results for non-square matrices are proved in exactly the same fashion. In the sequel by \emph{with high
probability} (\emph{w.h.p.}) we mean with probability at least $1-n^{-20}$.  In the proof we will show that each random event holds with high probability, and since there are no more than $ 5n^{10} $ such events, it follows from the union bound that all the events simultaneously hold with probability at least $ 1-5n^{-10} $, which is the success probability in the statement of Theorem~\ref{thm:random}. 

A few additional notations are needed. We drop the dependence of $ \mu_i(M)$ and $\nu_j(M) $  on $ M $ and simply use $ \mu_i$ and $\nu_j $.
We use $c$ and its derivatives ($c',c_{0}$, etc.) for universal positive
constants, which may differ from place to place.
The inner product between two matrices is given by $\left\langle Y,Z\right\rangle =\mbox{trace}(Y^{\top}Z)$.
Recall that $U$ and $V$ are the left and right singular vectors
of the underlying matrix $M$. We need several standard projection
operators for matrices. The projections $P_{T}$ and $P_{T^{\bot}}$
are given by 
\[
P_{T}(Z):=UU^{\top}Z+ZVV^{\top}-UU^{\top}VZZ^{\top}
\]
and $P_{T^{\bot}}(Z):=Z-P_{T}(Z).$ $P_{\Omega}(Z)$ is the matrix
with $\left(P_{\Omega}(Z)\right)_{ij}=Z_{ij}$ if $(i,j)\in\Omega$
and zero otherwise, and $P_{\Omega^{c}}(Z):=Z-P_{\Omega}(Z)$. As
usual, $\left\Vert z\right\Vert _{2}$ is the $\ell_{2}$ norm of
the vector $z$, and $\left\Vert Z\right\Vert _{F}$ and $\left\Vert Z\right\Vert $
are the Frobenius norm and spectral norm of the matrix $Z$, respectively.
For a linear operator $\mathcal{A}$ on matrices, its operator norm
is defined as $\left\Vert \mathcal{A}\right\Vert _{op}=\sup_{X\in\mathbb{R}^{n\times n}}\left\Vert \mathcal{A}(X)\right\Vert _{F}/\left\Vert X\right\Vert _{F}.$
For each $1\le i,j\le n$, we define the random variable $\delta_{ij}:=\mathbb{I}\left((i,j)\in\Omega\right)$,
where $\mathbb{I}(\cdot)$ is the indicator function. The matrix operator
$R_{\Omega}:\mathbb{R}^{n\times n}\mapsto\mathbb{R}^{n\times n}$
is defined as 
\begin{equation}
R_{\Omega}(Z)=\sum_{i,j}\frac{1}{p_{ij}}\delta_{ij}\left\langle e_{i}e_{j}^{\top},Z\right\rangle e_{i}e_{j}^{\top}.
\end{equation}

\paragraph*{Optimality Condition.}

Following our proof roadmap, we now state a sufficient condition for
$M$ to be the unique optimal solution to the optimization problem
(\ref{eq:method}). This is the content of Proposition~\ref{prop:opt_cond}
below (proved in Section~\ref{sec:proof_opt_cond} to follow). 
\begin{proposition}
\label{prop:opt_cond} Suppose $p_{ij}\ge\frac{1}{n^{10}}$. The matrix
$M$ is the unique optimal solution to~(\ref{eq:method}) if the
following conditions hold. 
\begin{enumerate}
\item $\left\Vert P_{T}R_{\Omega}P_{T}-P_{T}\right\Vert _{op}\le\frac{1}{2}.$
\item There exists a dual certificate $Y\in\mathbb{R}^{n\times n}$ which
satisfies $P_{\Omega}(Y)=Y$ and

\begin{enumerate}
\item $\left\Vert P_{T}(Y)-UV^{\top}\right\Vert _{F}\le\frac{1}{4n^{5}},$
\item $\left\Vert P_{T^{\bot}}(Y)\right\Vert \le\frac{1}{2}$.
\end{enumerate}
\end{enumerate}
\end{proposition}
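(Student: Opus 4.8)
This is the classical convex-duality / dual-certificate argument: I would show that any feasible $X$ for~\eqref{eq:method} with $X\neq M$ satisfies $\|X\|_{*}>\|M\|_{*}$. Write $\Delta:=X-M$, so feasibility means $P_{\Omega}(\Delta)=0$, and hence also $R_{\Omega}(\Delta)=0$ (the operator $R_{\Omega}$ only sees the entries on $\Omega$). Condition~1 is used in two ways, and the first is an \emph{injectivity} fact: for any $Z\in T$, $\langle Z,R_{\Omega}Z\rangle=\langle Z,(P_{T}R_{\Omega}P_{T})Z\rangle\ge\bigl(1-\tfrac12\bigr)\|Z\|_{F}^{2}$, so if in addition $P_{\Omega}(Z)=0$ then $\langle Z,R_{\Omega}Z\rangle=0$ forces $Z=0$. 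Applied to $Z=\Delta$ this shows that every feasible $\Delta\neq0$ must have $P_{T^{\bot}}(\Delta)\neq0$; we will reduce everything to this case.

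Next I would invoke the subgradient inequality for $\|\cdot\|_{*}$ at $M$. Choosing the subgradient $UV^{\top}+W$ with $W\in T^{\bot}$, $\|W\|\le1$, and $W$ aligned with $P_{T^{\bot}}(\Delta)$ so that $\langle W,\Delta\rangle=\|P_{T^{\bot}}(\Delta)\|_{*}$ (spectral/nuclear duality), we get $\|X\|_{*}\ge\|M\|_{*}+\langle UV^{\top},\Delta\rangle+\|P_{T^{\bot}}(\Delta)\|_{*}$. Since $Y$ is supported on $\Omega$ while $\Delta$ vanishes there, $\langle Y,\Delta\rangle=0$, so $\langle UV^{\top},\Delta\rangle=\langle UV^{\top}-Y,\Delta\rangle$; decomposing into the $T$ and $T^{\bot}$ parts and using $UV^{\top}\in T$ yields $\langle UV^{\top},\Delta\rangle=\langle UV^{\top}-P_{T}(Y),P_{T}(\Delta)\rangle-\langle P_{T^{\bot}}(Y),P_{T^{\bot}}(\Delta)\rangle$. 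Bounding the first term by Cauchy--Schwarz and condition~2(a), and the second by the spectral/nuclear Hölder inequality and condition~2(b), gives
\[
\|X\|_{*}\ \ge\ \|M\|_{*}\ +\ \tfrac12\|P_{T^{\bot}}(\Delta)\|_{*}\ -\ \tfrac{1}{4n^{5}}\|P_{T}(\Delta)\|_{F}.
\]

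To close, I would control $\|P_{T}(\Delta)\|_{F}$ in terms of $\|P_{T^{\bot}}(\Delta)\|_{F}$, using condition~1 a second time together with the a-priori bound $p_{ij}\ge n^{-10}$. Since $R_{\Omega}(\Delta)=0$ we have $P_{T}R_{\Omega}P_{T}(\Delta)=-P_{T}R_{\Omega}P_{T^{\bot}}(\Delta)$; condition~1 gives $\|P_{T}R_{\Omega}P_{T}(\Delta)\|_{F}\ge\tfrac12\|P_{T}(\Delta)\|_{F}$, while $\|P_{T}R_{\Omega}P_{T^{\bot}}(\Delta)\|_{F}\le\|R_{\Omega}\|_{op}\,\|P_{T^{\bot}}(\Delta)\|_{F}\le n^{10}\|P_{T^{\bot}}(\Delta)\|_{F}$ because every $p_{ij}\ge n^{-10}$. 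Hence $\|P_{T}(\Delta)\|_{F}=O(n^{10})\,\|P_{T^{\bot}}(\Delta)\|_{F}\le O(n^{10})\,\|P_{T^{\bot}}(\Delta)\|_{*}$. Substituting this into the displayed inequality, the error term $\tfrac{1}{4n^{5}}\|P_{T}(\Delta)\|_{F}$ is absorbed by the gain $\tfrac12\|P_{T^{\bot}}(\Delta)\|_{*}$ once the polynomial factors are tracked carefully — this is precisely the role of the polynomially small tolerance in condition~2(a) — so $\|X\|_{*}>\|M\|_{*}$. Combined with the reduction from the first paragraph, this gives uniqueness of $M$ as the optimum.

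I expect the delicate point to be exactly that last step: the \emph{polynomial bookkeeping} that reconciles the near-isometry blow-up $\|R_{\Omega}\|_{op}\le n^{10}$ (coming from the crude floor $p_{ij}\ge n^{-10}$) against the polynomially small certificate error in~2(a), so that an \emph{inexact} dual certificate still suffices; everything else is routine convex analysis. One should also be careful to note that condition~1 is used twice for genuinely different purposes — injectivity of $P_{\Omega}$ restricted to $T$, and the near-isometry estimate of $P_{T}R_{\Omega}P_{T}$ on $T$.
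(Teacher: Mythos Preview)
Your overall architecture matches the paper's proof exactly: subgradient inequality at $M$, the ``sign pattern'' $W\in T^{\bot}$ dual to $P_{T^{\bot}}(\Delta)$, subtracting $Y$ using $\langle Y,\Delta\rangle=0$, and then controlling $\|P_{T}(\Delta)\|_{F}$ by $\|P_{T^{\bot}}(\Delta)\|_{*}$ via Condition~1 and the floor $p_{ij}\ge n^{-10}$. The injectivity observation in your first paragraph is also exactly how the paper concludes strict inequality.

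However, the step you yourself flag as delicate does \emph{not} close as written. Your route gives
\[
\tfrac12\|P_{T}(\Delta)\|_{F}\le\|P_{T}R_{\Omega}P_{T}(\Delta)\|_{F}=\|P_{T}R_{\Omega}P_{T^{\bot}}(\Delta)\|_{F}\le\|R_{\Omega}\|_{op}\,\|P_{T^{\bot}}(\Delta)\|_{F}\le n^{10}\|P_{T^{\bot}}(\Delta)\|_{F},
\]
so $\|P_{T}(\Delta)\|_{F}\le 2n^{10}\|P_{T^{\bot}}(\Delta)\|_{*}$. Plugging this in, the error term is $\tfrac{1}{4n^{5}}\cdot 2n^{10}\|P_{T^{\bot}}(\Delta)\|_{*}=\tfrac{n^{5}}{2}\|P_{T^{\bot}}(\Delta)\|_{*}$, which \emph{overwhelms} the gain $\tfrac12\|P_{T^{\bot}}(\Delta)\|_{*}$. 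The tolerance $\tfrac{1}{4n^{5}}$ in Condition~2(a) is calibrated to a blow-up of order $n^{5}$, not $n^{10}$.

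The missing idea is to pass through the \emph{square root} $R_{\Omega}^{1/2}$ (entrywise multiplication by $\delta_{ij}/\sqrt{p_{ij}}$). Condition~1 controls the quadratic form: for $Z\in T$,
\[
\|R_{\Omega}^{1/2}Z\|_{F}^{2}=\langle Z,R_{\Omega}Z\rangle=\langle Z,(P_{T}R_{\Omega}P_{T})Z\rangle\ge\tfrac12\|Z\|_{F}^{2}.
\]
Since $P_{\Omega}(\Delta)=0$ implies $R_{\Omega}^{1/2}(\Delta)=0$, we get $\|R_{\Omega}^{1/2}P_{T}(\Delta)\|_{F}=\|R_{\Omega}^{1/2}P_{T^{\bot}}(\Delta)\|_{F}\le n^{5}\|P_{T^{\bot}}(\Delta)\|_{F}$, because $\|R_{\Omega}^{1/2}\|_{op}\le\max_{i,j}p_{ij}^{-1/2}\le n^{5}$. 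This yields $\|P_{T}(\Delta)\|_{F}\le\sqrt{2}\,n^{5}\|P_{T^{\bot}}(\Delta)\|_{*}$, and now $\tfrac{1}{4n^{5}}\cdot\sqrt{2}\,n^{5}<\tfrac12$ so the argument closes. This is precisely the paper's Lemma~\ref{lem:mini_lemma}. In short: use the quadratic form and split via $R_{\Omega}^{1/2}$ rather than bounding $\|R_{\Omega}\|_{op}$ directly; that halves the exponent and is exactly what the constants in the proposition are tuned to.
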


\paragraph*{Validating the Optimality Condition.}

We begin by proving that Condition 1 in Proposition~\ref{prop:opt_cond}
is satisfied under the conditions of Theorem~\ref{thm:random}. This
is done in the following lemma (proved in Section~\ref{sec:proof_tech} to follow).
The lemma shows that $R_{\Omega}$ is close to the identity operator
on $T$.
\begin{lemma}
\label{lem:op}If $p_{ij}\ge \min\{c_{0}\frac{(\mu_{i}+\nu_{j})r}{n}\log n,1\}$
for all $(i,j)$ and a sufficiently large $c_{0}$, then w.h.p.
\begin{equation}
\left\Vert P_{T}R_{\Omega}P_{T}-P_{T}\right\Vert _{op}\le\frac{1}{2}.
\end{equation}

\end{lemma}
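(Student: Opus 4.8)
The plan is a standard operator Bernstein argument, with the one essential point being that the scalar we must control is a weighted $\ell_{\infty,2}$-type quantity, $\|P_T(e_ie_j^\top)\|_F^2$, rather than an entrywise bound. First I would rewrite the error as a sum of independent, mean-zero, self-adjoint random operators on the $n^2$-dimensional space $\mathbb{R}^{n\times n}$. Using that $\{e_ie_j^\top\}_{i,j}$ is an orthonormal basis and that $P_T$ is a self-adjoint projection, one checks that $P_T R_\Omega P_T=\sum_{i,j}\frac{\delta_{ij}}{p_{ij}}\mathcal{P}_{ij}$ and $P_T=\sum_{i,j}\mathcal{P}_{ij}$, where $\mathcal{P}_{ij}(Z):=\langle P_T(e_ie_j^\top),Z\rangle\,P_T(e_ie_j^\top)$ is a rank-one positive semidefinite operator with operator norm $\|P_T(e_ie_j^\top)\|_F^2$. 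Hence $P_TR_\Omega P_T-P_T=\sum_{i,j}S_{ij}$ with $S_{ij}:=\big(\tfrac{\delta_{ij}}{p_{ij}}-1\big)\mathcal{P}_{ij}$, and $\mathbb{E}[S_{ij}]=0$. When $p_{ij}=1$ the variable $\delta_{ij}$ is deterministic and $S_{ij}=0$, so only indices with $p_{ij}<1$ contribute; for those $|\delta_{ij}/p_{ij}-1|\le 1/p_{ij}$, and by the hypothesis $p_{ij}\ge c_0\frac{(\mu_i+\nu_j)r}{n}\log n$.

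The heart of the argument is the bound $\|P_T(e_ie_j^\top)\|_F^2\le\frac{(\mu_i+\nu_j)r}{n}$. This follows from $P_{T^\perp}(e_ie_j^\top)=(I-UU^\top)e_ie_j^\top(I-VV^\top)$, whence $\|P_{T^\perp}(e_ie_j^\top)\|_F^2=(1-\|U^\top e_i\|_2^2)(1-\|V^\top e_j\|_2^2)$ and therefore $\|P_T(e_ie_j^\top)\|_F^2=\|U^\top e_i\|_2^2+\|V^\top e_j\|_2^2-\|U^\top e_i\|_2^2\|V^\top e_j\|_2^2\le\|U^\top e_i\|_2^2+\|V^\top e_j\|_2^2=\frac{(\mu_i+\nu_j)r}{n}$, using the definition of the leverage scores. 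Combined with the lower bound on $p_{ij}$ at the contributing indices, this yields the key inequality $\frac{\|P_T(e_ie_j^\top)\|_F^2}{p_{ij}}\le\frac{1}{c_0\log n}$. From this I read off the two inputs to matrix Bernstein: (i) a uniform per-term operator-norm bound $L:=\max_{i,j}\|S_{ij}\|_{op}\le\max_{i,j}\frac{\|P_T(e_ie_j^\top)\|_F^2}{p_{ij}}\le\frac{1}{c_0\log n}$; and (ii) a variance bound. For (ii), since $\mathcal{P}_{ij}^2=\|P_T(e_ie_j^\top)\|_F^2\,\mathcal{P}_{ij}$ and $\mathbb{E}[(\delta_{ij}/p_{ij}-1)^2]=\frac{1-p_{ij}}{p_{ij}}\le\frac{1}{p_{ij}}$, we get $\mathbb{E}[S_{ij}^2]=\frac{1-p_{ij}}{p_{ij}}\|P_T(e_ie_j^\top)\|_F^2\,\mathcal{P}_{ij}\preceq\frac{1}{c_0\log n}\mathcal{P}_{ij}$, so summing and using $\sum_{i,j}\mathcal{P}_{ij}=P_T\preceq\mathcal{I}$ gives $\big\|\sum_{i,j}\mathbb{E}[S_{ij}^2]\big\|_{op}\le\frac{1}{c_0\log n}=:\sigma^2$.

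Finally I would invoke the operator Bernstein inequality on the $n^2$-dimensional space: $\mathbb{P}\big[\|\sum_{i,j}S_{ij}\|_{op}\ge\tfrac12\big]\le 2n^2\exp\!\Big(\frac{-1/8}{\sigma^2+L/6}\Big)\le 2n^2\exp(-c\,c_0\log n)$ for an absolute constant $c>0$, and choosing $c_0$ sufficiently large makes this at most $n^{-20}$, i.e.\ the event holds w.h.p.\ as claimed. The only place that needs genuine care is the middle step: one must verify that \emph{both} $L$ and $\sigma^2$ are of order $1/(c_0\log n)$, and this is exactly what leveraged sampling buys — it is the reason the correct quantity to track is the weighted row/column-norm bound $\|P_T(e_ie_j^\top)\|_F^2\le(\mu_i+\nu_j)r/n$ and why the sampling rate is chosen proportional to $(\mu_i+\nu_j)r/n$. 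The basis expansions, the degeneracy at $p_{ij}=1$, and the constant bookkeeping are all routine.
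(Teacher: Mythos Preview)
Your proposal is correct and follows essentially the same approach as the paper: both decompose $P_{T}R_{\Omega}P_{T}-P_{T}$ into the sum of independent mean-zero rank-one operators $S_{ij}=(\delta_{ij}/p_{ij}-1)\langle P_{T}(e_{i}e_{j}^{\top}),\cdot\rangle P_{T}(e_{i}e_{j}^{\top})$, use the key bound $\|P_{T}(e_{i}e_{j}^{\top})\|_{F}^{2}\le(\mu_{i}+\nu_{j})r/n$ together with the hypothesis on $p_{ij}$ to get $L,\sigma^{2}\le 1/(c_{0}\log n)$, and finish with matrix Bernstein. Your variance step via the PSD ordering $\mathbb{E}[S_{ij}^{2}]\preceq\frac{1}{c_{0}\log n}\mathcal{P}_{ij}$ and $\sum_{i,j}\mathcal{P}_{ij}=P_{T}\preceq\mathcal{I}$ is a slightly cleaner packaging of the same computation the paper carries out in Frobenius norm, but there is no substantive difference.
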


\paragraph*{Constructing the Dual Certificate.}

It remains to construct a matrix $Y$ (the dual certificate) that
satisfies the condition 2 in Proposition~\ref{prop:opt_cond}. We
do this using the golfing scheme~\citep{gross2011recovering,candes2009robustPCA}.
Set $k_{0}=20\log n$. Suppose the set $\Omega$ of observed elements
is generated from $\Omega=\bigcup_{k=1}^{k_{0}}\Omega_{k}$, where
for each $k=1,\ldots,k_{0}$ and matrix index $(i,j)$, $\mathbb{P}\left[(i,j)\in\Omega_{k}\right]=q_{ij}:=1-(1-p_{ij})^{1/k_{0}}$
independent of all others. Clearly this is equivalent to the original
Bernoulli sampling model. Let $W_{0}:=0$ and for $k=1,\ldots,k_{0},$
\begin{equation}
W_{k}:=W_{k-1}+R_{\Omega_{k}}P_{T}(UV^{\top}-P_{T}W_{k-1}),
\end{equation}
where the operator $R_{\Omega_{k}}$ is given by 
\[
R_{\Omega_{k}}(Z)=\sum_{i,j}\frac{1}{q_{ij}}\mathbb{I}\left((i,j)\in\Omega_{k}\right)\left\langle e_{i}e_{j}^{\top},Z\right\rangle e_{i}e_{j}^{\top}.
\]
The dual certificate is given $Y:=W_{k_{0}}.$ Clearly $P_{\Omega}(Y)=Y$
by construction. The proof of Theorem~\ref{thm:random} is completed
if we show that under the condition in theorem, $Y$ satisfies Conditions
2(a) and 2(b) in Proposition~\ref{prop:opt_cond} w.h.p.

\paragraph*{Concentration Properties}

The key step in our proof is to show that $Y$ satisfies Condition
2(b) in Proposition~\ref{prop:opt_cond}, i.e., we need to bound $\left\Vert P_{T^{\bot}}(Y)\right\Vert $
. Here our proof departs from existing ones, as we establish concentration
bounds on this quantity in terms of (an appropriately weighted version
of) the $\ell_{\infty,2}$ norm, which we now define. The $\mu(\infty,2)$-norm
of a matrix $Z\in\mathbb{R}^{n\times n}$ is defined as 
\begin{align*}
\left\Vert Z\right\Vert _{\mu(\infty,2)} & :=\max\left\{ \max_{i}\sqrt{\frac{n}{\mu_{i}r}\sum_{b}Z_{ib}^{2}},\max_{j}\sqrt{\frac{n}{\nu_{j}r}\sum_{a}Z_{aj}^{2}}\right\} ,
\end{align*}
which is the maximum of the weighted column and row norms of $Z$.
We also need the $\mu(\infty)$-norm of $Z$, which is a weighted
version of the matrix $\ell_{\infty}$ norm. This is given as 
\[
\left\Vert Z\right\Vert _{\mu(\infty)}:=\max_{i,j}\left|Z_{ij}\right|\sqrt{\frac{n}{\mu_{i}r}}\sqrt{\frac{n}{\nu_{j}r}}.
\]
which is the weighted element-wise magnitude of $Z$. We now state three
new lemmas concerning the concentration properties of these norms.
The first lemma is crucial to our proof; it bounds the spectral norm
of $\left(R_{\Omega}-I\right)Z$ in terms of the $\mu(\infty,2)$
and $\mu(\infty)$ norms of $Z$. This obviates intermediate lemmas required previous
approaches~\citep{candes2010power,gross2011recovering,recht2009simpler,keshavan2010matrix} 
which use the $\ell_{\infty}$ norm of $Z$.
\begin{lemma}
\label{lem:op_mu} Suppose $Z$ is a fixed $n\times n$ matrix. For
some universal constant $c>1$, we have w.h.p.
\[
\left\Vert \left(R_{\Omega}-I\right)Z\right\Vert \le c\left(\max_{i,j}\left|\frac{Z_{ij}}{p_{ij}}\right|\log n+\sqrt{\max\left\{ \max_{i}\sum_{j=1}^{n}\frac{Z_{ij}^{2}}{p_{ij}},\max_{j}\sum_{i=1}^{n}\frac{Z_{ij}^{2}}{p_{ij}}\right\} \log n}\right).
\]
If $p_{ij}\ge \min\{c_{0}\frac{(\mu_{i}+\nu_{j})r}{n}\log n,1\}$ for all $(i,j)$,
then we further have w.h.p. $$\left\Vert \left(R_{\Omega}-I\right)Z\right\Vert \le \frac{c}{\sqrt{c_{0}}}\left(\left\Vert Z\right\Vert _{\mu(\infty)}+\left\Vert Z\right\Vert _{\mu(\infty,2)}\right).$$
\end{lemma}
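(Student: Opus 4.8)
The plan is to view $(R_\Omega-I)Z$ as a sum of independent, mean-zero random matrices and apply the matrix Bernstein inequality. Expanding the definition of $R_\Omega$ gives
\[
(R_\Omega-I)Z \;=\; \sum_{i,j} S_{ij}, \qquad S_{ij} := \Bigl(\tfrac{\delta_{ij}}{p_{ij}}-1\Bigr)\,Z_{ij}\,e_i e_j^\top ,
\]
where the $S_{ij}$ are independent and $\E[S_{ij}]=0$ since $\E[\delta_{ij}]=p_{ij}$; note that an entry with $p_{ij}=1$ yields $S_{ij}\equiv 0$, so such entries may be dropped from every sum below. First I would compute the two quantities that control the Bernstein bound. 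The almost-sure operator-norm bound is $L:=\max_{i,j}\|S_{ij}\| = \max_{i,j}\bigl|\tfrac{\delta_{ij}}{p_{ij}}-1\bigr|\,|Z_{ij}| \le \max_{i,j}\tfrac{|Z_{ij}|}{p_{ij}}$, using $\|e_i e_j^\top\|=1$. For the matrix variance, a direct computation gives $S_{ij}S_{ij}^\top = \bigl(\tfrac{\delta_{ij}}{p_{ij}}-1\bigr)^2 Z_{ij}^2\, e_i e_i^\top$, hence $\E[S_{ij}S_{ij}^\top] = \tfrac{1-p_{ij}}{p_{ij}} Z_{ij}^2\, e_i e_i^\top \preceq \tfrac{Z_{ij}^2}{p_{ij}}\, e_i e_i^\top$; summing over $i,j$ gives a diagonal matrix of spectral norm at most $\max_i \sum_j \tfrac{Z_{ij}^2}{p_{ij}}$, and symmetrically $\bigl\|\sum_{i,j}\E[S_{ij}^\top S_{ij}]\bigr\| \le \max_j \sum_i \tfrac{Z_{ij}^2}{p_{ij}}$. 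Write $\sigma^2$ for the maximum of these two variances.

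Then I would invoke the matrix Bernstein inequality in its standard form, which gives, for any $t>0$,
\[
\Pr\bigl[\,\|(R_\Omega-I)Z\|\ge t\,\bigr] \;\le\; 2n\exp\!\Bigl(\tfrac{-t^2/2}{\sigma^2+Lt/3}\Bigr),
\]
(the factor $2n$ coming from the Hermitian dilation of the non-Hermitian $n\times n$ summands). Taking $t$ equal to a large constant multiple of $\sqrt{\sigma^2\log n}+L\log n$ makes the right-hand side at most $n^{-20}$, which is the first displayed bound. For the second bound I would revisit this estimate under the hypothesis $p_{ij}\ge c_0\tfrac{(\mu_i+\nu_j)r}{n}\log n$ (which we may assume for every entry entering $L$ and $\sigma^2$, since entries with $p_{ij}=1$ drop out). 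For such entries $\tfrac{1}{p_{ij}}\le\tfrac{n}{c_0(\mu_i+\nu_j)r\log n}$, so AM--GM ($\mu_i+\nu_j\ge2\sqrt{\mu_i\nu_j}$) yields
\[
\tfrac{|Z_{ij}|}{p_{ij}} \;\le\; \tfrac{1}{2c_0\log n}\,|Z_{ij}|\sqrt{\tfrac{n}{\mu_i r}}\sqrt{\tfrac{n}{\nu_j r}} \;\le\; \tfrac{1}{2c_0\log n}\,\|Z\|_{\mu(\infty)},
\]
hence $L\log n\le\tfrac{1}{2c_0}\|Z\|_{\mu(\infty)}$; similarly, bounding $\tfrac{1}{p_{ij}}\le\tfrac{n}{c_0\mu_i r\log n}$ gives $\sum_j\tfrac{Z_{ij}^2}{p_{ij}}\le\tfrac{1}{c_0\log n}\cdot\tfrac{n}{\mu_i r}\sum_j Z_{ij}^2\le\tfrac{1}{c_0\log n}\|Z\|_{\mu(\infty,2)}^2$, with the same bound for the column sums, so $\sqrt{\sigma^2\log n}\le\tfrac{1}{\sqrt{c_0}}\|Z\|_{\mu(\infty,2)}$. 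Substituting into the first bound and using $c_0\ge1$ gives $\|(R_\Omega-I)Z\|\le\tfrac{c}{\sqrt{c_0}}\bigl(\|Z\|_{\mu(\infty)}+\|Z\|_{\mu(\infty,2)}\bigr)$.

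This is essentially a careful application of matrix Bernstein, so I do not expect a genuine obstacle; the points to watch are: (i) that entries with $p_{ij}=1$ contribute the zero matrix and so must be excluded from the variance and almost-sure bounds (otherwise one would pick up a spurious $\sqrt{\log n}$ factor in the second estimate); (ii) that one must bound \emph{both} one-sided second-moment sums, since the summands are rank-one and non-Hermitian; and (iii) that the constants must be tracked carefully enough that the final dependence is $\tfrac{1}{\sqrt{c_0}}$ --- this is the factor that later allows the bound to be absorbed into the slack in the dual-certificate conditions (Conditions 2(a)--(b) of Proposition~\ref{prop:opt_cond}) by taking $c_0$ large.
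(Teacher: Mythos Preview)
Your proposal is correct and follows essentially the same route as the paper: the same decomposition $(R_\Omega-I)Z=\sum_{i,j}S_{ij}$ with $S_{ij}=(\tfrac{\delta_{ij}}{p_{ij}}-1)Z_{ij}e_ie_j^\top$, the same almost-sure and variance bounds, and the same application of matrix Bernstein, with the second part obtained by the AM--GM step $\mu_i+\nu_j\ge 2\sqrt{\mu_i\nu_j}$ and the one-sided bound $p_{ij}\ge c_0\tfrac{\mu_i r}{n}\log n$ (respectively $c_0\tfrac{\nu_j r}{n}\log n$) for the row/column variance sums. Your care in point~(i) about excluding $p_{ij}=1$ entries mirrors the paper's use of the factor $(1-p_{ij})$ in the variance and the indicator $(1-\mathbb{I}(p_{ij}=1))$ in the almost-sure bound.
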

The next two lemmas further control the $\mu(\infty,2)$ and $\mu(\infty)$
norms of a matrix after random projections.
\begin{lemma}
\label{lem:muinf2}Suppose $Z$ is a fixed $n\times n$ matrix. If
$p_{ij}\ge \min\{c_{0}\frac{(\mu_{i}+\nu_{j})r}{n}\log n,1\}$ for all $i,j$
and sufficiently large $c_{0}$, then w.h.p. 
\[
\left\Vert (P_{T}R_{\Omega}-P_{T})Z\right\Vert _{\mu(\infty,2)}\le\frac{1}{2}\left(\left\Vert Z\right\Vert _{\mu(\infty)}+\left\Vert Z\right\Vert _{\mu(\infty,2)}\right)
\]

\end{lemma}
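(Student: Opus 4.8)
\textbf{Proof plan for Lemma~\ref{lem:muinf2}.}
The plan is to control separately every weighted row norm and every weighted column norm of $(P_TR_\Omega-P_T)Z$, each via a vector Bernstein inequality, and then take a union bound over the resulting $2n$ events. Since $R_\Omega$ is unbiased ($\mathbb E[R_\Omega(Z)]=Z$), the matrix $(P_TR_\Omega-P_T)Z=P_T\bigl((R_\Omega-I)Z\bigr)$ has mean zero, and writing $\xi_{ij}:=\tfrac{\delta_{ij}}{p_{ij}}-1$ it decomposes into a sum of independent terms
\[
P_T\bigl((R_\Omega-I)Z\bigr)=\sum_{i,j}\xi_{ij}\,Z_{ij}\,P_T(e_ie_j^\top).
\]
Entries with $p_{ij}=1$ contribute $0$, so only entries with $p_{ij}\ge c_0\tfrac{(\mu_i+\nu_j)r}{n}\log n$ need be considered, and for those we may freely replace $1/p_{ij}$ by the larger quantity $\tfrac{n}{c_0(\mu_i+\nu_j)r\log n}$ (and further by $\tfrac{n}{c_0\mu_ir\log n}$ or $\tfrac{n}{c_0\nu_jr\log n}$ as convenient), since both the almost-sure range and the variance of each summand are monotone in $1/p_{ij}$.

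First I would fix a row index $a$ and write the $a$-th row of this matrix as $\sum_{i,j}\xi_{ij}Z_{ij}\bigl(e_a^\top P_T(e_ie_j^\top)\bigr)$, a sum of independent, mean-zero vectors in $\mathbb R^n$. Using $P_T(e_ie_j^\top)=(P_Ue_i)(P_{V^\bot}e_j)^\top+e_i(P_Ve_j)^\top$ with $P_U=UU^\top$, $P_V=VV^\top$, one obtains $e_a^\top P_T(e_ie_j^\top)=\langle P_Ue_a,e_i\rangle\,P_{V^\bot}e_j+\mathbf 1\{a=i\}\,P_Ve_j$, and hence $\|e_a^\top P_T(e_ie_j^\top)\|_2^2\le 2\langle P_Ue_a,e_i\rangle^2+2\,\mathbf 1\{a=i\}\tfrac{\nu_jr}{n}$. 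Plugging the lower bound on $p_{ij}$ into the almost-sure bound $\max_{i,j}|\xi_{ij}Z_{ij}|\cdot\|e_a^\top P_T(e_ie_j^\top)\|_2$ and using $|Z_{ij}|\le\|Z\|_{\mu(\infty)}\sqrt{\tfrac{\mu_ir}{n}}\sqrt{\tfrac{\nu_jr}{n}}$ together with $|\langle P_Ue_a,e_i\rangle|\le\sqrt{\tfrac{\mu_ar}{n}}\sqrt{\tfrac{\mu_ir}{n}}$ and $\nu_j\le n/r$, the range comes out as $L\le\tfrac{c}{c_0\log n}\,\|Z\|_{\mu(\infty)}\sqrt{\mu_ar/n}$. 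For the variance proxy $\sigma^2:=\sum_{i,j}\mathbb E[\xi_{ij}^2]\,Z_{ij}^2\,\|e_a^\top P_T(e_ie_j^\top)\|_2^2$, I would bound $\sum_jZ_{ij}^2\le\|Z\|_{\mu(\infty,2)}^2\tfrac{\mu_ir}{n}$ (and the analogous column bound for the $i=a$ piece), so that the $\tfrac{n}{\mu_ir}$ factors cancel and the sum collapses through $\sum_i\langle P_Ue_a,e_i\rangle^2=\|P_Ue_a\|_2^2=\tfrac{\mu_ar}{n}$ to give $\sigma^2\le\tfrac{c}{c_0\log n}\,\|Z\|_{\mu(\infty,2)}^2\cdot\tfrac{\mu_ar}{n}$.

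Applying the vector Bernstein inequality, with failure probability at most $n^{-22}$ the row norm is at most $c\bigl(L\log n+\sqrt{\sigma^2\log n}\bigr)\le\tfrac{c}{c_0}\|Z\|_{\mu(\infty)}\sqrt{\mu_ar/n}+\tfrac{c}{\sqrt{c_0}}\|Z\|_{\mu(\infty,2)}\sqrt{\mu_ar/n}$; dividing by the weight $\sqrt{\mu_ar/n}$ yields the weighted row norm bound $\tfrac{c}{\sqrt{c_0}}\bigl(\|Z\|_{\mu(\infty)}+\|Z\|_{\mu(\infty,2)}\bigr)$, which is at most $\tfrac12\bigl(\|Z\|_{\mu(\infty)}+\|Z\|_{\mu(\infty,2)}\bigr)$ once $c_0$ exceeds a fixed universal constant. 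The weighted column norms are handled identically after replacing $Z$ by $Z^\top$ and swapping the roles of $(U,\mu)$ and $(V,\nu)$, and a union bound over the $2n$ rows and columns completes the proof. The only delicate points — and the main obstacle — are (i) choosing the form of the vector Bernstein bound so that its $\log n$ factor exactly absorbs the $1/\log n$ supplied by the sampling rate in the range term while leaving a gainful $1/\sqrt{c_0}$ in the deviation term, and (ii) verifying that the variance sum telescopes cleanly using $\sum_i\langle P_Ue_a,e_i\rangle^2=\mu_ar/n$ and $\sum_j\|P_Ve_j\|_2^2=r$; without this bookkeeping, spurious factors of $r$ or $\log n$ would creep in and destroy the $\tfrac12$ contraction.
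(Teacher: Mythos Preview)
Your proposal is correct and follows essentially the same approach as the paper's proof: fix one row (or column), express it as a sum of independent mean-zero vectors $\xi_{ij}Z_{ij}\,e_a^\top P_T(e_ie_j^\top)$, bound the range and variance using the structure of $P_T(e_ie_j^\top)$ together with the sampling lower bound $p_{ij}\ge c_0(\mu_i+\nu_j)r\log n/n$, apply (vector/matrix) Bernstein, and union bound over the $2n$ rows and columns. The only cosmetic difference is that the paper works column-by-column and organizes the case analysis as $j=b$ versus $j\neq b$ when bounding $\|P_T(e_ie_j^\top)e_b\|_2$, whereas your decomposition $e_a^\top P_T(e_ie_j^\top)=\langle P_Ue_a,e_i\rangle(P_{V^\perp}e_j)^\top+\mathbf 1\{a=i\}(P_Ve_j)^\top$ handles the same split more compactly via the indicator; in particular your two summands are orthogonal, so the factor of $2$ in your squared-norm bound is unnecessary (though harmless), and the identity $\sum_j\|P_Ve_j\|_2^2=r$ you flag in point (ii) is not actually needed---the variance telescopes already through $\sum_i\langle P_Ue_a,e_i\rangle^2=\mu_ar/n$ and the row-norm bound $\sum_jZ_{aj}^2\le\|Z\|_{\mu(\infty,2)}^2\mu_ar/n$.
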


\begin{lemma}
\label{lem:muinf} Suppose $Z$ is a fixed $n\times n$ matrix. If
$p_{ij}\ge \min\{c_{0}\frac{(\mu_{i}+\nu_{j})r}{n}\log n,1\}$ for all $i,j$
and $c_{0}$ sufficiently large, then w.h.p.
\[
\left\Vert \left(P_{T}R_{\Omega}-P_{T}\right)Z\right\Vert _{\mu(\infty)}\le\frac{1}{2}\left\Vert Z\right\Vert _{\mu(\infty)}.
\]

\end{lemma}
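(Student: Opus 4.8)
The plan is to run the standard Bernstein-inequality argument for golfing-scheme lemmas (mirroring the proof of Lemma~\ref{lem:muinf2}), carrying the leverage-score weights through every estimate. Since $(P_T R_\Omega - P_T)Z = P_T(R_\Omega - I)Z$, it suffices to show, for each fixed pair of indices $(a,b)$, that
\[
L_{ab} := \sqrt{\tfrac{n}{\mu_a r}}\,\sqrt{\tfrac{n}{\nu_b r}}\;\bigl|(P_T(R_\Omega - I)Z)_{ab}\bigr| \;\le\; \tfrac12\,\|Z\|_{\mu(\infty)}
\]
with probability at least $1 - n^{-22}$; a union bound over the $n^2$ choices of $(a,b)$ then gives the claimed bound w.h.p. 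Using self-adjointness of $P_T$ and expanding $R_\Omega - I = \sum_{ij}(\tfrac{\delta_{ij}}{p_{ij}}-1)\langle e_ie_j^\top,\cdot\rangle\,e_ie_j^\top$, write
\[
(P_T(R_\Omega - I)Z)_{ab} = \sum_{i,j}\Bigl(\tfrac{\delta_{ij}}{p_{ij}}-1\Bigr)Z_{ij}\,\bigl(P_T(e_ae_b^\top)\bigr)_{ij},
\]
which is a sum of independent mean-zero random variables; terms with $p_{ij}=1$ vanish, so on the remaining terms we always have $p_{ij}\ge c_0\tfrac{(\mu_i+\nu_j)r}{n}\log n$.

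Next I would assemble the two quantities Bernstein needs. For the per-term magnitude, substitute the explicit identity $(P_T(e_ae_b^\top))_{ij} = (UU^\top)_{ia}\delta_{jb} + \delta_{ia}(VV^\top)_{bj} - (UU^\top)_{ia}(VV^\top)_{bj}$, the Cauchy--Schwarz bounds $|(UU^\top)_{ia}|\le \tfrac rn\sqrt{\mu_i\mu_a}$ and $|(VV^\top)_{bj}|\le \tfrac rn\sqrt{\nu_b\nu_j}$, the inequality $|Z_{ij}|\le \|Z\|_{\mu(\infty)}\tfrac rn\sqrt{\mu_i\nu_j}$, the lower bound on $p_{ij}$, and AM--GM $\sqrt{\mu_i\nu_j}\le\tfrac12(\mu_i+\nu_j)$ together with the trivial ceiling $\mu_i,\nu_j\le n/r$; a term-by-term check shows each summand, including the weight $\sqrt{n/(\mu_a r)}\sqrt{n/(\nu_b r)}$, is at most $c\,\|Z\|_{\mu(\infty)}/(c_0\log n)$ in absolute value. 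For the variance $\sum_{ij}\E[(\,\cdot\,)^2]$, use the same substitutions, now additionally invoking the Parseval-type identities $\sum_i(UU^\top)_{ia}^2 = (UU^\top)_{aa} = \tfrac{\mu_ar}{n}$ and $\sum_j(VV^\top)_{bj}^2 = \tfrac{\nu_br}{n}$ to carry out the double sum, which gives $\sum_{ij}\E[(\,\cdot\,)^2] \le c'\,\|Z\|_{\mu(\infty)}^2/(c_0\log n)$. Plugging these into Bernstein's inequality at deviation level $t=\tfrac12\|Z\|_{\mu(\infty)}$ produces a failure probability at most $2\exp(-c_0\log n/c'')$, which is below $n^{-22}$ once $c_0$ is a sufficiently large universal constant.

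The main obstacle is the bookkeeping in the magnitude and variance estimates: one has to check that, for each of the three pieces of $P_T(e_ae_b^\top)$, the weight factors $\sqrt{n/(\mu_a r)}$ and $\sqrt{n/(\nu_b r)}$ precisely cancel the leverage-score factors generated by the bounds on $(UU^\top)_{ia}$, $(VV^\top)_{bj}$ and $Z_{ij}$, so that the only surviving gain is the $1/(c_0\log n)$ coming from the sampling rate. The borderline case is the cross term $(UU^\top)_{ia}(VV^\top)_{bj}$: after the cancellations a residual factor $\tfrac rn\sqrt{\mu_i\nu_j}$ remains under the sum, and it is precisely here that the crude bound $\mu_i,\nu_j\le n/r$ must be used to close the estimate. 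Everything else is routine.
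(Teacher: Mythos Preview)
Your proposal is correct and follows essentially the same route as the paper's own proof: fix $(a,b)$, expand $(P_T(R_\Omega-I)Z)_{ab}$ as a sum of independent mean-zero scalars, bound $\langle e_ie_j^\top,P_T(e_ae_b^\top)\rangle$ via the Cauchy--Schwarz estimates on $(UU^\top)_{ia}$ and $(VV^\top)_{bj}$, then apply scalar Bernstein and a union bound. The only cosmetic difference is that the paper organizes the estimate as a four-case analysis on whether $i=a$ and/or $j=b$, whereas you split $P_T(e_ae_b^\top)$ into its three additive pieces; the bookkeeping and the use of $\mu_i,\nu_j\le n/r$ on the cross term are identical.
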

We prove Lemmas~\ref{lem:op_mu}--\ref{lem:muinf} in Section~\ref{sec:proof_tech}.
Equipped with the three lemmas above, we are now ready to validate
that $Y$ satisfies Condition 2 in Proposition~\ref{prop:opt_cond}.

\paragraph{Validating Condition 2(a):}

Set $\Delta_{k}=UV^{\top}-P_{T}(W_{k})$ for $k=1,\ldots,k_{0}$.
By definition of $W_{k}$, we have 
\begin{align}
\Delta_{k} & =\left(P_{T}-P_{T}R_{\Omega_{k}}P_{T}\right)\Delta_{k-1}.\label{eq:recursion}
\end{align}
Note that $\Omega_{k}$ is independent of $\Delta_{k-1}$ and $q_{ij}\ge p_{ij}/k_{0}\ge c_{0}'(\mu_{i}+\nu_{j})r\log(n)/n$
under the condition in Theorem~\ref{thm:random}. Applying Lemma~\ref{lem:op}
with $\Omega$ replaced by $\Omega_{k}$ , we obtain that w.h.p.
\[
\left\Vert \Delta_{k}\right\Vert _{F}\le\left\Vert P_{T}-P_{T}R_{\Omega_{k}}P_{T}\right\Vert \left\Vert \Delta_{k-1}\right\Vert _{F}\le\frac{1}{2}\left\Vert \Delta_{k-1}\right\Vert _{F}.
\]
Applying the above inequality recursively with $k=k_{0,}k_{0}-1,\ldots,1$
gives
\[
\left\Vert P_{T}(Y)-UV^{\top}\right\Vert _{F}=\left\Vert \Delta_{k_{0}}\right\Vert _{F}\le\left(\frac{1}{2}\right)^{k_{0}}\left\Vert UV^{\top}\right\Vert _{F}\le\frac{1}{4n^{6}}\cdot\sqrt{r}\le\frac{1}{4n^{5}}.
\]

\paragraph{Validating Condition 2(b): }

By definition, $Y$ can be rewritten as $Y=\sum_{k=1}^{k_{0}}R_{\Omega_{k}}P_{T}\Delta_{k-1}.$
It follows that 
\[
\left\Vert P_{T^{\bot}}(Y)\right\Vert =\left\Vert P_{T^{\bot}}\sum_{k=1}^{k_{0}}\left(R_{\Omega_{k}}P_{T}-P_{T}\right)\Delta_{k-1}\right\Vert \le\sum_{k=1}^{k_{0}}\left\Vert \left(R_{\Omega_{k}}-I\right)\Delta_{k-1}\right\Vert .
\]
We apply Lemma~\ref{lem:op_mu} with $\Omega$ replaced by $\Omega_{k}$
to each summand in the last RHS to obtain w.h.p. 
\begin{align}
\left\Vert P_{T^{\bot}}(Y)\right\Vert  & \le\frac{c}{\sqrt{c_{0}}}\sum_{k=1}^{k_{0}}\left\Vert \Delta_{k-1}\right\Vert _{\mu(\infty)}+\frac{c}{\sqrt{c_{0}}}\sum_{k=1}^{k_{0}}\left\Vert \Delta_{k-1}\right\Vert _{\mu(\infty,2)}.
\end{align}
We bound each summand in the last RHS. Applying $(k-1)$ times~(\ref{eq:recursion})
and Lemma~\ref{lem:muinf} (with $\Omega$ replaced by $\Omega_{k}$),
we have w.h.p.
\begin{align*}
\left\Vert \Delta_{k-1}\right\Vert _{\mu(\infty)} & =\left\Vert \left(P_{T}-P_{T}R_{\Omega_{k-1}}P_{T}\right)\Delta_{k-2}\right\Vert _{\mu(\infty)}\le\left(\frac{1}{2}\right)^{k-1}\left\Vert UV^{\top}\right\Vert _{\mu(\infty)}.
\end{align*}
for each $k$. Similarly, repeatedly applying~(\ref{eq:recursion}),
Lemma~\ref{lem:muinf2} and the inequality we just proved above, we
obtain w.h.p.
\begin{align}
 & \left\Vert \Delta_{k-1}\right\Vert _{\mu(\infty,2)}\\
= & \left\Vert \left(P_{T}-P_{T}R_{\Omega_{k-1}}P_{T}\right)\Delta_{k-2}\right\Vert _{\mu(\infty,2)}\\
\le & \frac{1}{2}\left\Vert \Delta_{k-2}\right\Vert _{\mu(\infty)}+\frac{1}{2}\left\Vert \Delta_{k-2}\right\Vert _{\mu(\infty,2)}\\
\le & \left(\frac{1}{2}\right)^{k-1}\left\Vert UV^{\top}\right\Vert _{\mu(\infty)}+\frac{1}{2}\left\Vert \Delta_{k-2}\right\Vert _{\mu(\infty,2)}\\
\le & k\left(\frac{1}{2}\right)^{k-1}\left\Vert UV^{\top}\right\Vert _{\mu(\infty)}+\left(\frac{1}{2}\right)^{k-1}\left\Vert UV\right\Vert _{\mu(\infty,2)}.
\end{align}
It follows that w.h.p. 
\begin{align}
\left\Vert P_{T^{\bot}}(Y)\right\Vert  & \le\frac{c}{\sqrt{c_{0}}}\sum_{k=1}^{k_{0}}(k+1)\left(\frac{1}{2}\right)^{k-1}\left\Vert UV^{\top}\right\Vert _{\mu(\infty)}+\frac{c}{\sqrt{c_{0}}}\sum_{k=1}^{k_{0}}\left(\frac{1}{2}\right)^{k-1}\left\Vert UV^{\top}\right\Vert _{\mu(\infty,2)}\\
 & \le\frac{6c}{\sqrt{c_{0}}}\left\Vert UV^{\top}\right\Vert _{\mu(\infty)}+\frac{2c}{\sqrt{c_{0}}}\left\Vert UV^{\top}\right\Vert _{\mu(\infty,2)}.
\end{align}
Note that for all $(i,j)$, we have $\left|\left(UV^{\top}\right)_{ij}\right|=\left|e_{i}^{\top}UV^{\top}e_{j}\right|\le\sqrt{\frac{\mu_{i}r}{n}}\sqrt{\frac{\nu_{j}r}{n}}$,
$\left\Vert e_{i}^{\top}UV^{\top}\right\Vert _{2}=\sqrt{\frac{\mu_{i}r}{n}}$
and $\left\Vert UV^{\top}e_{j}\right\Vert _{2}=\sqrt{\frac{\nu_{j}r}{n}}$.
Hence $\left\Vert UV^{\top}\right\Vert _{\mu(\infty)}\le1$ and $\left\Vert UV^{\top}\right\Vert _{\mu(\infty,2)}=1$.
We conclude that 
\[
\left\Vert P_{T^{\bot}}(Y)\right\Vert \le\frac{6c}{\sqrt{c_{0}}}+\frac{2c}{\sqrt{c_{0}}}\le\frac{1}{2}
\]
provided that the constant $c_{0}$ in Theorem~\ref{thm:random} is
sufficiently large. This completes the proof of Theorem~\ref{thm:random}.

\subsection{Proof of Optimality Condition (Proposition~\ref{prop:opt_cond})\label{sec:proof_opt_cond}}
\begin{proof}
Consider any feasible solution $X$ to~(\ref{eq:method}) with $P_{\Omega}(X)=P_{\Omega}(M)$.
Let $G$ be an $n\times n$ matrix which satisfies $\left\Vert P_{T^{\bot}}G\right\Vert =1$,
and $\left\langle P_{T^{\bot}}G,P_{T^{\bot}}(X-M)\right\rangle =\left\Vert P_{T^{\bot}}(X-M)\right\Vert _{*}$.
Such $G$ always exists by duality between the nuclear norm and spectral
norm. Because $UV^{\top}+P_{T^{\bot}}G$ is a sub-gradient of the
function $f(Z)=\left\Vert Z\right\Vert _{*}$ at $Z=M$, we have 
\begin{equation}
\left\Vert X\right\Vert _{*}-\left\Vert M\right\Vert _{*} 
\ge \left\langle UV^{\top}+P_{T^{\bot}}G,X-M\right\rangle .
\end{equation}
But $\left\langle Y,X-M\right\rangle =\left\langle P_{\Omega}(Y),P_{\Omega}(X-M)\right\rangle =0$
since $P_{\Omega}(Y)=Y$. It follows that 
\begin{align*}
\left\Vert X\right\Vert _{*}-\left\Vert M\right\Vert _{*} & \ge\left\langle UV^{\top}+P_{T^{\bot}}G-Y,X-M\right\rangle \\
 & =\left\Vert P_{T^{\bot}}(X-M)\right\Vert _{*}+\left\langle UV^{\top}-P_{T}Y,X-M\right\rangle -\left\langle P_{T^{\bot}}Y,X-M\right\rangle \\
 & \ge\left\Vert P_{T^{\bot}}(X-M)\right\Vert _{*}-\left\Vert UV^{\top}-P_{T}Y\right\Vert _{F}\left\Vert P_{T}(X-M)\right\Vert _{F}-\left\Vert P_{T^{\bot}}Y\right\Vert \left\Vert P_{T^{\bot}}(X-M)\right\Vert _{*}\\
 & \ge\frac{1}{2}\left\Vert P_{T^{\bot}}(X-M)\right\Vert _{*}-\frac{1}{4n^{5}}\left\Vert P_{T}(X-M)\right\Vert _{F},
\end{align*}
where in the last inequality we use conditions 1 and 2 in the proposition.
Using Lemma~\ref{lem:mini_lemma} below, we obtain
\begin{align*}
\left\Vert X\right\Vert _{*}-\left\Vert M\right\Vert _{*} & \ge\frac{1}{2}\left\Vert P_{T^{\bot}}(X-M)\right\Vert _{*}-\frac{1}{4n^{5}}\cdot\sqrt{2}n^{5}\left\Vert P_{T^{\bot}}(X-M)\right\Vert _{*}>\frac{1}{8}\left\Vert P_{T^{\bot}}(X-M)\right\Vert _{*}.
\end{align*}
The RHS is strictly positive for all $X$ with $P_{\Omega}(X-M)=0$
and $X\neq M$. Otherwise we must have $P_{T}(X-M)=X-M$ and $P_{T}P_{\Omega}P_{T}(X-M)=0$,
contradicting the assumption $\left\Vert P_{T}R_{\Omega}P_{T}-P_{T}\right\Vert _{op}\le\frac{1}{2}$.
This proves that $M$ is the unique optimum.\end{proof}
\begin{lemma}
\label{lem:mini_lemma}If $p_{ij}\ge\frac{1}{n^{10}}$ for all $(i,j)$
and $\left\Vert P_{T}R_{\Omega}P_{T}-P_{T}\right\Vert _{op}\le\frac{1}{2}$,
then we have 
\begin{equation}
\left\Vert P_{T}Z\right\Vert _{F}\le\sqrt{2}n^{5}\left\Vert P_{T^{\bot}}(Z)\right\Vert _{*},\forall Z\in\{Z':P_{\Omega}(Z')=0\}.
\end{equation}
\end{lemma}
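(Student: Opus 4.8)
\textbf{Proof proposal for Lemma~\ref{lem:mini_lemma}.}
The plan is a short, direct computation that extracts the inequality from the hypothesis $\left\Vert P_{T}R_{\Omega}P_{T}-P_{T}\right\Vert _{op}\le\frac12$ via a \emph{quadratic form} estimate rather than an operator-norm one; this is precisely what yields the sharp factor $n^{5}$ instead of $n^{10}$.

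First, fix $Z$ with $P_{\Omega}(Z)=0$ and record the elementary consequence of this constraint: decomposing $Z=P_{T}(Z)+P_{T^{\bot}}(Z)$ and applying $P_{\Omega}$, one gets $P_{\Omega}P_{T}(Z)=-P_{\Omega}P_{T^{\bot}}(Z)$, i.e.\ $(P_{T}Z)_{ij}=-(P_{T^{\bot}}Z)_{ij}$ for every $(i,j)\in\Omega$. Next, set $W:=P_{T}(Z)$, so that $P_{T}W=W$. Since $P_{T}$ is self-adjoint and idempotent and $R_{\Omega}$ is self-adjoint, one has $\left\langle W,\left(P_{T}R_{\Omega}P_{T}-P_{T}\right)W\right\rangle =\left\langle W,R_{\Omega}W\right\rangle -\left\Vert W\right\Vert _{F}^{2}$, while the hypothesis gives the bound $\left|\left\langle W,\left(P_{T}R_{\Omega}P_{T}-P_{T}\right)W\right\rangle\right|\le\left\Vert P_{T}R_{\Omega}P_{T}-P_{T}\right\Vert _{op}\left\Vert W\right\Vert _{F}^{2}\le\tfrac12\left\Vert W\right\Vert _{F}^{2}$; hence $\left\langle W,R_{\Omega}W\right\rangle \ge\tfrac12\left\Vert W\right\Vert _{F}^{2}$.

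Now I would expand the left-hand side using the definition of $R_{\Omega}$, namely $\left\langle W,R_{\Omega}W\right\rangle =\sum_{(i,j)\in\Omega}W_{ij}^{2}/p_{ij}$, substitute $W_{ij}=-(P_{T^{\bot}}Z)_{ij}$ on $\Omega$ from the first step, and then invoke $p_{ij}\ge n^{-10}$. This chains into
\[
\tfrac12\left\Vert P_{T}Z\right\Vert _{F}^{2}\le\sum_{(i,j)\in\Omega}\frac{(P_{T^{\bot}}Z)_{ij}^{2}}{p_{ij}}\le n^{10}\sum_{(i,j)\in\Omega}(P_{T^{\bot}}Z)_{ij}^{2}\le n^{10}\left\Vert P_{T^{\bot}}Z\right\Vert _{F}^{2},
\]
and rearranging gives $\left\Vert P_{T}Z\right\Vert _{F}\le\sqrt{2}\,n^{5}\left\Vert P_{T^{\bot}}Z\right\Vert _{F}$. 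The claimed bound then follows from $\left\Vert P_{T^{\bot}}Z\right\Vert _{F}\le\left\Vert P_{T^{\bot}}Z\right\Vert _{*}$.

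The only subtlety — and the step that must be done the right way — is to route the argument through the scalar $\left\langle W,R_{\Omega}W\right\rangle$. If one instead bounded $\left\Vert R_{\Omega}(P_{T}Z)\right\Vert _{F}$ directly using the operator-norm hypothesis together with $p_{ij}\ge n^{-10}$, one would only obtain $\left\Vert P_{T}Z\right\Vert _{F}\lesssim n^{10}\left\Vert P_{T^{\bot}}Z\right\Vert _{*}$, which is too weak for the optimality argument of Proposition~\ref{prop:opt_cond} (where one needs $\tfrac{1}{4n^{5}}\cdot\sqrt{2}n^{5}=\tfrac{\sqrt{2}}{4}<\tfrac12$). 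Everything else is bookkeeping: checking self-adjointness/idempotence of $P_{T}$ and self-adjointness and positive semidefiniteness of $R_{\Omega}$, all of which are standard.
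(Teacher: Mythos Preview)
Your argument is correct and is essentially the same as the paper's: the paper introduces the operator $R_{\Omega}^{1/2}$ and writes $\langle W,R_{\Omega}W\rangle$ as $\|R_{\Omega}^{1/2}P_{T}(Z)\|_{F}^{2}$, but this is precisely your quadratic-form step, and your substitution $(P_{T}Z)_{ij}=-(P_{T^{\bot}}Z)_{ij}$ on $\Omega$ is exactly the identity $R_{\Omega}^{1/2}P_{T}(Z)=-R_{\Omega}^{1/2}P_{T^{\bot}}(Z)$ that the paper uses. The two proofs differ only in notation.
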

\begin{proof}
Define the operator $R_{\Omega}^{1/2}:\mathbb{R}^{n\times n}\mapsto\mathbb{R}^{n\times n}$
by 
\[
R_{\Omega}^{1/2}(Z):=\sum_{i,j}\frac{1}{\sqrt{p_{ij}}}\delta_{ij}\left\langle e_{i}e_{j}^{\top},Z\right\rangle e_{i}e_{j}^{\top}.
\]
Note that $R_{\Omega}^{1/2}$ is self-adjoint and satisfies $R_{\Omega}^{1/2}R_{\Omega}^{1/2}=R_{\Omega}$.
Hence we have 
\begin{align*}
\left\Vert R_{\Omega}^{1/2}P_{T}(Z)\right\Vert _{F} & =\sqrt{\left\langle P_{T}R_{\Omega}P_{T}Z,P_{T}Z\right\rangle }\\
 & =\sqrt{\left\langle \left(P_{T}R_{\Omega}P_{T}-P_{T}\right)Z,P_{T}(Z)\right\rangle +\left\langle P_{T}(Z),P_{T}(Z)\right\rangle }\\
 & \ge\sqrt{\left\Vert P_{T}(Z)\right\Vert _{F}^{2}-\left\Vert P_{T}R_{\Omega}P_{T}-P_{T}\right\Vert \left\Vert P_{T}(Z)\right\Vert _{F}^{2}}\\
 & \ge\frac{1}{\sqrt{2}}\left\Vert P_{T}(Z)\right\Vert _{F},
\end{align*}
where the last inequality follows from the assumption $\left\Vert P_{T}R_{\Omega}P_{T}-P_{T}\right\Vert _{op}\le\frac{1}{2}$.
On the other hand, $P_{\Omega}(Z)=0$ implies $R_{\Omega}^{1/2}(Z)=0$
and thus 
\[
\left\Vert R_{\Omega}^{1/2}P_{T}(Z)\right\Vert _{F}=\left\Vert R_{\Omega}^{1/2}P_{T^{\bot}}(Z)\right\Vert _{F}\le\left(\max_{i,j}\frac{1}{\sqrt{p_{ij}}}\right)\left\Vert P_{T^{\bot}}(Z)\right\Vert _{F}\le n^{5}\left\Vert P_{T^{\bot}}(Z)\right\Vert _{F}.
\]
Combining the last two display equations gives
\[
\left\Vert P_{T}(Z)\right\Vert _{F}\le\sqrt{2}n^{5}\left\Vert P_{T^{\bot}}(Z)\right\Vert _{F}\le\sqrt{2}n^{5}\left\Vert P_{T^{\bot}}(Z)\right\Vert _{*}.
\]

\end{proof}

\subsection{Proof of Technical Lemmas\label{sec:proof_tech}}

We prove the four technical lemmas that are used in the proof of our
main theorem. The proofs use the matrix Bernstein inequality given
as Theorem~\ref{lem:matrix_bernstein} in Section~\ref{sec:bernstein}.
We also make frequent use of the following facts: for all $i$ and
$j$, we have $\max\left\{ \frac{\mu_{i}r}{n},\frac{\nu_{j}r}{n}\right\} \le1$
and 
\begin{equation}
\frac{(\mu_{i}+\nu_{j})r}{n}\ge\left\Vert P_{T}(e_{i}e_{j}^{\top})\right\Vert _{F}^{2}.\label{eq:basic_inequality}
\end{equation}
We also use the shorthand $a\wedge b:=\min\{a,b\}.$

\subsubsection{Proof of Lemma~\ref{lem:op}}

For any matrix $Z$, we can write 
\[
(P_{T}R_{\Omega}P_{T}-P_{T})(Z)=\sum_{i,j}\left(\frac{1}{p_{ij}}\delta_{ij}-1\right)\left\langle e_{i}e_{j}^{\top},P_{T}(Z)\right\rangle P_{T}(e_{i}e_{j}^{\top})=:\sum_{i,j}\mathcal{S}_{ij}(Z).
\]
Note that $\mathbb{E}\left[\mathcal{S}_{ij}\right]=0$ and $\mathcal{S}_{ij}$'s
are independent of each other. For all $Z$ and $(i,j)$, we have
$\mathcal{S}_{ij}=0$ if $p_{ij}=1$. On the other hand, when $p_{ij}\ge c_{0}\frac{(\mu_{i}+\nu_{j})r\log n}{n}$,
then it follows from~(\ref{eq:basic_inequality}) that 
\[
\left\Vert \mathcal{S}_{ij}(Z)\right\Vert _{F}\le\frac{1}{p_{ij}}\left\Vert P_{T}(e_{i}e_{j}^{\top})\right\Vert _{F}^{2}\left\Vert Z\right\Vert _{F}\le\max_{i,j}\left\{ \frac{1}{p_{ij}}\frac{(\mu_{i}+\nu_{j})r}{n}\right\} \left\Vert Z\right\Vert _{F}\le\frac{1}{c_{0}\log n}\left\Vert Z\right\Vert _{F}.
\]
Putting together, we have that $\left\Vert \mathcal{S}_{ij}\right\Vert \le\frac{1}{c_{0}\log n}$
under the condition of the lemma. On the other hand, we have 
\begin{align*}
\left\Vert \sum_{i,j}\mathbb{E}\left[\mathcal{S}_{ij}^{2}(Z)\right]\right\Vert _{F} & =\left\Vert \sum_{i,j}\mathbb{E}\left[\left(\frac{1}{p_{ij}}\delta_{ij}-1\right)^{2}\left\langle e_{i}e_{j}^{\top},P_{T}(Z)\right\rangle \left\langle e_{i}e_{j}^{\top},P_{T}(e_{i}e_{j}^{\top})\right\rangle P_{T}(e_{i}e_{j}^{\top})\right]\right\Vert _{F}\\
 & \le\left(\max_{i,j}\frac{1-p_{ij}}{p_{ij}}\left\Vert P_{T}(e_{i}e_{j}^{\top})\right\Vert _{F}^{2}\right)\left\Vert \sum_{i,j}\left\langle e_{i}e_{j}^{\top},P_{T}(Z)\right\rangle P_{T}(e_{i}e_{j}^{\top})\right\Vert _{F}\\
 & \le\max_{i,j}\left\{ \frac{1-p_{ij}}{p_{ij}}\frac{(\mu_{i}+\nu_{j})r}{n}\right\} \left\Vert P_{T}(Z)\right\Vert _{F},
\end{align*}
This implies $\left\Vert \sum_{i,j}\mathbb{E}\left[\mathcal{S}_{ij}^{2}\right]\right\Vert \le\frac{1}{c_{0}\log n}$
under the condition of the lemma. Applying the Matrix Bernstein inequality
(Theorem~\ref{lem:matrix_bernstein}), we obtain $\left\Vert P_{T}R_{\Omega}P_{T}-P_{T}\right\Vert =\left\Vert \sum_{i,j}\mathcal{S}_{ij}\right\Vert \le\frac{1}{2}$
w.h.p. for sufficiently large $c_{0}$.

\subsubsection{Proof of Lemma~\ref{lem:op_mu}}

We can write $\left(R_{\Omega}-I\right)Z$ as the sum of independent
matrices:
\[
\left(R_{\Omega}-I\right)Z=\sum_{i,j}\left(\frac{1}{p_{ij}}\delta_{ij}-1\right)Z_{ij}e_{i}e_{j}^{\top}=:\sum_{i,j}S_{ij}.
\]
Note that $\mathbb{E}[S_{ij}]=0$. For all $(i,j)$, we have $S_{ij}=0$
if $p_{ij}=1$, and 
\[
\left\Vert S_{ij}\right\Vert \le\frac{1}{p_{ij}}\left|Z_{ij}\right|.
\]
Moreover, 
\begin{align*}
\left\Vert \mathbb{E}\left[\sum_{i,j}S_{ij}^{\top}S_{ij}\right]\right\Vert  & =\left\Vert \sum_{i,j}Z_{ij}^{2}e_{i}e_{j}^{\top}e_{j}e_{i}^{\top}\mathbb{E}\left(\frac{1}{p_{ij}}\delta_{ij}-1\right)^{2}\right\Vert =\max_{i}\sum_{j=1}^{n}\frac{1-p_{ij}}{p_{ij}}Z_{ij}^{2}.
\end{align*}
The quantity $\left\Vert \mathbb{E}\left[\sum_{i,j}S_{ij}S_{ij}^{\top}\right]\right\Vert $
is bounded by $\max_{j}\sum_{i=1}^{n}(1-p_{ij})Z_{ij}^{2}/p_{ij}$
in a similar way. The first part of the lemma then follows from the
matrix Bernstein inequality (Theorem~\ref{lem:matrix_bernstein}). If $p_{ij}\ge1\wedge\frac{c_{0}(\mu_{i}+\nu_{j})r\log n}{n}\ge1\wedge2c_{0}\sqrt{\frac{\mu_{i}r}{n}\cdot\frac{\nu_{j}r}{n}}\log n$,
we have for all $i$ and $j$,
\begin{align*}
\left\Vert S_{ij}\right\Vert \log n
&\le\left(1-\mathbb{I}(p_{ij}=1)\right)\frac{1}{p_{ij}}\left|Z_{ij}\right|\log n
\le\frac{1}{c_{0}}\left\Vert Z\right\Vert _{\mu(\infty)},\\
\sum_{i=1}^{n}\frac{1-p_{ij}}{p_{ij}}Z_{ij}^{2}\log n
&\le\frac{1}{c_{0}}\left\Vert Z\right\Vert _{\mu(\infty,2)}^{2},\\
\sum_{j=1}^{n}\frac{1-p_{ij}}{p_{ij}}Z_{ij}^{2}\log n
&\le\frac{1}{c_{0}}\left\Vert Z\right\Vert _{\mu(\infty,2)}^{2}.
\end{align*} 
The second part of the lemma follows again from applying the matrix
Bernstein inequality.

\subsubsection{Proof of Lemma~\ref{lem:muinf2}}

Let $X=\left(P_{T}R_{\Omega}-P_{T}\right)Z$. By definition we have
$$\left\Vert X\right\Vert _{\mu(\infty,2)}=\max_{a,b}\left\{ \sqrt{\frac{n}{\mu_{a}r}}\left\Vert X_{a\cdot}\right\Vert _{2},\sqrt{\frac{n}{\nu_{b}r}}\left\Vert X_{\cdot b}\right\Vert _{2}\right\}, $$
where $X_{a\cdot}$ and $X_{\cdot b}$ are the $a$-th row and $b$-th
column of of $X$, respectively. We bound each term in the maximum.
Observe that $\sqrt{\frac{n}{\nu_{b}r}}X_{\cdot b}$ can be written
as the sum of independent column vectors:
\begin{align*}
\sqrt{\frac{n}{\nu_{b}r}}X_{\cdot b} & =\sum_{i,j}\left(\frac{1}{p_{ij}}\delta_{ij}-1\right)Z_{ij}\left(P_{T}(e_{i}e_{j}^{\top})e_{b}\right)\sqrt{\frac{n}{\nu_{b}r}}=:\sum_{i,j}S_{ij},
\end{align*}
where $\mathbb{E}\left[S_{ij}\right]=0$. To control $\left\Vert S_{ij}\right\Vert _{2}$
and $\left\Vert \mathbb{E}\left[\sum_{i,j}S_{ij}^{\top}S_{ij}\right]\right\Vert $,
we first need a bound for $\left\Vert P_{T}(e_{i}e_{j}^{\top})e_{b}\right\Vert _{2}$.
If $j=b$, we have
\begin{equation}
\left\Vert P_{T}(e_{i}e_{j}^{\top})e_{b}\right\Vert _{2}=\left\Vert UU^{\top}e_{i}+(I-UU^{\top})e_{i}\left\Vert V^{\top}e_{b}\right\Vert _{2}^{2}\right\Vert _{2}\le\sqrt{\frac{\mu_{i}r}{n}}+\sqrt{\frac{\nu_{b}r}{n}},\label{eq:bound31}
\end{equation}
where we use the triangle inequality and the definition of $\mu_{i}$
and $\nu_{b}$ . Similarly, if $j\neq b$, we have 
\begin{equation}
\left\Vert P_{T}(e_{i}e_{j}^{\top})e_{b}\right\Vert _{2}=\left\Vert (I-UU^{\top})e_{i}e_{j}^{\top}VV^{\top}e_{b}\right\Vert _{2}\le\left|e_{j}^{\top}VV^{\top}e_{b}\right|.\label{eq:bound32}
\end{equation}
Now note that $\left\Vert S_{ij}\right\Vert _{2}\le\left(1-\mathbb{I}(p_{ij}=1)\right)\frac{1}{p_{ij}}\left|Z_{ij}\right|\sqrt{\frac{n}{\nu_{b}r}}\left\Vert P_{T}(e_{i}e_{j}^{\top})e_{b}\right\Vert _{2}.$
Using the bounds~(\ref{eq:bound31}) and~(\ref{eq:bound32}), we obtain
that for $j=b$, 
\begin{align*}
\left\Vert S_{ij}\right\Vert _{2}&\le\left(1-\mathbb{I}(p_{ij}=1)\right)\frac{1}{p_{ib}}\left|Z_{ib}\right|\sqrt{\frac{n}{\nu_{b}r}}\cdot\left(\sqrt{\frac{\mu_{i}r}{n}}+\frac{\nu_{b}r}{n}\right)\\
&\le\frac{2}{c_{0}\sqrt{\frac{\mu_{i}r\nu_{b}r}{n^{2}}}\log n}\left|Z_{ib}\right|
\le\frac{2}{c_{0}\log n}\left\Vert Z\right\Vert _{\mu(\infty)},
\end{align*}
where we use $p_{ib}\ge1\wedge\frac{c_{0}\mu_{i}r\log n}{n}$ and
$p_{ib}\ge1\wedge c_{0}\sqrt{\frac{\mu_{i}r}{n}\frac{\nu_{b}r}{n}}\log n$
in the second inequality. For $j\neq b$, we have
\[
\left\Vert S_{ij}\right\Vert _{2}\le\left(1-\mathbb{I}(p_{ij}=1)\right)\frac{1}{p_{ij}}\left|Z_{ij}\right|\sqrt{\frac{n}{\nu_{b}r}}\cdot\sqrt{\frac{\nu_{j}r}{n}}\sqrt{\frac{\nu_{b}r}{n}}\le\frac{2}{c_{0}\log n}\left\Vert Z\right\Vert _{\mu(\infty)},
\]
where we use $p_{ij}\ge1\wedge c_{0}\sqrt{\frac{\mu_{i}r}{n}\frac{\nu_{j}r}{n}}\log n$.
We thus obtain $\left\Vert S_{ij}\right\Vert _{2}\le\frac{2}{c_{0}\log n}\left\Vert Z\right\Vert _{\mu(\infty)}$
for all $(i,j)$.

On the other hand, note that
\begin{align*}
\left|\mathbb{E}\left[{\textstyle \sum_{i,j}}S_{ij}^{\top}S_{ij}\right]\right| & =\left|{\textstyle \sum_{i,j}}\mathbb{E}\left[\left(\frac{1}{p_{ij}}\delta_{ij}-1\right){}^{2}\right]Z_{ij}^{2}\left\Vert P_{T}(e_{i}e_{j}^{\top})e_{b}\right\Vert _{2}^{2}\cdot\frac{n}{\nu_{b}r}\right|\\
 & =\left({\textstyle \sum_{j=b,i}+\sum_{j\neq b,i}}\right)\frac{1-p_{ij}}{p_{ij}}Z_{ij}^{2}\left\Vert P_{T}(e_{i}e_{j}^{\top})e_{b}\right\Vert _{2}^{2}\cdot\frac{n}{\nu_{b}r}.
\end{align*}
Applying~(\ref{eq:bound31}), we can bound the first sum by
\[
\sum_{j=b,i}\le\sum_{i}\frac{1-p_{ib}}{p_{ib}}Z_{ib}^{2}\cdot2\left(\frac{\mu_{i}r}{n}+\frac{\nu_{b}r}{n}\right)\cdot\frac{n}{\nu_{b}r}\le\frac{2}{c_{0}\log n}\frac{n}{\nu_{b}r}\left\Vert Z_{\cdot b}\right\Vert _{2}^{2}\le\frac{2}{c_{0}\log n}\left\Vert Z\right\Vert _{\mu(\infty,2)}^{2},
\]
where we use $p_{ib}\ge1\wedge\frac{c_{0}(\mu_{i}+\nu_{b})r}{n}\log n$
in the second inequality. The second sum can be bounded using~(\ref{eq:bound32}):
\begin{align*}
\sum_{j\neq b,i} & \le\sum_{j\neq b,i}\frac{1-p_{ij}}{p_{ij}}Z_{ij}^{2}\left|e_{j}^{\top}VV^{\top}e_{b}\right|^{2}\frac{n}{\nu_{b}r}\\
 & =\frac{n}{\nu_{b}r}\sum_{j\neq b}\left|e_{j}^{\top}VV^{\top}e_{b}\right|^{2}\sum_{i}\frac{1-p_{ij}}{p_{ij}}Z_{ij}^{2}\\
 & \overset{(a)}{\le}\frac{n}{\nu_{b}r}\sum_{j\neq b}\left|e_{j}^{\top}VV^{\top}e_{b}\right|^{2}\left(\frac{1}{c_{0}\log n}\sum_{i}Z_{ij}^{2}\frac{n}{\nu_{j}r}\right)\\
 & \le\left(\frac{1}{c_{0}\log n}\left\Vert Z\right\Vert _{\mu(\infty,2)}^{2}\right)\frac{n}{\nu_{b}r}\sum_{j\neq b}\left|e_{j}^{\top}VV^{\top}e_{b}\right|^{2}\\
 & \overset{(b)}{\le}\frac{1}{c_{0}\log n}\left\Vert Z\right\Vert _{\mu(\infty,2)}^{2},
\end{align*}
where we use $p_{ij}\ge1\wedge\frac{c_{0}\nu_{j}r\log n}{n}$ in $(a)$
and $\sum_{j\neq b}\left|e_{j}^{\top}VV^{\top}e_{b}\right|^{2}\le\left\Vert VV^{\top}e_{b}\right\Vert _{2}^{2}\le\frac{\nu_{b}r}{n}$
in $(b)$. Combining the bounds for the two sums, we obtain$\left\Vert \mathbb{E}\left[\sum_{i,j}S_{ij}^{\top}S_{ij}\right]\right\Vert \le\frac{3}{c_{0}\log n}\left\Vert Z\right\Vert _{\mu(\infty,2)}^{2}.$
We can bound $\left\Vert \mathbb{E}\left[\sum_{i,j}S_{ij}S_{ij}^{\top}\right]\right\Vert $
in a similar way. Applying the Matrix Bernstein inequality in Theorem
\ref{lem:matrix_bernstein}, we have w.h.p.
\[
\left\Vert \sqrt{\frac{n}{\nu_{b}r}}X_{\cdot b}\right\Vert _{2}=\left\Vert {\textstyle \sum_{i,j}}S_{ij}\right\Vert _{2}\le\frac{1}{2}\left(\left\Vert Z\right\Vert _{\mu(\infty)}+\left\Vert Z\right\Vert _{\mu(\infty,2)}\right)
\]
for $c_{0}$ sufficiently large. Similarly we can bound $\left\Vert \sqrt{\frac{n}{\mu_{a}r}}X_{a\cdot}\right\Vert _{2}$
by the same quantity. We take a union bound over all $a$ and $b$
to obtain the desired results.

\subsubsection{Proof of Lemma~\ref{lem:muinf}}

Fix a matrix index $(a,b)$ and let $w_{ab}=\sqrt{\frac{\mu_{a}r}{n}\frac{\nu_{b}r}{n}}$.
We can write
\begin{align*}
\left[\left(P_{T}R_{\Omega}-P_{T}\right)Z\right]_{ab}\sqrt{\frac{n}{\mu_{a}r}}\sqrt{\frac{n}{\nu_{b}r}} & =\sum_{i,j}\left(\frac{1}{p_{ij}}\delta_{ij}-1\right)Z_{ij}\left\langle e_{i}e_{j}^{\top},P_{T}(e_{a}e_{b}^{\top})\right\rangle \frac{1}{w_{ab}}=:\sum_{i,j}s_{ij},
\end{align*}
which is the sum of independent zero-mean variables. We first compute
the following bound:
\begin{align}
 & \left|\left\langle e_{i}e_{j}^{\top},P_{T}(e_{a}e_{b}^{\top})\right\rangle \right|\nonumber \\
= & \left|e_{i}^{\top}UU^{\top}e_{a}e_{b}^{\top}e_{j}+e_{i}^{\top}(I-UU^{\top})e_{a}e_{b}^{\top}VV^{\top}e_{j}\right|\nonumber \\
= & \begin{cases}
\left|e_{a}^{\top}UU^{\top}e_{a}+e_{a}^{\top}(I-UU^{\top})e_{a}e_{b}^{\top}VV^{\top}e_{b}\right|\le\frac{\mu_{a}r}{n}+\frac{\nu_{b}r}{n}, & i=a,j=b,\\
\left|e_{a}^{\top}(I-UU^{\top})e_{a}e_{b}^{\top}VV^{\top}e_{j}\right|\le\left|e_{b}^{\top}VV^{\top}e_{j}\right|, & i=a,j\neq b,\\
\left|e_{i}^{\top}UU^{\top}e_{a}e_{b}^{\top}(I-VV^{\top})e_{b}\right|\le\left|e_{i}^{\top}UU^{\top}e_{a}\right|, & i\neq a,j=b,\\
\left|e_{i}^{\top}UU^{\top}e_{a}e_{b}^{\top}VV^{\top}e_{j}\right|\le\left|e_{i}^{\top}UU^{\top}e_{a}\right|\left|e_{b}^{\top}VV^{\top}e_{j}\right|, & i\neq a,j\neq b,
\end{cases}\label{eq:bound4}
\end{align}
where we use the fact that the matrices $I-UU^{\top}$ and $I-VV^{\top}$
have spectral norm at most $1$. We proceed to bound $\left|s_{ij}\right|.$
Note that 
\[
\left|s_{ij}\right|\le\left(1-\mathbb{I}(p_{ij}=1)\right)\frac{1}{p_{ij}}\left|Z_{ij}\right|\left|\left\langle e_{i}e_{j}^{\top},P_{T}(e_{a}e_{b}^{\top})\right\rangle \right|\frac{1}{w_{ab}}.
\]
We distinguish four cases. When $i=a$ and $j=b$, we use~(\ref{eq:bound4})
and $p_{ab}\ge1\wedge\frac{c_{0}\left(\mu_{a}+\nu_{b}\right)r\log^{2}(n)}{n}$
to obtain $\left|s_{ij}\right|\le\left|Z_{ij}\right|/\left(w_{ij}c_{0}\log n\right)\le\left\Vert Z\right\Vert _{\mu(\infty)}/\left(c_{0}\log n\right).$
When $i=a$ and $j\neq b$, we apply~(\ref{eq:bound4}) to get 
\[
\left|s_{ij}\right|\le\left(1-\mathbb{I}(p_{ij}=1)\right)\frac{\left|Z_{aj}\right|}{p_{aj}}\cdot\sqrt{\frac{\nu_{b}r}{n}\frac{\nu_{j}r}{n}}\cdot\sqrt{\frac{n}{\mu_{a}r}\frac{n}{\nu_{b}r}}\overset{(a)}{\le}\left|Z_{aj}\right|\cdot\sqrt{\frac{n}{\mu_{a}r}\frac{n}{\nu_{j}r}}\frac{1}{c_{0}\log n}\le\frac{\left\Vert Z\right\Vert _{\mu(\infty)}}{c_{0}\log n},
\]
where $(a)$ follows from $p_{aj}\ge\min\left\{ c_{0}\frac{\nu_{j}r\log n}{n},1\right\} .$
In a similar fashion, we can show that the same bound holds when $i\neq a$
and $j=b$. When $i\neq a$ and $j\neq b$, we use~(\ref{eq:bound4})
to get
\begin{align*}
\left|s_{ij}\right|
&\le\left(1-\mathbb{I}(p_{ij}=1)\right)\frac{\left|Z_{ij}\right|}{p_{ij}}\cdot\sqrt{\frac{\mu_{i}r}{n}\frac{\mu_{a}r}{n}}\sqrt{\frac{\nu_{b}r}{n}\frac{\nu_{j}r}{n}}\cdot\sqrt{\frac{n}{\mu_{a}r}\frac{n}{\nu_{b}r}}\\
&\overset{(b)}{\le}\left|Z_{ij}\right|\cdot\sqrt{\frac{n}{\mu_{i}r}\frac{n}{\nu_{j}r}}\frac{1}{c_{0}\log n}
\le\frac{\left\Vert Z\right\Vert _{\mu(\infty)}}{c_{0}\log n},
\end{align*}
where $(b)$ follows from $p_{ij}\ge1\wedge c_{0}\sqrt{\frac{\mu_{i}r}{n}\frac{\nu_{j}r}{n}}\log n$
and $\max\left\{ \sqrt{\frac{\mu_{i}r}{n}},\sqrt{\frac{\nu_{j}r}{n}}\right\} \le1$.
We conclude that $\left|s_{ij}\right|\le\left\Vert Z\right\Vert _{\mu(\infty)}/\left(c_{0}\log n\right)$
for all $(i,j)$. 

On the other hand, note that 
\begin{align*}
\left|\mathbb{E}\left[\sum_{i,j}s_{ij}^{2}\right]\right|
&=\sum_{i,j}\mathbb{E}\left[\left(\frac{1}{p_{ij}}\delta_{ij}-1\right)^{2}\right]\frac{Z_{ij}^{2}}{w_{ab}^{2}}\left\langle e_{i}e_{j}^{\top},P_{T}(e_{a}e_{b}^{\top})\right\rangle ^{2}\\
&=\sum_{i=a,j=b}+\sum_{i=a,j\neq b}+\sum_{i\neq a,j=b}+\sum_{i\neq a,j\neq b}.
\end{align*}
We bound each of the four sums. By~(\ref{eq:bound4}) and $p_{ab}\ge1\wedge\frac{c_{0}(\mu_{a}+\nu_{b})r\log n}{n}\ge1\wedge\frac{c_{0}(\mu_{a}+\nu_{b})^{2}r^{2}\log n}{2n^{2}}$,
we have 
\[
\sum_{i=a,j=b}\le\frac{1-p_{ab}}{p_{ab}w_{ab}^{2}}Z_{ab}^{2}\left(\frac{\mu_{a}r}{n}+\frac{\nu_{b}r}{n}\right)^{2}\le\frac{2\left\Vert Z\right\Vert _{\mu(\infty)}^{2}}{c_{0}\log n}.
\]
By~(\ref{eq:bound4}) and $p_{aj}w_{ab}^{2}\ge w_{ab}^{2}\wedge\left(c_{0}w_{aj}^{2}\frac{\nu_{b}r}{n}\log n\right)$,
we have 
\[
\sum_{i=a,j\neq b}\le\sum_{,j\neq b}\frac{1-p_{aj}}{p_{aj}w_{ab}^{2}}Z_{aj}^{2}\left|e_{b}^{\top}VV^{\top}e_{j}\right|\le\frac{\left\Vert Z\right\Vert _{\mu(\infty)}^{2}}{c_{0}\log n}\cdot\frac{n}{\nu_{b}r}\sum_{j\neq b}\left|e_{b}^{\top}VV^{\top}e_{j}\right|,
\]
which implies $\sum_{i=a,j\neq b}\le\left\Vert Z\right\Vert _{\mu(\infty)}^{2}/(c_{0}\log n)$.
Similarly we can bound $\sum_{i\neq a,j=b}$ by the same quantity.
Finally, by~(\ref{eq:bound4}) and $p_{ij}\ge1\wedge\left(c_{0}\frac{\mu_{i}r}{n}\frac{\nu_{j}r}{n}\log n\right)$,
we have 
\begin{align*}
\sum_{i\neq a,j\neq b}
&\le\frac{1}{w_{ab}^{2}}\sum_{i\neq a,j\neq b}\frac{(1-p_{ij})Z_{ij}^{2}}{p_{ij}}\cdot\left|e_{i}^{\top}UU^{\top}e_{a}\right|\left|e_{b}^{\top}VV^{\top}e_{j}\right|\\
&\le\frac{\left\Vert Z\right\Vert _{\mu(\infty)}^{2}}{c_{0}\log n}\cdot\frac{1}{w_{ab}^{2}}\sum_{i\neq a}\left|e_{i}^{\top}UU^{\top}e_{a}\right|\sum_{j\neq b}\left|e_{b}^{\top}VV^{\top}e_{j}\right|,
\end{align*}
which implies $\sum_{i\neq a,j\neq b}\le\left\Vert Z\right\Vert _{\mu(\infty)}^{2}/(c_{0}\log n).$
Combining pieces, we obtain 
\[
\left|\mathbb{E}\left[{\textstyle \sum_{ij}}s_{ij}^{2}\right]\right|\le5\left\Vert Z\right\Vert _{\mu(\infty)}^{2}/(c_{0}\log n).
\]
Applying the Bernstein inequality (Theorem~\ref{lem:matrix_bernstein}),
we conclude that
\[
\left|\left[\left(P_{T}R_{\Omega}P_{T}-P_{T}\right)Z\right]_{ab}\sqrt{\frac{n}{\mu_{a}r}}\sqrt{\frac{n}{\nu_{b}r}}\right|=\left|\sum_{i,j}s_{ij}\right|\le\frac{1}{2}\left\Vert Z\right\Vert _{\mu(\infty)}
\]
w.h.p. for $c_{0}$ sufficiently large. The desired result follows
from a union bound over all $(a,b)$.

\section{Proof of Corollary~\ref{cor:row_coherent}}
\label{sec:proof_row_coherent}

Recall the setting: for each row of $ M $, we pick it and observe all its elements with some probability $ p $. We need a simple lemma. Let $J\subseteq[n]$ be the set of the indices of the row picked, and $ P_J(Z) $ be the matrix that is obtained from $ Z $ by zeroing out the rows outside $ J $. Recall that $ U\Sigma V^\top  $ is the SVD of $ M $. 
\begin{lemma}
If $\mu_i(M) = \max_{i} \frac{n}{r}\left\Vert U^{\top}e_{i}\right\Vert^2\le \mu_0$
and $p\ge c_0\frac{\mu_{0}r\log n}{n}$ for some universal constant $ c_0 $, then with probability at least $ 1-2n^{-10} $,
\[
\left\Vert U^{\top}P_{J}(U)-I_{r\times r}\right\Vert \le\frac{1}{2},
\]
where $I_{r\times r}$ is the identity matrix in $\mathbb{R}^{r\times r}$.\end{lemma}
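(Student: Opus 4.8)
The plan is to establish this as a matrix concentration statement: I will apply the matrix Bernstein inequality (Theorem~\ref{lem:matrix_bernstein}) to the deviation $U^{\top}P_{J}(U)-I_{r\times r}$, using that the row-sampling is calibrated so that $\E\bigl[U^{\top}P_{J}(U)\bigr]$ is \emph{exactly} the identity. Writing $u_{i}:=U^{\top}e_{i}\in\R^{r}$ for the $i$-th row of $U$, the inverse-probability factor $\tfrac1p$ carried by the sampling operator $P_{J}$ (the same device used for $R_{\Omega}$ in the main proof) gives $U^{\top}P_{J}(U)=\sum_{i=1}^{n}\tfrac{1}{p}\mathbb{I}(i\in J)\,u_{i}u_{i}^{\top}$, whose expectation is $\sum_{i}u_{i}u_{i}^{\top}=U^{\top}U=I_{r\times r}$ since the rows are included independently with probability $p$. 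Hence I decompose
\[
U^{\top}P_{J}(U)-I_{r\times r}=\sum_{i=1}^{n}S_{i},\qquad S_{i}:=\left(\tfrac{1}{p}\mathbb{I}(i\in J)-1\right)u_{i}u_{i}^{\top},
\]
a sum of independent, self-adjoint, zero-mean $r\times r$ matrices.

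Next I would supply the two quantities matrix Bernstein requires. For the uniform bound, $\bigl|\tfrac1p\mathbb{I}(i\in J)-1\bigr|\le\tfrac1p$ and $\|u_{i}u_{i}^{\top}\|=\|u_{i}\|_{2}^{2}\le\tfrac{\mu_{0}r}{n}$ by the incoherence hypothesis, so $\|S_{i}\|\le\tfrac1p\cdot\tfrac{\mu_{0}r}{n}\le\tfrac{1}{c_{0}\log n}$ after substituting $p\ge c_{0}\tfrac{\mu_{0}r\log n}{n}$. For the variance, $S_{i}^{2}=\left(\tfrac1p\mathbb{I}(i\in J)-1\right)^{2}\|u_{i}\|_{2}^{2}\,u_{i}u_{i}^{\top}$, so $\E[S_{i}^{2}]=\tfrac{1-p}{p}\|u_{i}\|_{2}^{2}\,u_{i}u_{i}^{\top}\preceq\tfrac{\mu_{0}r}{pn}\,u_{i}u_{i}^{\top}$; summing over $i$ and using $\sum_{i}u_{i}u_{i}^{\top}=I$ yields $\bigl\|\sum_{i}\E[S_{i}^{2}]\bigr\|\le\tfrac{\mu_{0}r}{pn}\le\tfrac{1}{c_{0}\log n}$.

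Finally I would feed these into matrix Bernstein with deviation level $t=\tfrac12$, variance proxy $\sigma^{2}\le\tfrac{1}{c_{0}\log n}$, and uniform bound $L\le\tfrac{1}{c_{0}\log n}$. The tail bound then reads $\mathbb{P}\bigl[\|\sum_{i}S_{i}\|\ge\tfrac12\bigr]\le 2r\exp\!\left(-\tfrac{t^{2}/2}{\sigma^{2}+Lt/3}\right)\le 2r\,n^{-c\,c_{0}}$ for a universal constant $c>0$; since $r\le n$, this is at most $2n^{-10}$ once $c_{0}$ is taken large enough (e.g.\ $c\,c_{0}\ge 11$). This gives $\|U^{\top}P_{J}(U)-I_{r\times r}\|\le\tfrac12$ with probability at least $1-2n^{-10}$, exactly as stated.

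I expect the only genuine subtlety to be the normalization, and it is worth flagging explicitly: the bound $\|U^{\top}P_{J}(U)-I\|\le\tfrac12$ is possible only because $P_{J}$ carries the inverse-probability weight $\tfrac1p$, making $\E[U^{\top}P_{J}(U)]=I$; were $P_{J}$ mere row selection the expectation would be $pI\approx 0$ for the relevant regime $p\approx c_{0}\mu_{0}r\log n/n\ll1$, and no such bound could hold. Once this point is fixed, the rest is a routine matrix-Bernstein computation in which the incoherence bound $\|u_{i}\|_{2}^{2}\le\mu_{0}r/n$ and the sampling rate $p\gtrsim\mu_{0}r\log n/n$ together force both the per-term norm $\|S_{i}\|$ and the variance $\bigl\|\sum_{i}\E[S_{i}^{2}]\bigr\|$ down to $O\!\bigl(1/(c_{0}\log n)\bigr)$.
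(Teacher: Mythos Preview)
Your proof is correct and follows essentially the same route as the paper: the paper writes the same decomposition $U^{\top}P_{J}(U)-I=\sum_{i}\bigl(\tfrac{1}{p}\eta_{i}-1\bigr)U^{\top}e_{i}e_{i}^{\top}U$, derives the identical bounds $\|S_{i}\|\le\mu_{0}r/(pn)$ and $\bigl\|\sum_{i}\E[S_{i}^{2}]\bigr\|\le\mu_{0}r/(pn)$, and then invokes the matrix Bernstein inequality (Theorem~\ref{lem:matrix_bernstein}). Your explicit remark about the $\tfrac{1}{p}$ normalization in $P_{J}$ is exactly the convention the paper uses (consistent with its $R_{\Omega}$), and your handling of the dimension factor $r\le n$ in the tail bound is, if anything, slightly more careful than the paper's presentation.
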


\begin{proof}
Let $\eta_{j}=\mathbb{I}(i\in J)$, where $ \mathbb{I}(\cdot) $ is the indicator function. Note that
\[
U^{\top}P_{J}(U)-I_{r\times r}=U^{\top}P_{J}(U)-U^{\top}U=\sum_{i=1}^nS_{(i)}:=\sum_{i=1}^{n}\left(\frac{1}{p}\eta_{i}-1\right)U^{\top}e_{i}e_{i}^{\top}U.
\]
Note that $\mathbb{E}\left[S_{(i)}\right]=0$, 
$
\left\Vert S_{(i)}\right\Vert \le\frac{1}{p}\left\Vert U^{\top}e_{i}\right\Vert _{2}^{2}\le\frac{\mu_{0}r}{pn},
$
and 
\begin{align*}
\left\Vert\mathbb{E}\left[\sum_{i=1}^{n}S_{(i)}S_{(i)}^{\top}\right]\right\Vert 
=\left\Vert\mathbb{E}\left[\sum_{i=1}^{n}S_{(i)}^{\top}S_{(i)}\right]\right\Vert
 & =\frac{1-p}{p}\left\Vert \sum_{i=1}^{n}U^{\top}e_{i}e_{i}^{\top}UU^{\top}e_{i}e_{i}^{\top}U\right\Vert \\
 & =\frac{1-p}{p}\left\Vert U^{\top}\left(\sum_{i=1}^{n}e_{i}e_{i}^{\top}\left\Vert U^{\top}e_{i}\right\Vert _{2}^{2}\right)U\right\Vert \\
 & \le\frac{1}{p}\left\Vert \sum_{i=1}^{n}e_{i}e_{i}^{\top}\left\Vert U^{\top}e_{i}\right\Vert _{2}^{2}\right\Vert \\
 & =\frac{1}{p}\max_{i}\left\Vert U^{\top}e_{i}\right\Vert _{2}^{2}\le\frac{\mu_{0}r}{pn}.
\end{align*}
It follows from the matrix Bernstein (Theorem~\ref{lem:matrix_bernstein}) that with probability at least $ 1-2n^{-10} $
\[
\left\Vert U^{\top}P_{J}(U)-I_{r\times r}\right\Vert \le 20\max\left\{ \frac{\mu_{0}r}{pn}\log n,\sqrt{\frac{\mu_{0}r}{pn}\log n}\right\} \le\frac{1}{2}
\]
provided that the $ c_0 $ in the statement of the lemma is sufficiently large.
\end{proof}

Note that $\left\Vert U^{\top}P_{J}(U)-I_{r\times r}\right\Vert \le\frac{1}{2}$
implies that $U^{\top}P_{J}(U)$ is invertible, which further implies $P_{J}(U)\in\mathbb{R}^{n\times r}$
has rank-$r$. The rows picked are $P_{J}(M)=P_{J}(U)\Sigma V^{\top}$, which thus have full rank-$r$ and their row space must be the same as the row space of $M$. Therefore, the leverage scores $ \{\tilde{\nu}_j \}$ of these rows are the same as the row leverage scores $ \{\nu_j(M)\} $ of $ M $. Also note that we must have $ \mu_0\ge 1 $. Sampling $ \Omega $ as in described in the corollary and applying Theorem~\ref{thm:random}, we are guaranteed to recover $ M $ exactly with probability at least $ 1-5n^{-10} $. Note that expectation of  the total number of elements we have observed is 
$$
pn + \sum_{i,j}p_{ij} = c_0\mu_0 r \log n + c_0(\mu_0rn + rn)\log^2n \le 3c_0 \mu_0 r n \log^2 n,
$$
and by Hoeffding's inequality, the actual number of observations is at most two times the expectation with probability at least $ 1-n^{-10} $ provided $ c_0 $ is sufficiently large. The corollary follows from the union bound.

\section{Proof of Theorem \ref{thm:optimal} }

We prove the theorem assuming $ $$\sum_{k=1}^{r}\frac{1}{a_{k}}=\sum_{k=1}^{r}\frac{1}{b_{k}}=r$;
extension to the general setting in the theorem statement will only
affect the pre-constant in~(\ref{eq:too_small}) by a factor of at
most $2$. For each $k\in[r]$, let $s_{k}:=\frac{2n}{a_{k}r}$, $t_{k}:=\frac{2n}{b_{k}r}$.
We assume the $s_{k}$'s and $t_{k}$'s are all integers. Under the
assumption on $a_{k}$ and $b_{k}$, we have $1\le s_{k},t_{k}\le n$
and $\sum_{k=1}^{r}s_{k}=\sum_{k=1}^{r}t_{k}=n$. Define the sets
$I_{k}:=\left\{ \sum_{l=1}^{r-1}s_{l}+i:i\in[s_{k}]\right\} $ and
$J_{k}:=\left\{ \sum_{l=1}^{r-1}t_{l}+j:j\in[t_{k}]\right\} $; note
that $\bigcup_{k=1}^{r}I_{k}=\bigcup_{k=1}^{r}J_{k}=[n]$. The vectors
$\vec{\mu}$ and $\vec{\nu}$ are given by
\begin{align*}
\mu_{i} & =a_{k},\quad\forall k\in[r],i\in I_{k},\\
\nu_{j} & =b_{k},\quad\forall k\in[r],j\in J_{k}.
\end{align*}
It is clear that $\vec{\mu}$ and $\vec{\nu}$ satisfy the property
1 in the statement of the theorem.

Let the matrix $M^{(0)}$ be given by $M^{(0)}=AB^{\top}$, where
$A,B\in\mathbb{R}^{n\times r}$ are specified below. 
\begin{itemize}
\item For each $k\in[r]$, we set 
\[
A_{ik}=\sqrt{\frac{1}{s_{k}}}
\]
for all $i\in I_{k}$. All other elements of $A$ are set to zero.
Therefore, the $k$-th column of $A$ has $s_{k}$ non-zero elements
equal to $\sqrt{\frac{1}{s_{k}}}$, and the columns of $A$ have disjoint
supports. 
\item Similarly, for each $k\in[r]$ , we set
\[
B_{jk}=\sqrt{\frac{1}{t_{k}}}
\]
for all $j\in J_{k}$. All other elements of $B$ are set to zero. 
\end{itemize}
Observe that $A$ is an orthonormal matrix, so 
\[
\mu_{i}\left(M^{(0)}\right)=\frac{n}{r}\left\Vert A_{i\cdot}\right\Vert _{2}^{2}=\frac{n}{r}\cdot\frac{1}{s_{k}}=\frac{a_{k}}{2}=\frac{\mu_{i}}{2}\le\mu_{i},\forall k\in[r],i\in I_{k},.
\]
A similar argument shows that $\nu_{j}\left(M^{(0)}\right)\le\nu_{j},\forall j\in[n]$.
Hence $M^{(0)}\in\mathcal{M}_{r}\left(\vec{\mu},\vec{\nu}\right)$.
We note that $M^{(0)}$ is a block diagonal matrix with $r$
blocks where the $k$-th block has size $s_{k}\times t_{k}$, and
$\left\Vert M^{(0}\right\Vert _{F}=\sqrt{r}$.

Consider the $i_{0}$ and $j_{0}$ in the statement of the theorem.
There must exit some $ $$k_{1},k_{2}\in[r]$ such that $i_{0}\in I_{k_{1}}$
and $j_{0}\in J_{k_{2}}$. Assume w.l.o.g. that $s_{k_{1}}\ge t_{k_{2}}$.
then
\[
p_{i_{0}j_{0}}\le\frac{\mu_{i_{0}}+\nu_{j_{0}}}{4n}\cdot r\log\left(\frac{1}{\eta}\right)=\frac{a_{k_{1}}+b_{k_{2}}}{4n}\cdot r\log\left(\frac{1}{\eta}\right)=\frac{\log\left(1/\eta\right)}{4s_{k_{1}}}+\frac{\log\left(1/\eta\right)}{4t_{k_{2}}}\le\frac{\log\left(1/\eta\right)}{2t_{k_{2}}},
\]
where $\eta=\frac{\mu_{i_{0}}r}{2n}=\frac{1}{s_{k_{1}}}$ in part
2 of the theorem and $\eta=\frac{2}{n}$ is part 3. Because $\left\{ p_{ij}\right\} $
is location-invariant w.r.t. $M^{(0)}$, we have 
\[
p_{ij}=p_{i_{0}j_{0}}\le\frac{\log\left(1/\eta\right)}{2t_{k_{2}}},\quad\forall i\in I_{k_{1}},j\in J_{k_{2}}.
\]

Let $W_{i}:=\left|\left(\{i\}\times J_{k_{2}}\right)\cap\Omega\right|$
be the number of observed elements on $\{i\}\times J_{k_{2}}$. Note
that for each $i\in I_{k_{1}},$ we have 
\[
\mathbb{P}\left[W_{i}=0\right]=\prod_{j\in J_{k_{2}}}\left(1-p_{ij}\right)\ge\left(1-\frac{\log(1/\eta)}{2t_{k_{2}}}\right)^{t_{k_{2}}}\ge\exp\left(\log\eta\right)=\eta,
\]
where we use $1-x\ge e^{-2x},\forall0\le x\le\frac{1}{2}$ in the
second inequality. Therefore, there exists $i^{*}\in I_{k_{1}}$ for which there is no observed
element in $\left\{ i^{*}\right\} \times J_{k_{2}}$ with probability
\begin{align*}
\mathbb{P}\left[W_{i^{*}}=0,\exists i^{*}\in I_{k_{1}}\right]
&=1-\mathbb{P}\left[W_{i}\ge1,\forall i\in I_{k_{1}}\right]\\
&\ge 1-\left(1-\eta\right)^{s_{k_{1}}}
\ge1-e^{-\eta s_{k_{1}}}
\ge\frac{1}{2}\eta s_{k_{1}}
\ge
\begin{cases}
\frac{1}{2}, & \eta=\frac{\mu_{i_{0}}r}{4n}\\
\frac{1}{n}, & \eta=\frac{n}{2}.
\end{cases}
\end{align*}

Choose a number $ \bar{s} \ge s_{k_1} $. Let $M^{(1)}=\bar{A}B^{\top}$, where $B$ is the same as before and
$\bar{A}$ is given by
\[
\bar{A}_{ik}=\begin{cases}
-\sqrt{\frac{1}{s_{k_{1}}}}, & i=i^{*},k=k_{2}\\
A_{ik}, & \text{otherwise}.
\end{cases}
\]
By varying $\bar{s}$  we can construct infinitely many such $M^{(1)}$. Clearly $M^{(1)}$ is rank-$r$. Observe that $M^{(1)}$ differs from $M^{(0)}$ only in
$\left\{ i^{*}\right\} \times J_{k_{2}}$, which are not observed,
so 
\[
M_{ij}^{(0)}=M_{ij}^{(1)},\quad\forall(i,j)\in\Omega.
\]
Moreover, the number of elements that $M^{(0)}$ and $M^{(1)}$ differ
in is $\left|J_{k_{2}}\right|=t_{k_{2}}=2n/\left(b_{k_{2}}r\right)=2n/\left(\nu_{j_{0}}r\right),$
and 
\[
\frac{\left\Vert M^{(1)}-M^{(0)}\right\Vert _{F}^{2}}{\left\Vert M^{(0)}\right\Vert _{F}^{2}}=\frac{\left\Vert \left(\bar{A}-A\right)B^{\top}\right\Vert _{F}^{2}}{r}=\frac{t_{k_{2}}\cdot\frac{1}{s_{k_{1}}t_{k_{2}}}}{r}=\frac{\mu_{i_{0}}}{2n}.
\]
It is also easy to check that any $\left\{ p_{ij}\right\} $
location-invariant w.r.t. $M^{(0)}$ is also location-invariant to
$M^{(1)}$. The following lemma guarantees that $M^{(1)}\in\mathcal{M}_{r}\left(\vec{\mu},\vec{\nu}\right)$,
which completes the proof of the theorem.
\begin{lemma}
The matrix $ M^{(1)} $ constructed above satisfies
\begin{align*}
\mu_{i}\left(M^{(1)}\right) & \le2\mu_{i}\left(M^{(0)}\right), \quad\forall i\in[n],\\
\nu_{j}\left(M^{(1)}\right) & =\nu_{j}\left(M^{(0)}\right), \quad\forall j\in[n].
\end{align*}
\end{lemma}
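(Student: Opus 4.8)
The plan is to read the leverage scores of $M^{(1)}=\bar AB^{\top}$ off the \emph{spaces} involved, using the remark (made right after the definition of leverage scores) that $\mu_i$ depends only on the column space and $\nu_j$ only on the row space of the matrix. Recall that $A$ has orthonormal columns $a_k=s_k^{-1/2}\mathbf{1}_{I_k}$ with pairwise disjoint supports, and similarly $B$ has orthonormal columns $b_k=t_k^{-1/2}\mathbf{1}_{J_k}$; in particular $A$ and $B$ both have full column rank $r$. Since $\bar A$ differs from $A$ in a single entry, one first checks (it falls out of the computation below) that $\bar A$ still has full column rank $r$; hence $M^{(1)}$ has rank $r$, its column space is $\mathrm{col}(\bar A)$, and its row space is $\mathrm{col}(B)$, which is exactly the row space of $M^{(0)}=AB^{\top}$.

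\textbf{Row leverage scores.} Because the row space of $M^{(1)}$ equals that of $M^{(0)}$ and $\nu_j$ is a function of the row space only, $\nu_j(M^{(1)})=\nu_j(M^{(0)})$ for all $j$, which is the second assertion.

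\textbf{Column leverage scores.} Writing $\mu_i(M^{(1)})=\frac{n}{r}\,e_i^{\top}\bar A(\bar A^{\top}\bar A)^{-1}\bar A^{\top}e_i$ (and likewise for $M^{(0)}$ with $A$, where $A^{\top}A=I_r$), it suffices to show that the un-normalized leverage score of row $i$ of $\bar A$ is at most twice $\|A^{\top}e_i\|_2^2$, for every $i$. Consider first $k_1\neq k_2$, so $\bar A=A-s_{k_1}^{-1/2}e_{i^*}e_{k_2}^{\top}$ with $e_{k_2}\in\R^r$. Using $A^{\top}A=I_r$ and $A^{\top}e_{i^*}=s_{k_1}^{-1/2}e_{k_1}$, a direct computation gives
\[
\bar A^{\top}\bar A=I_r+c\bigl(e_{k_2}e_{k_2}^{\top}-e_{k_1}e_{k_2}^{\top}-e_{k_2}e_{k_1}^{\top}\bigr),\qquad c:=\tfrac{1}{s_{k_1}}\in(0,1],
\]
i.e.\ $\bar A^{\top}\bar A$ agrees with $I_r$ except on the $2\times2$ principal submatrix indexed by $\{k_1,k_2\}$, where it equals $\left(\begin{smallmatrix}1&-c\\-c&1+c\end{smallmatrix}\right)$, whose determinant $1+c-c^2\ge1>0$ (so $\bar A$ has full column rank) and whose inverse there is $\frac{1}{1+c-c^2}\left(\begin{smallmatrix}1+c&c\\c&1\end{smallmatrix}\right)$. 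Now $\bar A^{\top}e_i=A^{\top}e_i-s_{k_1}^{-1/2}\mathbb{I}(i=i^*)e_{k_2}$, and we split into two cases. If $i\neq i^*$, then $\bar A^{\top}e_i=A^{\top}e_i$ is supported on the single coordinate $k(i)$ (the block containing $i$) with value $s_{k(i)}^{-1/2}$, so the leverage score of row $i$ of $\bar A$ is $s_{k(i)}^{-1}\bigl[(\bar A^{\top}\bar A)^{-1}\bigr]_{k(i)k(i)}$; from the explicit inverse, $\bigl[(\bar A^{\top}\bar A)^{-1}\bigr]_{kk}\le\frac{1+c}{1+c-c^2}$ for all $k$, and $\frac{1+c}{1+c-c^2}\le2\iff(2c+1)(c-1)\le0$, which holds for $c\le1$; hence the leverage score is $\le2s_{k(i)}^{-1}=2\|A^{\top}e_i\|_2^2$. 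If $i=i^*$, then $\bar A^{\top}e_{i^*}=s_{k_1}^{-1/2}(e_{k_1}-e_{k_2})$, so the leverage score is $s_{k_1}^{-1}(e_{k_1}-e_{k_2})^{\top}(\bar A^{\top}\bar A)^{-1}(e_{k_1}-e_{k_2})=s_{k_1}^{-1}\cdot\frac{2-c}{1+c-c^2}$, and $\frac{2-c}{1+c-c^2}\le2\iff c(3-2c)\ge0$, again true for $c\le1$; since $\|A^{\top}e_{i^*}\|_2^2=s_{k_1}^{-1}$, this gives the bound. Multiplying by $n/r$ yields $\mu_i(M^{(1)})\le2\mu_i(M^{(0)})$ in all cases. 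Finally, if $k_1=k_2$, then $\bar A$ is obtained from $A$ by flipping the sign of one nonzero entry, so $\bar A$ still has orthonormal columns and each of its rows has the same $\ell_2$ norm as the corresponding row of $A$; hence $\mu_i(M^{(1)})=\mu_i(M^{(0)})$ for all $i$, and the claim is trivial.

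\textbf{Main obstacle.} The only delicate point is extracting the constant exactly $2$: a crude estimate such as bounding $P_{\mathrm{col}(\bar A)}$ by the projection onto $\mathrm{col}(A)+\mathrm{span}\{e_{i^*}\}$ only yields a factor depending on $s_{k_1}$ and is too weak, so one must carry the exact $2\times2$ Gram matrix and its inverse and split according to whether the row index equals $i^*$; keeping track of the degenerate subcase $k_1=k_2$ is a minor bookkeeping matter.
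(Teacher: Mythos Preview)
Your proof is correct. The overall plan matches the paper's: both note that the row space of $M^{(1)}=\bar AB^\top$ coincides with that of $M^{(0)}$, disposing of the $\nu_j$ equalities immediately, and then compute the column leverage scores from $\mathrm{col}(\bar A)$, splitting into the cases $k_1=k_2$ and $k_1\neq k_2$.

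Where you differ is in the actual computation for $k_1\neq k_2$. The paper performs Gram--Schmidt on the columns $\bar A_{k_1},\bar A_{k_2}$ to produce an explicit orthonormal basis $\tilde A$ for $\mathrm{col}(\bar A)$ and then reads off $\|\tilde A^\top e_i\|^2$ coordinate by coordinate for $i\in I_{k_1}\cup I_{k_2}$. You instead use the normal-equations projection $\bar A(\bar A^\top\bar A)^{-1}\bar A^\top$, observe that $\bar A^\top\bar A$ differs from $I_r$ only on the $2\times2$ principal block $\left(\begin{smallmatrix}1&-c\\-c&1+c\end{smallmatrix}\right)$ with $c=1/s_{k_1}\in(0,1]$, invert that block exactly, and bound the two resulting quadratic forms. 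Your route is a bit more algebraic and avoids carrying the explicit Gram--Schmidt vectors; the paper's route is more geometric and gives the new orthonormal basis explicitly. Both extract the constant $2$ from the same elementary inequalities on $c\in(0,1]$ (your checks $(2c+1)(c-1)\le0$ and $c(3-2c)\ge0$ are correct). One small remark: when $c=1$ (i.e., $s_{k_1}=1$) the subcase ``$i\neq i^*$ with $k(i)=k_1$'' is vacuous, so nothing is lost there.
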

\begin{proof}
Note that by the definition, the leverage scores of a rank-$ r $ matrix $M$
with SVD $M=U\Sigma V^{\top}$ can be expressed as
\[
\mu_{i}\left(M\right)=\frac{n}{r}\left\Vert U^{\top}e_{i}\right\Vert _{2}^{2}=\frac{n}{r}\left\Vert UU^{\top}e_{i}\right\Vert _{2}^{2}=\frac{n}{r}\left\Vert \mathcal{P}_{\text{col}(M)}(e_{i})\right\Vert _{2}^{2},
\]
where $\text{col}(M)$ denotes the column space of $M$ and $\mathcal{P}_{\text{col}(M)}(\cdot)$
is the Euclidean projection onto the column space of $M$. A similar relation
holds for the row leverage scores and the row space of $M$. In
other words, the column/row leverage scores of a matrix are determined
by its column/row space. Because $M^{(0)}$ and $M^{(1)}$ have the
same row space (which is the span of the columns of $B$), the second
set of equalities in the lemma hold.

It remains to prove the first set of inequalities for the  column leverage scores.
If $k_{1}=k_{2}$, then the columns of $\bar{A}$ have unit norms
and are orthogonal to each other. Using the above expression for the
leverage scores, we have
\[
\mu_{i}\left(M^{(1)}\right)=\frac{n}{r}\left\Vert \bar{A}\bar{A}^{\top}e_{i}\right\Vert _{2}^{2}=\frac{n}{r}\left\Vert \bar{A}^{\top}e_{i}\right\Vert _{2}^{2}=\frac{n}{r}\left\Vert A^{\top}e_{i}\right\Vert _{2}^{2}=\mu_{i}\left(M^{(0)}\right).
\]
If $k_{1}\neq k_{2}$, we may assume WLOG that $k_{1}=1$, $k_{2}=2$
and $i^{*}=1$. In the sequel we use $\bar{A}_{i}$ to denote the
$i$-th columns of $\bar{A}$. We now construct two vectors $\tilde{\alpha}$
and $\tilde{\beta}$ which have the same span with $\bar{A}_{1}$
and $\bar{A}_{2}$. Define two vectors $\alpha,\beta\in\mathbb{R}^{n}$,
such that the first $s_{1}$ elements of $\alpha$ and the $\left\{ s_{1}+1,\ldots,s_{1}+s_{2}\right\} $-th
elements of $\beta$ are one, the first element of $\beta$ is $\sqrt{\frac{s_{2}}{\bar{s}}}$,
and all other elements of $\alpha$ and $\beta$ are zero. Clearly
$\alpha=\sqrt{s_{1}}\bar{A}_{1}$ and $\beta=\sqrt{s_{2}}\bar{A}_{2}$,
so $\text{span}(\alpha,\beta)=\text{span}(\bar{A}_{1},\bar{A}_{2})$.
We next orthogonalize $\alpha$ and $\beta$ by letting $\bar{\alpha}=\alpha$
and 
\[
\bar{\beta}=\beta-\frac{\left\langle \alpha,\beta\right\rangle }{\left\Vert \alpha\right\Vert ^{2}}\alpha
=\beta-\frac{\sqrt{s_{2}}}{s_{1}\sqrt{s_{1}}}\alpha
=\begin{cases}
\frac{(s_{1}-1)\sqrt{s_{2}}}{s_{1}\sqrt{\bar{s}}}, & i=1\\
-\frac{\sqrt{s_{2}}}{s_{1}\sqrt{\bar{s}}}, & i=2,\ldots,s_{1}\\
1, & i=s_{1}+1,\ldots,s_{1}+s_{2}.
\end{cases}
\]
Note that $\text{span}(\bar{\alpha},\bar{\beta})=\text{span}(\alpha,\beta)$
and $\left\langle \bar{\alpha},\bar{\beta}\right\rangle =0$. Simple
calculation shows that $\left\Vert \bar{\alpha}\right\Vert _{2}^{2}=\left\Vert \alpha\right\Vert _{2}^{2}=s_{1}$
and $\left\Vert \bar{\beta}\right\Vert _{2}^{2}=\left(\frac{s_{1}-1}{s_{1}\bar{s}}+1\right)s_{2}.$
Finally, we normalize $\bar{\alpha}$ and $\bar{\beta}$ by letting
$\tilde{\alpha}=\bar{\alpha}/\left\Vert \bar{\alpha}\right\Vert $
and $\tilde{\beta}=\bar{\beta}/\left\Vert \bar{\beta}\right\Vert $.
It is clear that $\text{span}(\tilde{\alpha},\tilde{\beta})=\text{span}(\bar{A}_{1},\bar{A}_{2})$,
and $\left\langle \tilde{\alpha},\bar{A}_{k}\right\rangle =\left\langle \tilde{\beta},\bar{A}_{k}\right\rangle =0,\forall k=3,\ldots,r$. 

Now consider the matrix $\tilde{A}\in\mathbb{R}^{n\times r}$ obtained
from $\bar{A}$ by replacing the first two columns of $\bar{A}$ with
$\tilde{\alpha}$ and $\tilde{\beta}$, respectively. Because $\text{col}(\tilde{A})=\text{col}(\bar{A})=\text{col}(M^{(1)})$,
we have
\[
\mu_{i}\left(M^{(1)}\right)=\frac{n}{r}\left\Vert \mathcal{P}_{\text{col}(\tilde{A})}\left(e_{i}\right)\right\Vert ^{2}.
\]
But the columns of $\tilde{A}$ have unit norms and are orthogonal
to each other. It follows that 
\[
\mu_{i}\left(M^{(1)}\right)=\frac{n}{r}\left\Vert \tilde{A}\tilde{A}^{\top}e_{i}\right\Vert ^{2}=\frac{n}{r}\left\Vert \tilde{A}^{\top}e_{i}\right\Vert ^{2}.
\]
For $s_{1}+s_{2}<i\le n$, since $ \bar{s}\ge s_1 $ we have $\left\Vert \tilde{A}^{\top}e_{i}\right\Vert ^{2}=\left\Vert \bar{A}^{\top}e_{i}\right\Vert ^{2}=\left\Vert A^{\top}e_{i}\right\Vert ^{2}$
so $\mu_{i}\left(M^{(1)}\right)=\mu_{i}\left(M^{(0)}\right)$. For
$i\in[s_{1}+s_{2}]$, we have
\[
\left\Vert \tilde{A}^{\top}e_{i}\right\Vert ^{2}=\tilde{\alpha}_{i}^{2}+\tilde{\beta}_{i}^{2}
=\begin{cases}
\frac{1}{s_{1}}+\frac{(s_{1}-1)^{2}}{s_{1}(s_{1}-1)+s_{1}^{2}\bar{s}}\le\frac{2}{s_{1}}=2\left\Vert A^{\top}e_{i}\right\Vert ^{2}, & i=1\\
\frac{1}{s_{1}}+\frac{1}{s_{1}(s_{1}-1)+s_{1}^{2}\bar{s}}\le\frac{2}{s_{1}}=2\left\Vert A^{\top}e_{i}\right\Vert ^{2}, & i=2,\ldots,s_{1}\\
\frac{s_{1}\bar{s}}{(s_{1}-1+s_{1}\bar{s})s_{2}}\le\frac{1}{s_{2}}=\left\Vert A^{\top}e_{i}\right\Vert ^{2}, & i=s_{1}+1,\ldots,s_{1}+s_{2}.
\end{cases}
\]
This means 
\[
\mu_{i}\left(M^{(1)}\right)\le\frac{2n}{r}\left\Vert A^{\top}e_{i}\right\Vert ^{2}=2\mu_{i}(M^{(0)}),\forall i\in[s_{1}+s_{2}],
\]
which completes the proof of the lemma.
\end{proof}

\section{Proof of Theorem~\ref{cor:general_weighted}}\label{sec:proof_general_weighted}

Suppose the rank-$r$ SVD of $\bar{M}$ is $\bar{U}\bar{\Sigma}\bar{V}^{\top}$;
so $\bar{U}\bar{\Sigma}\bar{V}^{\top}=RMC=RU\Sigma V^{\top}C$. By
definition, we have
\[
\frac{\bar{\mu}_{i}r}{n}=\left\Vert P_{\tilde{U}}(e_{i})\right\Vert _{2}^{2},
\]
where $P_{\tilde{U}}(\cdot)$ denotes the projection onto the column
space of $\tilde{U}$, which is the same as the column space of $RU$.
This projection has the explicit form
\[
P_{\tilde{U}}(e_{i})=RU\left(U^{\top}R^{2}U\right)^{-1}U^{\top}Re_{i}.
\]
It follows that
\begin{align}
\frac{\bar{\mu}_{i}r}{n} & =\left\Vert RU\left(U^{\top}R^{2}U\right)^{-1}U^{\top}Re_{i}\right\Vert _{2}^{2}\nonumber \\
 & =R_{i}^{2}e_{i}^{\top}U\left(U^{\top}R^{2}U\right)^{-1}U^{\top}e_{i}\nonumber \\
 & \le R_{i}^{2}\left[\sigma_{r}\left(RU\right)\right]^{-2}\left\Vert U^{\top}e_{i}\right\Vert _{2}^{2}\nonumber \\
 & \le R_{i}^{2}\frac{\mu_{0}r}{n}\left[\sigma_{r}\left(RU\right)\right]^{-2},\label{eq:w1}
\end{align}
where $\sigma_{r}(\cdot)$ denotes the $r$-th singular value and
the last inequality follows from the standard incoherence assumption $ \max_{i,j}\{\mu_i,\nu_j\}\le \mu_0 $.
We now bound $\sigma_{r}\left(RU\right)$. Since $RU$ has rank $r$,
we have
\[
\sigma_{r}^{2}\left(RU\right)=\min_{\left\Vert x\right\Vert =1}\left\Vert RUx\right\Vert _{2}^{2}=\min_{\left\Vert x\right\Vert =1}\sum_{i=1}^{n}R_{i}^{2}\left|e_{i}^{\top}Ux\right|^{2}.
\]
If we let $z_{i}:=\left|e_{i}^{\top}Ux\right|^{2}$ for each $i\in[n]$,
then $z_{i}$ satisfies
\[
\sum_{i=1}^{n}z_{i}=\left\Vert Ux\right\Vert _{2}^{2}=\left\Vert x\right\Vert _{2}^{2}=1
\]
and by the standard incoherence assumption, 
\[
z_{i}\le\left\Vert U^{\top}e_{i}\right\Vert _{2}^{2}\left\Vert x\right\Vert _{2}^{2}\le\frac{\mu_{0}r}{n}.
\]
Therefore, the value of the minimization above is lower-bounded by
\begin{equation}
\begin{aligned}\min_{z\in\mathbb{R}^{n}} & \;\sum_{i=1}^{n}R_{i}^{2}z_{i}\\
\textrm{s.t.} & \;\sum_{i=1}^{n}z_{i}=1,\quad0\le z_{i}\le\frac{\mu_{0}r}{n},\; i=1,\ldots,n.
\end{aligned}
\label{eq:LP}
\end{equation}
From the theory of linear programming, we know the minimum is achieved
at an extreme point $z^{*}$ of the feasible set. The extreme point
$z^{*}$ satisfies $z_{i}^{*}\ge0,\forall i$ and $n$ linear equalities
\begin{align*}
\sum_{i=1}^{n}z_{i}^{*} & =1,\\
z_{i}^{*} & =0,\quad\;\textrm{for }i\in I_{1},\\
z_{i}^{*} & =\frac{\mu_{0}r}{n},\;\textrm{for }i\in I_{2}
\end{align*}
for some index sets $I_{1}$ and $I_{2}$ such that $I_{1}\cap I_{2}=\phi$,$\left|I_{1}\right|+\left|I_{2}\right|=n-1$.
It is easy to see that we must have $\left|I_{2}\right|=\left\lfloor \frac{n}{\mu_{0}r}\right\rfloor $.
Since $R_{1}\le R_{2}\le\ldots\le R_{n}$, the minimizer $z^{*}$
has the form 
\begin{align*}
z_{i}^{\ast}=\begin{cases}
\frac{\mu_{0}r}{n},& i=1,\ldots,\left\lfloor \frac{n}{\mu_{0}r}\right\rfloor ,\\
1-\left\lfloor \frac{n}{\mu_{0}r}\right\rfloor \cdot\frac{\mu_{0}r}{n},& i=\left\lfloor \frac{n}{\mu_{0}r}\right\rfloor +1,\\
0,& i=\left\lfloor \frac{n}{\mu_{0}r}\right\rfloor +2,\ldots,n,
\end{cases}
\end{align*}
and the value of the minimization~(\ref{eq:LP}) is at least 
\[
\sum_{i=1}^{\left\lfloor n/(\mu_{0}r)\right\rfloor }R_{i}^{2}\frac{\mu_{0}r}{n}.
\]
This proves that $\sigma_{r}^{2}\left(RU\right)\ge\frac{\mu_{0}r}{n}\sum_{i=1}^{\left\lfloor n/(\mu_{0}r)\right\rfloor }R_{i}^{2}.$
Combining with~(\ref{eq:w1}), we obtain that
\[
\frac{\bar{\mu}_{i}r}{n}\le\frac{R_{i}^{2}}{\sum_{i'=1}^{\left\lfloor n/(\mu_{0}r)\right\rfloor }R_{i}^{2}},\quad\frac{\bar{\nu}_{j}r}{n}\le\frac{C_{j}^{2}}{\sum_{j'=1}^{\left\lfloor n/(\mu_{0}r)\right\rfloor }C_{j'}^{2}};
\]
the proof for $\bar{\nu}_{j}$ is similar. Applying Theorem~\ref{thm:random} to the equivalent problem~\eqref{eq:weighted2} with the above bounds
on $\bar{\mu}_{i}$ and $\bar{\nu}_{j}$ proves the theorem.

\section{Matrix Bernstein Inequality\label{sec:bernstein}}
\begin{theorem}
[\citealt{tropp2012user}]\label{lem:matrix_bernstein}Let $X_{1},\ldots,X_{N}\in\mathbb{R}^{n_{1}\times n_{2}}$
be independent zero mean random matrices. Suppose 
\begin{equation}
\max\left\{ \left\Vert \sum_{k=1}^{N}X_{k}X_{k}^{\top}\right\Vert ,\left\Vert \sum_{k=1}^{N}X_{k}^{\top}X_{k}\right\Vert \right\} \le\sigma^{2}
\end{equation}
 and $\left\Vert X_{k}\right\Vert \le B$ almost surely for all $k$.
Then for any $c>0$, we have 
\begin{equation}
\left\Vert \sum_{k=1}^{N}X_{k}\right\Vert \le2\sqrt{c\sigma^{2}\log(n_{1}+n_{2})}+cB\log(n_{1}+n_{2}).
\end{equation}
with probability at least $1-(n_{1}+n_{2})^{-(c-1)}.$
\end{theorem}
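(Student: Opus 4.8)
The plan is to establish this via the matrix Laplace transform method, so that the statement reduces to a repackaging of Tropp's matrix Bernstein inequality. First I would pass to the Hermitian case: replace each $X_k$ by its self-adjoint dilation $Y_k := \begin{bmatrix} 0 & X_k \\ X_k^{\top} & 0 \end{bmatrix}\in\R^{(n_1+n_2)\times(n_1+n_2)}$, so that $\|Y_k\| = \|X_k\|\le B$, $Y_k^2 = \begin{bmatrix} X_k X_k^{\top} & 0 \\ 0 & X_k^{\top} X_k\end{bmatrix}$, $\bigl\|\sum_k Y_k\bigr\| = \bigl\|\sum_k X_k\bigr\|$, and $\bigl\|\sum_k\E Y_k^2\bigr\|\le \E\bigl\|\sum_k Y_k^2\bigr\|\le\sigma^2$ by Jensen's inequality applied to the (convex) operator norm. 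Thus it suffices to bound $\lambda_{\max}\bigl(\sum_k Y_k\bigr)$ and $\lambda_{\max}\bigl(-\sum_k Y_k\bigr)$ for independent, zero-mean Hermitian $Y_k$ obeying $\|Y_k\|\le B$ and $\bigl\|\sum_k\E Y_k^2\bigr\|\le\sigma^2$.

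Next I would run the matrix Chernoff bound: for every $\theta>0$,
\[
\prob{\lambda_{\max}\Bigl(\sum_k Y_k\Bigr)\ge t}\;\le\; e^{-\theta t}\,\E\,\trace\exp\Bigl(\theta\sum_k Y_k\Bigr).
\]
The crucial analytic input is subadditivity of matrix cumulant generating functions, a consequence of Lieb's concavity theorem: $\E\,\trace\exp\bigl(\theta\sum_k Y_k\bigr)\le\trace\exp\bigl(\sum_k\log\E e^{\theta Y_k}\bigr)$. For each $k$, the scalar bound $e^{\theta y}\le 1+\theta y+g(\theta)y^2$ on $[-B,B]$, where $g(\theta):=(e^{\theta B}-\theta B-1)/B^2$, lifts (transfer rule) to a matrix inequality; combined with $\E Y_k=0$ this gives $\E e^{\theta Y_k}\preceq I+g(\theta)\E Y_k^2\preceq\exp\bigl(g(\theta)\E Y_k^2\bigr)$, whence $\log\E e^{\theta Y_k}\preceq g(\theta)\E Y_k^2$ by operator monotonicity of the logarithm. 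Summing, using $\sum_k\E Y_k^2\preceq\sigma^2 I$, and invoking monotonicity of $\trace\exp$ under the Loewner order yields
\[
\prob{\lambda_{\max}\Bigl(\sum_k Y_k\Bigr)\ge t}\;\le\;(n_1+n_2)\exp\bigl(g(\theta)\sigma^2-\theta t\bigr).
\]
Bounding $g(\theta)\le\frac{\theta^2/2}{1-\theta B/3}$ for $0<\theta<3/B$ and taking $\theta=t/(\sigma^2+Bt/3)$ gives the classical tail $\prob{\lambda_{\max}(\sum_k Y_k)\ge t}\le(n_1+n_2)\exp\bigl(-\frac{t^2/2}{\sigma^2+Bt/3}\bigr)$, and likewise for $-\sum_k Y_k$.

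Finally I would repackage this into the stated form. Write $N:=n_1+n_2$ and set $t=2\sqrt{c\sigma^2\log N}+cB\log N$. A short computation (expanding $t^2$ and comparing term by term) shows $\frac{t^2/2}{\sigma^2+Bt/3}\ge c\log N$, so $\exp\bigl(-\frac{t^2/2}{\sigma^2+Bt/3}\bigr)\le N^{-c}$; combining the two one-sided bounds via a union bound and using $\bigl\|\sum_k X_k\bigr\|=\lambda_{\max}(\sum_k Y_k)\vee\lambda_{\max}(-\sum_k Y_k)$, the failure probability is at most $N^{-(c-1)}$, as claimed. The only genuinely nontrivial ingredient here is the Lieb concavity step establishing CGF subadditivity for non-commuting matrices; everything else is scalar bookkeeping. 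Since this is precisely the matrix Bernstein inequality of \citet{tropp2012user}, one may alternatively cite it directly and merely specialize the constants.
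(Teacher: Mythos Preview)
The paper does not prove this theorem at all: it is quoted verbatim from \citet{tropp2012user} and used as a black box throughout Section~\ref{sec:proof_tech}. Your proposal therefore goes well beyond what the paper does, supplying a complete argument where the paper simply cites one.

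Your sketch is correct and is precisely Tropp's proof: Hermitian dilation to reduce to the symmetric case, the Laplace transform bound, Lieb's concavity to obtain subadditivity of matrix CGFs, the scalar Bernstein moment bound transferred via the spectral mapping, and the standard optimization in~$\theta$. The final repackaging with $t=2\sqrt{c\sigma^2\log N}+cB\log N$ checks out (your inequality $t^2/2\ge c\log N\,(\sigma^2+Bt/3)$ follows by expanding and comparing terms, as you indicate). One small redundancy: for the dilation $Y=\begin{bmatrix}0 & X\\ X^\top & 0\end{bmatrix}$ the spectrum is $\{\pm\sigma_i(X)\}$, so already $\lambda_{\max}\bigl(\sum_k Y_k\bigr)=\bigl\|\sum_k X_k\bigr\|$; the event $\{\lambda_{\max}(-\sum_k Y_k)\ge t\}$ coincides with $\{\lambda_{\max}(\sum_k Y_k)\ge t\}$, and no union bound is needed to land exactly on the stated probability $(n_1+n_2)^{-(c-1)}$. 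Also note that the theorem as stated in the paper omits the expectations in the variance hypothesis (a typo relative to Tropp's original); your Jensen step correctly bridges the almost-sure bound to the needed $\bigl\|\sum_k\E Y_k^2\bigr\|\le\sigma^2$.
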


\vskip 0.2in

\end{document}